\def\isarxivversion{1} %%% arxiv version
\title{Training (Overparametrized) 
	Neural Networks in Near-Linear Time } %TODO Please add
\titlerunning{Training (Overparametrized) Neural Networks } %TODO optional, please use if title is longer than one line
\author{Jan van den Brand}{KTH Royal Institute of Technology, Sweden}{janvdb@kth.se}{}{This project has received funding from the European Research Council (ERC) under the European Unions Horizon 2020 research and innovation program under grant agreement No 715672. Partially supported by the Google PhD Fellowship Program.}%TODO mandatory, please use full name; only 1 author per \author macro; first two parameters are mandatory, other parameters can be empty. Please provide at least the name of the affiliation and the country. The full address is optional
\author{Binghui Peng}{Columbia University, United States}{bp2601@columbia.edu}{}{Research supported by NSF IIS-1838154, NSF CCF-1703925 and NSF CCF-1763970.}
\author{Zhao Song}{Princeton University and Institute for Advanced Study, United States}{zhaos@ias.edu}{}{Research supported by Special Year on Optimization, Statistics, and Theoretical Machine Learning (being led by Sanjeev Arora) at Institute for Advanced Study.}
\author{Omri Weinstein}{Columbia University, United States}{omri@cs.columbia.edu}{}{Research supported by NSF CAREER award CCF-1844887.}
\authorrunning{J.\,v.d.Brand, B.\, Peng, Z.\, Song and O.\, Weinstein} %TODO mandatory. First: Use abbreviated first/middle names. Second (only in severe cases): Use first author plus 'et al.'
\keywords{Deep learning theory; Nonconvex optimization;} %TODO mandatory; please add comma-separated list of keywords
\newtheorem{assumption}[theorem]{Assumption}
\newtheorem{fact}[theorem]{Fact}
\newtheorem{theorem}{Theorem}[section]
\newtheorem{lemma}[theorem]{Lemma}
\newtheorem{definition}[theorem]{Definition}
\newtheorem{assumption}[theorem]{Assumption}
\newtheorem{fact}[theorem]{Fact}
\newtheorem{remark}[theorem]{Remark}
\renewcommand{\lg}{\log}
\newcommand{\wh}{\widehat}
\newcommand{\wt}{\widetilde}
\newcommand{\eps}{\epsilon}
\newcommand{\cT}{\mathcal{T}}
\newcommand{\R}{\mathbb{R}}
\renewcommand{\varepsilon}{\epsilon}
\renewcommand{\tilde}{\wt}
\renewcommand{\hat}{\wh}
\renewcommand{\eps}{\epsilon}
\renewcommand{\d}{\mathsf{d}}
\newcommand{\poly}{\mathrm{poly}}
\DeclareMathOperator*{\E}{{\mathbb{E}}}
 \newcommand{\Zhao}[1]{{\color{red}[Zhao: #1]}}
 \newcommand{\Binghui}[1]{{\color{blue}[Binghui: #1]}}
 \newcommand{\Omri}[1]{{\color{purple}[Omri: #1]}}
 \newcommand{\Jan}[1]{{\color{orange}[Jan: #1]}}
 \newcommand{\Zhao}[1]{}
 \newcommand{\Binghui}[1]{}
 \newcommand{\Omri}[1]{}
 \newcommand{\Jan}[1]{}
\begin{document}

\ifdefined\isitcs
\else
\title{
Training (Overparametrized) 
Neural Networks \\in Near-Linear Time \thanks{A preliminary version of this paper appeared in the Proceedings of the 12th Innovations in Theoretical Computer Science (ITCS 2021).}
%\textcolor{blue}{A Near Linear Time Second Order Method for Trainning Deep Neural Networks}
}
%\title{
% Training Neural Networks in Near-Linear Time via Sketching
%}
\author{
	Jan van den Brand\thanks{\texttt{janvdb@kth.se}. KTH Royal Institute of Technology. This project has received funding from the European Research Council (ERC) under the European Unions Horizon 2020 research and innovation programme under grant agreement No 715672.}
	\and
	Binghui Peng\thanks{\texttt{bp2601@columbia.edu}. Columbia University. Research supported by NSF IIS-1838154, NSF CCF-1703925 and NSF CCF-1763970}
	\and 
	Zhao Song\thanks{\texttt{zhaos@ias.edu}. Princeton University and Institute for Advanced Study. Part of the work done while visiting Columbia University and hosted by Omri Weinstein. Research supported by Special Year on Optimization, Statistics, and Theoretical Machine Learning (being led by Sanjeev Arora) at Institute for Advanced Study.}
	\and
	Omri Weinstein\thanks{\texttt{omri@cs.columbia.edu}. Columbia University. Research supported by NSF CAREER award CCF-1844887.}
}
\date{}
\fi

\ifdefined\isarxivversion

\begin{titlepage}
\maketitle
\begin{abstract}

The slow convergence rate and pathological curvature issues of first-order gradient methods for training deep neural networks, 
initiated an ongoing effort for developing faster \emph{second-order} optimization algorithms beyond SGD, without compromising 
the generalization error. 
Despite their remarkable convergence rate (\emph{independent} of the training batch size $n$), second-order algorithms 
incur a daunting slowdown in the \emph{cost per iteration} (inverting the Hessian matrix of the loss function), which renders them impractical. Very recently, this computational overhead was mitigated by the works of \cite{zmg19,cgh+19}, yielding an 
$O(mn^2)$-time second-order algorithm for training two-layer overparametrized 
neural networks of polynomial width $m$. 

We show how to speed up the algorithm of \cite{cgh+19}, achieving an 
$\tilde{O}(mn)$-time backpropagation algorithm for training (mildly overparametrized) 
ReLU networks, which is near-linear in the dimension ($mn$) of the full gradient (Jacobian) matrix.  
The centerpiece of our algorithm is to reformulate the Gauss-Newton iteration as an 
$\ell_2$-regression problem, and then use a Fast-JL type dimension reduction to \emph{precondition} 
the underlying Gram matrix in time independent of $M$, allowing to find a sufficiently good approximate solution 
via \emph{first-order} conjugate gradient. Our result provides a proof-of-concept that advanced machinery from 
randomized linear algebra---which led to recent breakthroughs in \emph{convex optimization} 
(ERM, LPs, Regression)---can be carried over to the realm of deep learning as well.

%Our work therefore makes a step to mimic convex optimization in non-convex 
%by getting the best of both worlds : a low number iterations and cost per iteration.  

%--------------

%In the note, we write an algorithm takes $O(mnd)$ time per iteration, where $m$ is the number of neurons, $n$ is the number of input data points, and $d$ is the data dimension. For example, say $T = \poly(n)$ which is the number of iterations. For the situation when $m = n = d = O(n)$ case, we hope to improve $O(n^3)$ to $o(n^2)$.

%I chatted with many people, in the current deep learning architecture, each iteration of the training algorithm computes the gradient from scratch. But using ideas in recent LP works \cite{cls19,b20}, we might be able to maintain the gradients or the change of gradients in a cheap way.

%Note that \cite{als19a} works on multiple layers neural network (let $L$ denote the number of layers of neural network), but \cite{sy19} works on single layer neural network ($L=1$). We can first consider $L=1$, but the final goal is resolve arbitrary $L$ setting.

%Currently, we had nothing yet... we had some calculations about gradient flows... which are not readable... and disaster. Will try to polish.

%An important paper to check \cite{cgh+19} \url{https://arxiv.org/pdf/1905.11675.pdf}.
\end{abstract}
\thispagestyle{empty}
\end{titlepage}

\else

\maketitle
\begin{abstract}

\end{abstract}

\fi

%\vspace{-7mm}

\section{Introduction}
%\vspace{-2mm}
Understanding the dynamics of gradient-based optimization %for training 
of deep neural networks has been a central 
focal point of theoretical machine learning in recent years \cite{ly17,zsjbd17,zsd17,ll18,dzps19,als19a,als19b,all19,bjw19,om19,adh+19,sy19,d20,jt20,belm20}.  %zpdlsa20,
This line of work led to a remarkable rigorous understanding 
of the generalization, robustness and convergence rate of \emph{first-order} (SGD-based) algorithms, which 
are the standard choice for training DNNs. By contrast, the \emph{computational complexity} of implementing 
gradient-based training algorithms (e.g., backpropagation)  in such non-convex landscape is less understood, and gained traction 
only recently due to the overwhelming size of training data and complexity of network design \cite{mg15,dhs11,ljh+19,cgh+19,zmg19}.  

The widespread use  
first-order methods such as (stochastic) gradient descent in training DNNs is explained, 
to a large extent, by its computational efficiency -- recalculating the gradient of the loss function 
at  each iteration is simple and cheap (linear in the dimension of the full gradient), let alone with the 
advent of minibatch random sampling \cite{hrs15,cgh+19}. 
 Nevertheless, first-order 
methods have a slow rate of convergence in non-convex settings (typically $\Omega(\poly(n)\log(1/\eps))$ for 
overparametrized networks, see e.g.,~\cite{zmg19}) 
%\Omri{I all overparametrized DNN literature, num item $\poly(n)$ [SY19...]}
for reducing the training error below $\eps$, and it is increasingly clear that SGD-based algorithms are becoming   
a real bottleneck for many practical purposes.  
This drawback initiated a substantial effort for developing fast training methods beyond SGD, aiming to improve its 
convergence rate without compromising the generalization error \cite{bl88, m10, mg15, dhs11, kb14, pw17, 
cgh+19, zmg19}. 
 
Second-order gradient algorithms (which employ information about the Hessian of the loss function),  
pose an intriguing computational tradeoff in this context: On one hand, they are known to converge 
extremely fast, at a rate \emph{independent} of the input size (i.e., only $O(\log 1/\eps)$ iterations \cite{zmg19}),  
and offer a qualitative advantage in overcoming pathological curvature issues that arise in first-order 
methods, by exploiting the local geometry of the loss function. 
This feature implies another practical advantage of second order methods, namely, that they do not 
require tuning the learning rate \cite{cgh+19, zmg19}.  
On the other hand, second-order methods have a prohibitive \emph{cost per iteration}, as they involve  
\emph{inverting} a dynamically-changing dense Hessian matrix.
%(\Omri{the product Hessian and the Jacobian  
%$(J^{\top}J)^{-1}J^{\top}$}?)\Binghui{Sorry, I don't understand, I guess Hessian is ok.}. 
This drawback explains 
the scarcity of second order methods in \emph{large scale non-convex} optimization, 
in contrast to its popularity in the convex setting. 

The recent works of \cite{cgh+19, zmg19} addressed the computational bottleneck of 
second-order algorithms in optimizing deep neural nets, and presented a training algorithm 
for overparametrized neural networks with smooth (resp. ReLU) activations, whose running time 
is $O(mn^2)$, where $m$ is the width of the neural network, and $n$ is the size of the training data in $\R^d$. 
The two algorithms, which achieve essentially the same running time, are based on the classic 
Gauss-Newton algorithm (resp. `Natural gradient' algorithm)
combined with the recent introduction of \emph{Neural Tangent Kernels} (NTK) 
\cite{jgh18}. 
%\cite{jgh18,adh+19}.
%Despite the intrinsic non-convex and non-smooth optimization landscape for deep learning task, 
The NTK formulation utilizes a local-linearization of the loss function for overparametrized neural networks,   
which reduces the optimization problem of DNNs to that of a \emph{kernel regression} problem:   
The main insight is that when the network is \emph{overparametrized}, i.e., sufficiently wide $m \gtrsim n^4$ (\cite{sy19}),
%(\cite{als19a, adh+19}), 
the neural network becomes locally convex and smooth, hence the problem is 
equivalent to a kernel regression 
problem with respect to the NTK function \cite{jgh18}, and therefore solving the latter 
via (S)GD is guaranteed to converge to a global minimum.  
The training algorithm of \cite{cgh+19} draws upon this equivalence, by designing a second-order variation of the 
Gauss-Newton algorithm (termed `Gram-Gauss-Newton'),  yielding the aforementioned runtime for \emph{smooth activation functions}. 
%They also conduct  preliminary experiments indicating the practical efficiency of their training method 
%in the setting of $\mathsf{ResNet32}$ networks \cite{hzrs16}. 
%(Cf. figure 1 in ~\cite{cgh+19}\Omri{add reference to CGH figure}). 

\ifdefined\isitcs
\vspace{+2mm}
\noindent {\bf Single vs. Multilayer Network Training \ \ } 
\else
\paragraph{Single vs. Multilayer Network Training}
\fi
Following \cite{cgh+19, zmg19}, we focus on two-layer (i.e., single hidden-layer) neural networks. 
While our algorithm extends to the multilayer case (with a slight comprise on the width dependence), 
we argue that, as far as training time, the two-layer case is not only the common case, 
but in fact the \emph{only} interesting case for constant training error:  
Indeed, in the multilayer case ($L\geq 2$), we claim that the mere cost of \emph{feed-forward} computation 
of the network's output is already $\Omega_\eps(m^2nL)$. Indeed, the 
total number of parameters of $L$-layer networks is $M = (L - 1)m^2 + md$, and as such,  
feed-forward computation requires, at the very least, computing a single product of $m\times m$ (dense) matrices $W$ with a $m \times 1$ vector for each training data, which already costs $m^2n$ time: 

\[
\hat{y}_{i} = a^{\top}\sigma_{L}\left( \underbrace{W_L}_{m \times m} \sigma_{L - 1}\left(\underbrace{W_{L - 1}}_{m \times m} \ldots \sigma_{1}(\underbrace{W_1}_{m \times d} x_i)  \right)\right)
\]
%\[ \left(\ldots \left(\left( x^\top \underbrace{W_1}_{\bf d\rm \times m}\right) \underbrace{W_2}_{m\times m}\right)\ldots \underbrace{W_L}_{m\times m}\right)   \] 
 
Therefore, sublinear-time techniques (as we present) appear futile in the case of multi-layer overparametrized networks,  
where it is possible to achieve linear time (in $M$) using essentially direct (lossless) computation (see next subsection). 
It may still be possible to use sublinear algorithms to improve the running time to $O(m^2nL + \poly(n))$, though in for overparametrized 
DNNs this seems a minor saving.

%\vspace{-2mm}
\subsection{Our Result}
%\vspace{-2mm}

Our main result is a quadratic speedup to the algorithm of \cite{cgh+19}, yielding  
an \emph{essentially optimal} training algorithm for overparametrized  
two-layer neural networks. Moreover, in contrast to \cite{cgh+19},
our algorithm applies to the more complex and realistic 
case of \emph{ReLU} activation functions.  
%For consistency with recent DNN optimization literature, we first present our result for 
%two-layer networks (For a more comprehensive comparison, see Table~\ref{tab:my_label} below 
%and references therein). 
Our main result is shown below (For a more comprehensive comparison, see Table~\ref{tab:my_label} below 
and references therein). 
%Below, $\lambda >0$ denotes the minimum eigenvalue of the Gram matrix 
%(see Eq.~\eqref{eq:kernel} and Assumption~\ref{asp:lambda} for more details).  
%\footnote{[ALS ] justify the use of ReLU activation function, by showing they prevent exponential gradient 
%explosion or vanishing, and build a perturbation theory to analyze first-order approximation of multi-layer networks.}.  

%\Binghui{My question is, should we present the multi-layer results or two layer results.}
%\Omri{I suggest to state the multi-layer result}
%\begin{theorem}[Main result, informal] \label{thm:second-order-informal}
%	Let $\N$ be a ReLU network with $L$ layers and width 
%	$m \geq \Omega_\lambda (\max\{ n^4, n^2d\log(n/\delta) \} )$. 
%     Then there is a randomized backpropagation algorithm that, %converges to a global minima (wrt what loss??)    
%       with probability $1 - \delta$ over the random initialization of $\N$, converges to a global minimum (w.r.t $\ell_2$ loss).  
       %of the loss function $f$ \[ \| f_{t+1} - y \|_2 \leq \frac{1}{2} \| f_t - y \|_2. \] 
%	The computation cost in each iteration is $\tilde{O}(mnd + n^3)$, and the total running time for reducing the 
%	training loss to $\eps$ is $\tilde{O}((mnd + n^3) \log(1/\eps))$.
%\end{theorem}
\begin{theorem}
	\label{thm:second-order-intro}
	Suppose the width of a two layer ReLU neural network satisfies 
	\begin{align*} %%% originally it is log (16n/delta)
	m = \Omega (\max \{ \lambda^{-4} n^4, \lambda^{-2} n^2d\log(n/\delta) \} ),
	\end{align*} 
	where 
	$\lambda >0$ denotes the minimum eigenvalue of the Gram matrix (see Eq.~\eqref{eq:kernel} below), $n$ is the number of training data, $d$ is the input dimension.
	Then with probability $1 - \delta$ over the random initialization of neural network and the randomness of the training algorithm,  
	%and the randomness of our algorithm,  
	our algorithm achieves
	\begin{align*}
	\| f_{t+1} - y \|_2 \leq  \frac{1}{2}  \| f_t - y \|_2.
	\end{align*}
	%The cost per iteration is $\tilde{O}(mn)$, which essentially match the updating cost of first order method,
	The computational cost of each iteration is $\tilde{O}(mnd + n^3)$, and the running time for reducing the 
	training loss to $\eps$ is $\tilde{O}((mnd + n^3) \log(1/\eps))$.
	Using fast matrix-multiplication, the total running time can be further reduced to 
	$\tilde{O}((mnd + n^{\omega}) \log(1/\eps))$.\footnote{Here, $\omega < 2.373$ denotes the fast matrix-multiplication  (FMM) constant for multiplying 
		two $n\times n$ matrices \cite{w12,l14}.}
\end{theorem}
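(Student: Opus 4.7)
The plan is to decompose the argument into (a) a convergence analysis of an \emph{idealized} Gauss--Newton iteration under the NTK linearization, and (b) an implementation analysis showing that an $\tilde{O}(mnd+n^3)$-time sketched solver produces an update vector close enough to the exact Gauss--Newton step to preserve the $1/2$-contraction. Throughout I would follow the usual two-layer NTK calculus: write the prediction as $f_i(W) = \frac{1}{\sqrt m}\sum_r a_r \sigma(w_r^\top x_i)$, let $J(W)\in\mathbb{R}^{n\times md}$ be its Jacobian, and set $G(W)=J(W)J(W)^\top$. At initialization, standard concentration (this is where the width bound $m=\Omega(\lambda^{-2}n^2 d\log(n/\delta))$ enters) shows $G(W_0)\succeq \tfrac{3}{4}\lambda I$ with probability $1-\delta$.

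For the convergence analysis I would first prove the idealized statement: if we run the update $W_{t+1}=W_t+J(W_t)^\top G(W_t)^{-1}(y-f_t)$, then locally around $W_0$ one obtains $\|f_{t+1}-y\|_2 \le c_{\mathrm{lin}}\|f_t-y\|_2^2/\lambda$ via a Taylor expansion of $f$ with the ReLU-specific subtlety that the linearization error is controlled by the number of activation patterns that \emph{flip} between $W_t$ and $W_{t+1}$. The second width bound $m=\Omega(\lambda^{-4}n^4)$ is what I would use to (i) bound the total movement of each neuron by $R=O(n/(\sqrt m\lambda))$ in $\ell_2$, (ii) bound the number of neurons whose ReLU sign can flip by $\tilde O(mR)$, and (iii) conclude that $\|J(W_t)-J(W_0)\|_F$ stays small enough that $G(W_t)\succeq \tfrac{\lambda}{2}I$ throughout the run and the linearization error is at most $\tfrac14\|f_t-y\|_2$. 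Combining these yields exact-step contraction by a factor strictly less than $1/2$, which leaves a constant slack to absorb solver error.

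The computational heart is to replace the exact solve $z^\star = G(W_t)^{-1}(y-f_t)$ by an approximate solution $\tilde z$ with $\|G^{1/2}(\tilde z - z^\star)\|_2 \le \tfrac{1}{\mathrm{poly}(n)}\|G^{1/2}z^\star\|_2$, which suffices (by the slack above) to keep the $1/2$-contraction. For this I would build a \emph{spectral preconditioner} via subspace embedding: sample a SparseJL/FastJL matrix $S\in\mathbb{R}^{b\times md}$ with $b=\tilde O(n)$, form $\widetilde J = J(W_t)S^\top \in \mathbb{R}^{n\times b}$ in time $\tilde O(mnd)$ (exploiting the block-diagonal ReLU structure of $J$, where each neuron contributes a rank-one block $\tfrac{a_r}{\sqrt m}\mathbf{1}[w_r^\top x_i\ge 0]x_i^\top$), and compute $\tilde G=\widetilde J\widetilde J^\top$. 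Standard subspace-embedding guarantees give $(1-\tfrac14)G\preceq \tilde G\preceq (1+\tfrac14)G$, so $P=\tilde G^{-1/2}$ is a constant-factor preconditioner; then run preconditioned conjugate gradient on $G$ using $P$, which converges in $O(\log(n/\varepsilon))$ iterations. Each PCG iteration costs one matrix-vector product with $G(W_t)$, executed implicitly as $J^\top(J v)$ in $\tilde O(mnd)$ time (again using the ReLU block structure and never forming $G$). Computing $P$ itself requires inverting the $n\times n$ matrix $\tilde G$ in $O(n^3)$ time (or $O(n^\omega)$ with FMM), which is where the additive $n^3$ (resp. $n^\omega$) term in the per-iteration cost comes from.

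The main obstacle, in my view, is not the PCG bookkeeping but the interaction between ReLU non-smoothness and the sketched solver: I need that for \emph{all} $W_t$ visited along the trajectory (not just $W_0$), the same sketch-preconditioner analysis goes through, which requires a union-bound over the $O(\log(1/\varepsilon))$ iterations and a careful argument that the random sketch can be re-drawn each iteration without destroying the independence between the iterate and the initialization randomness. I would close this gap by (i) redrawing $S$ freshly at each iteration so that it is independent of $W_t$, and (ii) proving the subspace embedding bound for $J(W_t)$ via the perturbation $\|J(W_t)-J(W_0)\|_F=o(\lambda)$ that already comes out of step two, so the embedding for $J(W_0)$ transfers to $J(W_t)$. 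Summing the $\tilde O(mnd+n^3)$ cost over $O(\log(1/\varepsilon))$ outer iterations yields the claimed total running time; the FMM variant follows by replacing the $O(n^3)$ preconditioner inversion by $O(n^\omega)$.
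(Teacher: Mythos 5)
Your proposal follows essentially the same route as the paper: an NTK-style induction (weights stay within $R/\sqrt m$ of initialization, the Gram matrix eigenvalue stays above $\lambda/2$, Jacobian perturbation controlled via anticoncentration over flipped ReLU signs) showing the Gauss--Newton step contracts with enough slack to absorb a polynomially small solver error, combined with a sketch-to-precondition regression solver (constant-accuracy $\tilde{O}(n)$-row embedding applied to $J_t^\top$, an $n^3$/$n^{\omega}$ factorization, $\tilde{O}(mnd)$ implicit matrix-vector products, and logarithmically many inner iterations) --- the paper uses a QR-based preconditioner with plain preconditioned gradient descent rather than $\tilde{G}^{-1/2}$ with PCG, which is an immaterial difference. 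One minor caveat: your intermediate claim of quadratic contraction $\|f_{t+1}-y\|_2 \lesssim \|f_t-y\|_2^2/\lambda$ for the exact step does not actually hold for ReLU (the paper's linearization-error terms are linear in the residual with a small width-dependent coefficient, which is why it settles for a $\log(1/\eps)$ rather than $\log\log(1/\eps)$ iteration count), but since you only invoke the weaker linear contraction with constant slack, this does not affect your argument.
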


%\vspace{-2mm}

%Here, $\omega < 2.373$ denotes the fast matrix-multiplication  (FMM) constant for multiplying 
%two $n\times n$ matrices \cite{w12,l14}. Hence, as long as the the width of the DNN satisfies 
%$m> n^{\omega-1}/d \approx n^{1.37}/d$,  %n^2/d$ 
%the runtime of this algorithm is indeed near-linear in 
%the dimension ($mnd$) of the a full gradient, which is an obvious lower 
%bound for any $m$. 
%This means that even if convergence bounds of overparametrized networks 
%\cite{als19a, adh+19} can be improved in the future to hold for close to linear width $m = n^{1+\eps}$, 
%our algorithm would still run in near-linear time. 

%\Binghui{We want to calculate $(J^{\top}J)^{-1}J^{\top}(f_t - y)$, and we know $(J^{\top}J)^{-1}J^{\top}(f_t - y) = J^{\top}(JJ^{\top})^{-1}(f_t - y)$, 
%where $J^{\top}J$ is $m\times m$, and $JJ^{\top}$ is $n \times n$. The inverse of Hessian is $(J^{\top} J)^{-1}$, the inverse of Gram is $(JJ^{\top})^{-1}$}

%%%%%%%%%%%%%%%%%%%%%%%%%%%%%%%

\begin{table}[htbp]
    \centering
    \begin{tabular}{|l|l|l|l|l|l|} \hline
         {\bf Ref.} & {\bf Method} & {\bf\#Iters} & {\bf Cost/iter} & {\bf Width} & {\bf ReLU?}\\ \hline
         \cite{dzps19} & Gradient descent & $O(n^2\log(1/\epsilon))$ & $O(mn)$ & $\Omega(n^6)$ & Yes\\ \hline
         \cite{sy19} & Gradient descent & $O(n^2\log(1/\epsilon))$ & $O(mn)$ & $\Omega(n^4)$ & Yes\\ \hline
         \cite{wdw19} & Adaptive gradient descent & $O(n\log(1/\epsilon))$ & $O(mn)$ & $\Omega(n^6)$ &Yes\\ \hline
        %Stochastic gradient descent~\cite{zg19} &$O(n^{5}\log(1/\eps))$ & $O(m)$ & $\Omega(n^{17})$ & Yes\\
        % \hline
        %Stochastic gradient descent (SGD)~\cite{zg19} &$n^{5}\log(1/\eps)$ & $O(m)$ & $\Omega(n^{17})$\\
         %       \hline
         \cite{cgh+19} & Gram-Gaussian-Newton (GGN) & $O(\log\log(1/\epsilon))$ & $O(mn^2)$ & $\Omega(n^4)$ & No \\ \hline
         \cite{cgh+19} & Batch-GGN & $O(n^2\log(1/\eps))$ & $O(m)$ & $\Omega(n^{18})$ & No\\ \hline
         \cite{zmg19} & Natural gradient descent & $O(\log(1/\eps))$ & $O(mn^2)$ & $\Omega(n^4)$ & Yes\\\hline
        %\cite{cgh+19} Batch & $n^2\log(n/\eps)$ & $O(m)$ & $\Omega(n^{18})$ \\ \hline
        \bf{Ours} &   & $O(\log(1/\epsilon))$ & $O(mn)$ & $\Omega(n^4)$ &Yes\\ \hline %Iterate and sketch
    \end{tabular}
    
    \caption{
    Summary of state-of-art algorithms for training two-layer neural networks. 
    $n$ denotes the training batch size (number of input data points in $\R^d$) and $\epsilon$ 
    denote the desired accuracy of the training loss. For simplicity, here we assume 
    $d=O(1)$ and omit $\poly (\log n, 1/\lambda)$ terms. The result of \cite{cgh+19} applies only to  smooth activation 
    gates and not to ReLU networks. Comparison to SGD algorithms is omitted from this table since they require 
    a must stronger assumption on the width $m$ for convergence, and have slower convergence rate than GD 	
    \cite{ll18,als19a,als19b}. } %\Binghui{Zhao, can you add SGD to the table} \Zhao{SGD 
    %results show-up in many papers, which paper do you mean?}}
    \label{tab:my_label}
\end{table}

\begin{remark}
We stress that that our algorithm runs in (near) linear time 
even for networks with width $m \gtrsim n^2$ and in fact, under the common belief that $\omega=2$, this  
is true so long as  $m \gtrsim n$ (!). This means that the bottleneck for linear-time training of \emph{small-width} DNNs 
is \emph{not computational}, but rather \emph{analytic}: 
The overparametrization requirements  ($m \gtrsim n^4$) in Theorem \ref{thm:second-order-intro} 
%and $m \gtrsim n^8$ in Theorem \ref{thm:multi-intro}) 
stems from current-best analysis of the convergence 
guarantees of (S)GD-based training of ReLU networks, and any improvement on these bounds would 
directly yield linear-time training for thinner networks using our algorithm.  
\end{remark}

%\vspace{-3mm}
\ifdefined\isitcs
\vspace{+2mm}
{\noindent \bf Techniques \ \ } 
\else
\paragraph{Techniques}
\fi
The majority of ML optimization literature on overparametrized network training 
is dedicated to understanding and minimizing the \emph{number of iterations} of 
the training process~\cite{zmg19, cgh+19}  
as opposed to the \emph{cost per iteration}, which is the focus of our paper. 
%\Binghui{Some ML optimization also aims to reduce the cost or cost per iteration.}
Our work shows that it is possible to harness the toolbox of \emph{randomized linear algebra}--- 
which was heavily used in the past decade to reduce the cost of \emph{convex optimization} tasks--- 
in the nonconvex setting of deep learning as well. A key ingredient in our algorithm is 
\emph{linear sketching}, where the main idea is to carefully \emph{compress} a linear system 
underlying an optimization problem, 
in a way that preserves  a good enough solution to the problem yet can be solved much faster in 
lower dimension. This is the essence of the celebrated \emph{Sketch-and-Solve} (S\&S)  
paradigm \cite{cw13}. 
%, which as been successfully applied to speed up the running time of solving a variety of convex 
%optimization problems (see e.g., \cite{?...} and references therein).   
As we explain below, 
%(in Section \ref{sec_tech_overview})
our main \emph{departure} from the classic S\&S framework (e.g., \cite{pw17}) is that 
we cannot afford to directly solve the underlying compressed regression problem (as this 
approach turns out to be prohibitively slow for our application). Instead, we use sketching (or sampling) to facilitate 
\emph{fast preconditioning} of linear systems (in the spirit of~\cite{st04, kosz13,rt08, w14}), which in turn enables 
to solve the compressed regression problem to very high accuracy via first-order \emph{conjugate}  
gradient descent. This approach essentially \emph{decouples} the sketching error from the final precision 
error of the Gauss-Newton step, enabling a much smaller sketch size. 
%As a byproduct, we also 
%improve the runtime of the ``Newton-Sketch'' algorithm of \cite{pw17} for \emph{convex optimization}.  
We believe this (somewhat unconventional) approach to non-convex optimization 
is the most enduring message of our work.  
%\vspace{-2mm}

%%%%%%%%%%%%%%%%%%%%%%%%%%%%%%%%%%%%%%%%%%%%%%%%%%
%%%%%%%%%%%%%%%%%%%%%%%%%%%%%%%%%%%%%%%%%%%%%%%%%%
%%%%%%%%%%%%%%%%%%%%%%%%%%%%%%%%%%%%%%%%%%%%%%%%%%

%\Omri{I moved the Technical Overview section to after Related Work, is that OK?}
\subsection{Related Work}
%\vspace{-2mm}

\ifdefined\isitcs
\vspace{+2mm}
{\noindent \bf Second-order methods in non-convex optimization \ \ } 
\else
\paragraph{Second-order methods in non-convex optimization}
\fi
Despite the prevalence of first order methods in deep learning applications, there is a vast body of ongoing 
work~\cite{brb17,blh18,mg15,gm16,gks18, cgh+19, zmg19} aiming to design more scalable second-order 
algorithms that overcome the limitations of (S)GD for optimizing deep models.
Grosse and Martens~\cite{mg15, gm16} designed the K-FAC method, where the idea is to use Kronecker-factors to 
approximate the Fisher information matrix, combined with natural gradient descent. This approach has been further 
explored and extended by~\cite{wmg+17, glb+18,mbj18}. Gupta et al.~\cite{gks18} designed the ``Shampoo method'', 
based on the idea of {\em structure-aware preconditioning}. Anil et al.~\cite{agk+20} further validate the practical perfromance of 
Shampoo and incorporated it into hardware. However, despite sporadic empirical evidence of such second-order methods 
(e.g., K-FAC and Shampoo), these methods generally lack a provable theoretical guarantee on the performance 
when applied to deep neural networks. 
Furthermore, in the overparametrized setting, their cost per-iteration in general is at least $\Omega(mn^2)$.
%some random work, e.g., rg19
%our algorithm is different... 
%practical deep learning method, no guarantee

We remark that in the \emph{convex} setting, theoretical guarantees for large-scale second-order algorithms 
have been established (e.g.,\cite{abh17, pw17, mnj16, b15}), but such rigorous analysis in non-convex setting was 
only recently proposed (\cite{cgh+19,zmg19}).  
Our algorithm bears some similarities to the \emph{NewtonSketch} algorithm of~\cite{pw17}, which also incorporates 
sketching into second order Newton methods. A key difference, however, is that the algorithm of~\cite{pw17} 
works only for convex problems, and requires access to $(\nabla^{2}f(x))^{1/2}$ (i.e., the square-root of the Hessian). 
Most importantly, though, \cite{pw17} use the standard (black-box) Sketch-and-Solve 
paradigm to reduce the computational cost, while this approach incurs large computation overhead in our 
non-convex setting. By contrast, we use sketching as a subroutine for fast preconditioning. 
As a by-product, 
\ifdefined\isitcs
in the full version of this paper
\else
in Section \ref{sec:opt} 
\fi
we show how to apply our techniques to give a substantial 
improvement over~\cite{pw17} in the \emph{convex} setting.

%Agarwal et al.~\cite{abh17}
%Hazan, Sketch Newton, convex

The aforementioned works of \cite{zmg19} and \cite{cgh+19} are most similar in spirit to ours. Zhang et al.~\cite{zmg19} 
analyzed the convergence rate of Natural gradient descent algorithms for two-layer (overparametrized) neural 
networks, and showed that the number of iterations is \emph{independent} of the training data size $n$ 
(essentially $\log(1/\eps)$). They also demonstrate similar results for the convergence rate of K-FAC in the 
overparametrized regime, albeit with larger requirement on the width $m$. Another downside of K-FAC is 
the high cost per iteration ($\sim mn^2$). Cai et al.~\cite{cgh+19} analyzed the convergence rate of the 
so-called Gram-Gauss-Newton algorithm for training two-layer (overparametrized) neural network with 
\emph{smooth} activation gates. They proved a quardratic (i.e., doubly-logarithnmic) convergence rate 
in this setting ($\log(\log (1/\eps))$) albeit with $O(mn^2)$ cost per iteration. It is noteworthy that 
this quadratic convergence rate analysis does not readily extend to the more complex and realistic setting 
of ReLU activation gates, which is the focus of our work. \cite{cgh+19} also prove bounds on the 
convergence of `batch GGN', showing that it is possible to reduce the cost-per-iteration to $m$, at the 
price of $O(n^2\log(1 / \eps))$ iterations, for very heavily overparametrized DNNs (currently $m = \Omega(n^{18})$).

%\vspace{-2mm}
\ifdefined\isitcs
\vspace{+2mm}
{\noindent \bf Sketching \ \ }
\else
\paragraph{Sketching}
\fi
The celebrated `Sketch and Solve' (S\&S) paradigm~\cite{cw13} was originally developed to speed up the cost of solving 
linear regression and low-rank approximation problems. This dimensionality-reduction technique has since then been 
widely developed and applied to both convex and non-convex numerical linear algebra problems 
\cite{bwz16,rsw16,wz16,alszz18,bw18,bcw19,ww19,djssw19,swyzz19,song19,bwz20}, as well as machine-learning %lw20,cww19,lwyz20,swz19a,swz19b,swz19c,,swz17,psw17
applications \cite{akmmvz17,akmmvz19,lppw20,wz20}.  
The most direct application of the sketch-and-solve technique is overconstrained regression problems, 
where the input is a linear system $[A,b] \in \R^{n \times (d+1)}$ with $n \gg d$, and we aim to find 
an (approximate) solution $\wh{x} \in \R^d$ so as to minimize the residual error $\| A \wh{x} - b \|_2$.

In the classic S\&S paradigm, the underlying regression solver is treated as a \emph{black box}, 
and the computational savings comes from applying it on a smaller \emph{compressed} matrix. 
Since then, sketching (or sampling) has also been used in a non-black-box fashion for speeding-up optimization 
tasks, e.g., as a subroutine for preconditioning~\cite{w14, rt08, st04,  kosz13} 
%kmp10, kmp11, ckmpprs14,ks16, amt10,
%\Omri{Do these works use Sketching for fast preconditioning? This seems to undercut some of 
%the novelty of our techniques?}
%\Binghui{Zhao, cite a few graph laplacian paper here, they do preconditioning?} 
or fast inverse-maintenance in Linear Programming solvers, semi-definite programming, cutting plane methods, and empirical-risk 
minimization~\cite{cls19,jswz20,jklps20,jlsw20,lsz19}.

%\vspace{-2mm}
\ifdefined\isitcs
\vspace{+2mm}
{\noindent \bf Overparametrization in neural networks \ \ }
\else
\paragraph{Overparametrization in neural networks}
\fi
A long and active line of work in recent deep learning literature has focused on 
obtaining rigorous bounds on the convergence rate of various local-search algorithms 
for optimizing DNNs ~\cite{ll18, dzps19, als19a, als19b, adhlw19, adh+19, sy19, jt20}. 
The breakthrough work of Jacob et al.~\cite{jgh18} and subsequent developments\footnote{For a complete list of references, we refer the readers to \cite{adhlw19,adh+19}.} %~\cite{ll18,dzps19,als19a,als19b,adhlw19,adh+19,sy19}, 
introduced the notion of {\em neural tangent kernels} (NTK), 
implying that for wide enough networks ($m \gtrsim n^4$),  
(stochastic) gradient descent provably converges to an optimal solution, with 
generalization error independent of the number of network parameters.
% \Binghui{Zhao, please take a look on the choice of reference. In particular, is the use of ALS19, SY19 correct ?}

%\vspace{-2mm}

   %%% Section 1, Introduction
\section{Technical Overview}

We now provide a streamlined overview of our main result, Theorem~\ref{thm:second-order-intro}. 
As discussed in the introduction, our algorithm extends to multi-layer ReLU networks 
%(see \Omri{Appendix})
, though we focus on the  
two-layer case (one-hidden layer), which is the most interesting case where one can indeed hope for linear training time. 

%We show how to speed up the second-order optimization algorithm of \cite{cgh+19}, which is a variation of the classic Gauss-Newton iteration algorithm (adapted to the DNN setting). Moreover, while \cite{cgh+19} only works for training 
%smooth-activation networks, our analysis handles the more complex (yet realistic) case of ReLU networks.  
The main, and most expensive step, of the GGN (or natural gradient descent) algorithms \cite{cgh+19, zmg19} is multiplying, 
in each iteration $t$, the \emph{inverse} of the Gram matrix $G_t := J_tJ_t^\top$ 
with the Jacobian matrix $J_t \in \R^{n\times m}$, whose $i$th 
row contains the gradient of the $m=md$ network gates w.r.t the $i$th datapoint $x_i$ (in our case, under ReLU activation).   

%\Omri{please read carefully and if needed correct the next paragraph:} \\ 
Naiively computing $G_t$ would already take $mdn^2$ time, 
however, the \emph{tensor product} structure of the Jacobian $J$ in fact allows to compute $G_t$ 
in $n\cdot \cT_{mat}(m,d,n) \ll mn^2$ time, where $\cT_{mat}(m,d,n)$ is the cost of fast rectangular matrix 
multiplication\cite{w12,l14,gu18}.\footnote{To see this, observe that the kronecker-product structure of $J$ (here $J \in \R^{n\times md}$ can be constructed from an $n \times m$ matrix and an $n \times d$ matrix) allows 
computing $Jh$ for any $h\in \R^{md}$ using fast rectangular matrix multiplication in time $\cT_{mat}(m,d,n)$ 
which is near linear time in the dimension of $J$ and $h$ (that is, $n \times m + n \times d$ for $J$ and $md$ for $h$) so long as $d\leq n^\alpha = n^{0.31}$ \cite{gu18}, hence computing 
$G = JJ^\top$ can be done using $n$ independent invocations of the aforementioned subroutine, yielding 
$n\cdot \cT_{mat}(m,d,n)$ as claimed.} 
Since the Gram-Gauss-Newton (GGN) algorithm requires $O(\log\log 1/\eps)$ 
iterations to converge to an $\eps$-global minimum of the $\ell_2$ loss \cite{cgh+19}, 
this observation yields an $O(n\cdot \cT_{mat}(m,d,n)\lg\lg 1/\eps)$ total time algorithm 
for reducing the training loss below $\eps$. 
While already nontrivial, this is still far from linear running time ($\gg mdn$).

We show how to carry out each Gauss-Newton iteration in time $\tilde{O}(mnd + n^3)$,  
%(which is near-linear so long as the network is mildly overparametrized $m>n^2/d$), 
at the price of 
slightly compromising the number of iterations to $O(\lg 1/\eps)$, which is inconsequential 
for the natural regime of constant dimension $d$ and constant $\eps$\footnote{We also remark that this slowdown in the 
convergence rate is also a consequence of a direct extension of the analysis in 
\cite{cgh+19} to ReLU activation functions.}. Our first key step is to reformulate the Gauss-Newton 
iteration (multiplying $G_t^{-1}$ by the error vector) as an \emph{$\ell_2$-regression problem}: 
\begin{equation}\label{eq-regression}
\min_{g_t}\| J_t J_t^{\top} g_t - (f_t - y)\|_2 
\end{equation} 
where $(f_t-y)$ is the training error with respect to the network's output and the training labels $y$.  
Since the Gauss-Newton method is robust to small perturbation errors (essentially \cite{v89_lp,v89_cp}), 
our analysis shows that 
it is sufficient to find an approximate solution $g'_t$ such that $J_t^\top g'_t$  satisfies 
%($J_t^\top \cdot (J_t J_t^{\top} )^{-1} \cdot ( f_t - y )$).  
\begin{equation}\label{eq-regression_gamma}
\|J_t J_t^{\top} g_t' - y\|_{2} \leq \gamma \|y\|_2,\;\; \text{for} \;\;\;  \gamma \approx 1/n .  
\end{equation} 
The benefit of this reformulation is that it allows to use \emph{linear sketching} to %compress the matrix $G_t$ 
first compress the linear system, significantly reducing the dimension of the optimization problem and 
thereby the cost of finding a solution, at the price of a small error in the found solution 
(this is the essence of the \emph{sketch-and-solve} paradigm \cite{cw13}). 
Indeed, a (variation of) the \emph{Fast-JL} sketch \cite{ac06,ldfu13} guarantees that we can multiply 
the  matrix $J_t^\top \in \R^{m\times n}$ 
%\Omri{Is that right? Are we sketching $SJ_t^\top$?  or the Gram matrix $G_t$?} 
by a much smaller $\tilde{O}(n/\delta^2) \times m$ matrix $S$, 
such that (i) the multiplication takes near-linear time $\tilde{O}(mn)$ time (using the FFT algorithm), 
and (ii) $SJ^\top_t$ is a $\delta$-spectral approximation of $J_t^\top$ %\Omri{Should it be $G_t$??}
(i.e., $\|J_tS^{\top}SJ_t^{\top}x\|_2 = (1 \pm \delta)\|G_t x\|_2$ for every $x$). Since both computing and 
inverting the matrix $\tilde{G}_t := J_t S^{\top} S J_t^{\top}$ takes $\tilde{O}(n^3/\delta^2)$ 
time, the overall cost of finding a $\delta$-approximate solution to the regression problem becomes at most 
$\tilde{O}(mn+n^3/\delta^2)$. Alas, as noted in Equation \eqref{eq-regression_gamma},   
the approximation error of the found solution must be 
\emph{polynomially small} $\gamma\sim 1/n$ in order to guarantee the desired convergence 
rate (i.e., constant decrease in training error per iteration).   
This means that we must set $\delta \sim \gamma \sim 1/n$, hence the cost of the %aforementioned 
naiive ``sketch-and-solve'' algorithm 
would be at least $\tilde{O}(n^3/\delta^2) = \tilde{O}(n^5)$,
which is a prohibitively large overhead in both theory and practice (and in particular, 
no longer yields linear runtime whenever $m \ll n^4$ which is the current best overparametrization guarantee \cite{sy19}). 
Since the $O(1/\delta^2)$ dependence 
of the JL embedding is known to be tight in general \cite{ln17}, this means we need to 
take a more clever approach to solve the regression \eqref{eq-regression}. 
This is where our algorithm departs from the naiive sketch-and-solve method, and is the heart of our work.

Our key idea is to use dimension reduction---not to directly invert the compressed matrix---but rather to 
\emph{precondition} it quickly. More precisely, our approach is to use a (conjugate) gradient-descent solver for the 
 regression problem itself, with a fast preconditioning step, ensuring exponentially faster convergence 
 to very high (polynomially small) accuracy. Indeed, conjugate gradient descent is guaranteed to find a $\gamma$-approximate 
 solution to a regression problem $\min_x\|Ax-b\|_2$ in %$O(\kappa \log(\frac{1}{\gamma}))$ 
 $O( \sqrt{\kappa} \log( 1 / \gamma ) )$ iterations, 
 where $\kappa(A)$ is the \emph{condition number} of $A$ (i.e., the ratio of maximum to minimum eigenvalue). 
Therefore, if we can ensure that $\kappa(G_t)$ %\Omri{make sure it's correct, and not $\kappa(J_t^\top)$??}
is small, then we can $\gamma$-solve the regression problem 
in $\sim mn\lg(1/\gamma) = \tilde{O}(mn)$ time, since the per-iteration cost of first-order 
SGD is linear ($\sim mn$). 

The crucial advantage of our approach is that it \emph{decouples} the sketching error from the final 
precision of the regression problem: Unlike the usual `sketch-and-solve' method, where 
the sketching error $\delta$ directly affects the overall precision of the solution to \eqref{eq-regression_gamma}, 
here $\delta$ only 
affects the \emph{quality of the preconditioner} (i.e., the ratio of max/min singular values of the sketch $\tilde{G}_t$), 
hence it suffices to take a \emph{constant} sketching error $\delta = 0.1$ (say), while 
letting the SGD deal with the final precision (at it has logarithmic dependence on $\gamma$). 
\ifdefined\isitcs
\else
See Lemma \ref{lem:fast-regression} for the formal details. 
\fi

Indeed, by setting the sketching error to $\delta = 0.1$ (say), 
the resulting matrix $\tilde{G}_t = J_t S^{\top} S J_t^{\top}$ 
is small enough ($n\times \tilde{O}(n)$) that we can afford running a standard 
(QR) algorithm to precondition it, at another $\tilde{O}(n^3)$ cost per iteration. 
The output of this step is a matrix $\tilde{G}_t' := \mathsf{Prec}( \tilde{G}_t )$ with a \emph{constant} 
condition number $\kappa(\tilde{G}_t')$ which preserves $\tilde{G}_t' x \approx_{\ell_2} \tilde{G}_t$ 
up to $(1 \pm \delta)^2$ relative error. 
At this point, we can run a (conjugate) gradient descent algorithm, which is guaranteed to find a 
$\gamma \approx 1/n$ approximate solution to \eqref{eq-regression} 
in time $\tilde{O}(( m n \lg((1 + \delta)/\gamma)+n^3)$, as desired. 

We remark that, by definition, the preconditioning step (on the JL sketch) does \emph{not} preserve 
the eigen-spectrum of $G_t$, which is in fact necessary to guarantee the fast convergence of 
the Gauss-Newton iteration 
\ifdefined\isitcs
\else
(see Lemma \ref{lem:initialization-eigenvalue})
\fi
. The point is that this 
preconditioning step is only preformed as a \emph{local subroutine} so as to solve the regression 
problem, and does \emph{not} affect the convergence rate of the outer loop.

%\Omri{Add Figure : $A \; \longrightarrow SA \longrightarrow  \mathsf{Prec}(A^\top SA)$} 
%\Binghui{Maybe we put this in the algorithm section.}

%Kernel regression problem
%\begin{align*}
%\min_{W}\|\Phi(X) W - y\|_2
%\end{align*}
%$\Phi(X)$ is the kernel feature map. $\Phi(X) \in \R^{n \times m}$, $W \in \R^{m}$, $y \in \R^{n}$.
%The NTK just refers to a specific type of kernel feature map, which takes the form of

%\begin{align*}
%\Phi(X) = \frac{1}{ \sqrt{m} }
%\left[
%\begin{matrix}
%a_1 x_1^{\top} {\bf 1}_{ \langle w_1(t) , x_1 \rangle \geq 0 } & a_2 x_1^{\top} {\bf 1}_{ \langle w_2(t) , x_1 \rangle \geq 0 } &  \cdots  & a_m x_1^{\top} {\bf 1}_{ \langle w_m(t) , x_1 \rangle \geq 0 }  \\
%a_1 x_2^{\top} {\bf 1}_{ \langle w_1(t) , x_2 \rangle \geq 0 } & a_2 x_2^{\top} {\bf 1}_{ \langle w_2(t) , x_2 \rangle \geq 0 } &  \cdots  & a_m x_2^{\top} {\bf 1}_{ \langle w_m(t) , x_2 \rangle \geq 0 }  \\
%\vdots & \vdots & \ddots & \vdots \\
%a_1 x_n^{\top} {\bf 1}_{ \langle w_1(t) , x_n \rangle \geq 0 } & a_2 x_n^{\top} {\bf 1}_{ \langle w_2(t), x_n \rangle \geq 0 } & \ldots & a_m x_n^{\top} {\bf 1}_{ \langle w_m(t) , x_n \rangle \geq 0 } \\
%\end{matrix}
%\right].
%\end{align*}

%The difference between NTK and neural network that in neural network, $\Phi(X)$ is slowly changing.

    %%% Section 2, Technical Overview
%\vspace{-4mm}
\section{Preliminaries}
%\vspace{-2mm}
\subsection{Model and Problem Setup}
We denote by $n$ the number of data points in the training batch, and by $d$ the data dimension/feature-space 
(i.e., $x_i \in \R^d$).
We denote by $m$ the \emph{width} of neural network, and by $L$ the number of layers and by $M$ the number of parameters.
We assume the data has been normalized, i.e., $\|x\|_2 = 1$.
 We begin with the  
two-layer neural network in the following section, and then extend to multilayer networks. Consider a two-layer ReLU activated neural network with $m$ neurons in the (single) hidden layer:
%\vspace{-2mm}
\begin{align*}
f (W,x,a) = \frac{1}{ \sqrt{m} } \sum_{r=1}^m a_r \phi ( w_r^\top x ) ,
\end{align*}
where $x \in \R^d$ is the input, $w_1, \cdots, w_m \in \R^d$ are weight vectors in the first layer, $a_1, \cdots, a_m \in \R$ are weights in the second layer.  For simplicity, we consider $a \in \{-1,+1\}^m$ is fixed over all the iterations, this is natural in deep learning theory \cite{ll18,dzps19,als19a,all19,sy19}.
Recall the ReLU function $\phi(x)=\max\{x,0\}$.
Therefore for $r\in [m]$,
we have
\begin{align}\label{eq:relu_derivative}
\frac{\partial f (W,x,a)}{\partial w_r}=\frac{1}{ \sqrt{m} } a_r x{\bf 1}_{ w_r^\top x \geq 0 }.
\end{align}
%
%\vspace{-4mm}
Given $n$ input data points $(x_1 , y_1), (x_2 , y_2) , \cdots (x_n, y_n) \in \R^{d} \times \R$. We define the objective function $\mathcal{L}$
as follows
%\vspace{-4mm}
\begin{align*}
\mathcal{L} (W) = \frac{1}{2} \sum_{i=1}^n ( y_i - f (W,x_i,a) )^2 .
\end{align*}
%
%We then apply the gradient descent to optimize the weight matrix $W$ in the following standard way,
%\begin{align}\label{eq:w_update}
%W(k+1) = W(k) - \eta \frac{ \partial L( W(k) ) }{ \partial W(k) } .
%\end{align}
%
%
%\vspace{-4mm}
We can compute the gradient of $\mathcal{L}$ in terms of $w_r$ %\jan{Should this be $f(W,x_i,a)$ instead of $f(W,x_i,a_r)$?}
\begin{align}\label{eq:gradient}
\frac{ \partial \mathcal{L}(W) }{ \partial w_r } = \frac{1}{ \sqrt{m} } \sum_{i=1}^n ( f(W,x_i,a) - y_i ) a_r x_i {\bf 1}_{ w_r^\top x_i \geq 0 }.
\end{align}
%
%We consider the ordinary differential equation defined by
%\begin{align}\label{eq:wr_derivative}
%\frac{\d w_r(t)}{\d t}=-\frac{ \partial L(W) }{ \partial w_r }.
%\end{align}
%
%At time $t$,
%let $u(t)=(u_1(t),\cdots,u_n(t))\in \mathbb{R}^n$ be the prediction vector where each $u_i(t)$ is defined as
%\begin{align}\label{eq:ut_def}
%u_i(t)=f(W(t),a,x_i).
%\end{align}
%
%Let $\phi(z) = \max\{z,0\}$. 
%
%\vspace{-4mm}
We define the prediction function $f_t : \R^{d \times n} \rightarrow \R^n $ at time $t$ as follow 
\begin{align*}
f_t  = 
\begin{bmatrix}
\frac{1}{ \sqrt{m} } \sum_{r=1}^m a_r \cdot \phi( \langle w_r(t), x_1 \rangle ) \\
\frac{1}{ \sqrt{m} } \sum_{r=1}^m a_r \cdot \phi( \langle w_r(t), x_2 \rangle ) \\
\vdots \\
\frac{1}{ \sqrt{m} } \sum_{r=1}^m a_r \cdot \phi( \langle w_r(t), x_n \rangle ) \\
\end{bmatrix}
\end{align*}
where $W_t = [ w_1(t)^\top , w_2(t)^\top, \cdots, w_m(t)^\top ]^\top \in \R^{md}$ and $X = [x_1, x_2 ,\cdots, x_n] \in \R^{d \times n}$ .
% \Jan{Are there some papers we can cite that use the same assumption? Just to verify that this is indeed natural.}

For each time $t$, the Jacobian matrix $J \in \R^{n \times md}$ is defined via the following formulation:
\begin{align*}
J_t = \frac{1}{ \sqrt{m} }
\left[
\begin{matrix}
a_1 x_1^{\top} {\bf 1}_{ \langle w_1(t) , x_1 \rangle \geq 0 } & a_2 x_1^{\top} {\bf 1}_{ \langle w_2(t) , x_1 \rangle \geq 0 } &  \cdots  & a_m x_1^{\top} {\bf 1}_{ \langle w_m(t) , x_1 \rangle \geq 0 }  \\
a_1 x_2^{\top} {\bf 1}_{ \langle w_1(t) , x_2 \rangle \geq 0 } & a_2 x_2^{\top} {\bf 1}_{ \langle w_2(t) , x_2 \rangle \geq 0 } &  \cdots  & a_m x_2^{\top} {\bf 1}_{ \langle w_m(t) , x_2 \rangle \geq 0 }  \\
\vdots & \vdots & \ddots & \vdots \\
a_1 x_n^{\top} {\bf 1}_{ \langle w_1(t) , x_n \rangle \geq 0 } & a_2 x_n^{\top} {\bf 1}_{ \langle w_2(t), x_n \rangle \geq 0 } & \ldots & a_m x_n^{\top} {\bf 1}_{ \langle w_m(t) , x_n \rangle \geq 0 } \\
\end{matrix}
\right].
\end{align*}
The Gram matrix $G_t$ is defined as $G_t = J_t J_t^{\top}$, whose $(i, j)$-th entry is $\left\langle \frac{f(W_t, x_i)}{\partial W}, \frac{f(W_t, x_j)}{\partial W} \right\rangle$. 
The crucial observation of~\cite{jgh18, dzps19} is that the asymptotic of the Gram matrix equals a positive semidefinite kernel matrix $K \in \R^{n \times n}$, where
\begin{align}
\label{eq:kernel}
K(x_i, x_j) = \E_{w\in \mathcal{N}(0, 1)}\left[x_{i}^{\top}x_{j}\textbf{1}_{\langle w, x_i \rangle \geq 0, \langle w, x_j \rangle \geq 0} \right].
\end{align}

\begin{assumption}
	\label{asp:lambda}
	We assume the least eigenvalue $\lambda$ of the kernel matrix $K$ defined in Eq.~\eqref{eq:kernel} satisfies $\lambda > 0$.
\end{assumption}

%\subsection{The Gram-Gauss-Newton algorithm \cite{cgh+19}}

%Since first-order algorithms have a slow rate of convergence (at least $\Omega(n)$ iterations \cite{?}),  
%we follow the second-order framework recently introduced by \cite{cgh+19} for training overparametrized DNNs.
%\url{https://arxiv.org/pdf/1905.11675.pdf}.

%\begin{algorithm}\caption{ \cite{cgh+19} }
%\begin{algorithmic}[1]
%\Procedure{\textsc{}}{$...$}
	%\State $\lambda \leftarrow ???, \alpha \leftarrow ???$ \Comment{$\lambda, \alpha \in \R$, are scalars}
	%\State $W_0$ is a random Gaussian matrix \Comment{$W_0 \in \R^{md}$}
	%\While{$t < T$}
	%	\State Compute the Jacobian matrix $J_t \in \R^{n \times m d}$
	%	\State Compute the Gram matrix $G_t = J_t \cdot J_t^\top \in \R^{n \times n}$
	%	\State $W_{t + 1} \leftarrow W_t - J_t^\top \cdot ( \lambda \cdot G_t + \alpha \cdot I )^{-1} \cdot ( f_t - y ) $ \Comment{$W_t \in \R^{m d}, f_t , y \in \R^n$}
	%	\State $t \leftarrow t + 1$
	%\EndWhile
%\EndProcedure
%\end{algorithmic}
%\end{algorithm}

%\begin{theorem}
%	\label{thm:second-order-cgh}
%	Suppose the width of the neural network satisfies
%	\begin{align*}
%	m = \Omega (\max \{ \lambda^{-2} n^4, \lambda^{-2} n^2d\log(16n/\delta) \} ),
%	\end{align*}
%	then with probability $1 - \delta$ over the random initialization of the data, the algorithm achieves
%	\begin{align*}
%	\| f_{t+1} - y \|_2 \leq \frac{C}{\sqrt{m}} \| f_t - y \|_2^2,
%	\end{align*}
%	and the cost per iteration is $\tilde{O}(mn^2)$
%\end{theorem}

%where $a_i = \pm 1, \forall i \in [m]$ and

%\vspace{-2mm}
\subsection{Subspace embedding}
%\vspace{-2mm}

Subspace embedding was first introduced by Sarl\'{o}s \cite{s06}, it has been extensively used in numerical linear algebra field over the last decade \cite{cw13,nn13,bw14,swz19b}. For a more detailed survey, we refer the readers to \cite{w14}. The formal definition is:
\begin{definition}[Approximate subspace embedding, {\sf ASE} \cite{s06}]\label{def:ase1}
	A $(1 \pm \epsilon)$ $\ell_2$-subspace embedding for the column space of an $N \times k$ 
	matrix $A$ is a matrix $S$ for which for all $x \in \R^k$, $\| S A x \|_2^2 = (1 \pm \epsilon) \| A x \|_2^2$.
	Equivalently,
	$
	\| I - U^\top S^\top S U \|_2 \leq \epsilon,
	$ 
where $U$ is an orthonormal basis for the column space of $A$.%
\end{definition}

Combining Fast-JL sketching matrix \cite{ac06,dmm06,t11,dmmw12,ldfu13,psw17} with a classical $\epsilon$-net argument \cite{w14} gives subspace embedding, 
%\Jan{JL is randomized, right? But the lemma below does not state any probabilities? Does it hold whp? If it's ranodmized, is the result "for any $x$ we have whp that ..." or "whp we have for all $x$ that ..."? I think JL only provides the former but if I understand the proof of your algorithm correctly it uses the latter.}
%\Binghui{Please see~\ref{}}
\begin{lemma}[Fast subspace embedding~\cite{ldfu13,w14}]
	\label{fact:subspace-embedding}
	Given a matrix $A \in \R^{N \times k}$ with $N = \poly(k)$, then we can compute a $S \in \R^{k \poly ( \log (k/\delta) )/\eps^2 \times k}$ that gives a 
	subspace embedding of $A$ with probability $1-\delta$, i.e., with probability $1-\delta$, we have :
	\begin{align*}
	\| S A x \|_2 = (1 \pm \eps) \| A x \|_2
	\end{align*}
	holds for any $x \in \R^{n}$, $\| x \|_2 = 1$.
	Moreover, $SA$ can be computed in $O(Nk\cdot \poly\log k)$ time.
\end{lemma}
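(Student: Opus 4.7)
The plan is to instantiate $S$ as a Fast Johnson--Lindenstrauss (FJL) matrix of the form $S = \Phi H D$, where $D \in \R^{N\times N}$ is diagonal with i.i.d.\ $\pm 1$ random signs, $H$ is a normalized Hadamard (or subsampled trigonometric) transform, and $\Phi \in \R^{r \times N}$ is a sparse/subsampling projection of target dimension $r = \Theta(k \cdot \poly(\log(k/\delta))/\eps^2)$. The classical single-vector guarantee of~\cite{ac06,ldfu13} says that for any fixed $v$, $\|Sv\|_2 = (1\pm \eps')\|v\|_2$ with probability at least $1-\delta'$ provided $r = \Omega(\log(1/\delta')/{\eps'}^2)$. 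The hash+FFT structure lets us compute $Sv$ in $O(N \log N)$ time for any $v \in \R^N$.

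First I would lift the pointwise JL guarantee to a subspace-embedding guarantee via a standard $\eps$-net argument on the column space of $A$. Let $U \in \R^{N\times k}$ be an orthonormal basis for the column space of $A$; by Definition~\ref{def:ase1} it suffices to show $\|I - U^{\top}S^{\top}SU\|_2 \le \eps$. Fix an $(\eps/4)$-net $\mathcal{N}$ of the unit sphere $S^{k-1}\subset \R^k$ of size $|\mathcal{N}| \le (12/\eps)^k$. For each $y \in \mathcal{N}$, the vector $v_y := Uy$ has $\|v_y\|_2 = 1$, so by the single-vector FJL bound, $\|Sv_y\|_2^2 \in 1 \pm \eps/2$ with probability $\ge 1 - \delta/|\mathcal{N}|$ once we set $r = \Theta((k\log(1/\eps) + \log(1/\delta))/\eps^2)$. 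A union bound over $\mathcal{N}$ yields the event $E = \{\forall y \in \mathcal{N}: \|SUy\|_2^2 \in 1\pm \eps/2\}$ with probability at least $1-\delta$.

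Second, conditioned on $E$, I would pass from the net to the full sphere using the standard operator-norm approximation: writing $M := U^{\top}S^{\top}SU - I$, one has $\|M\|_2 \le 2 \sup_{y \in \mathcal{N}} |y^{\top} M y|$ (since $\mathcal{N}$ is $\eps/4$-dense), and each term in the sup is bounded by $\eps/2$ on $E$, yielding $\|M\|_2 \le \eps$. This establishes $\|SAx\|_2 = (1\pm\eps)\|Ax\|_2$ simultaneously for all $x$, which is exactly the conclusion. Plugging in the net cardinality gives the claimed sketch dimension $r = k \cdot \poly(\log(k/\delta))/\eps^2$.

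Finally, for the runtime, I would compute $SA$ column by column: each column $A_{\cdot,j} \in \R^N$ can be transformed to $SA_{\cdot,j}$ in $O(N \log N)$ time by exploiting the FFT structure of $H$ and the sparsity of $D$ and $\Phi$. Summing over $k$ columns yields $O(Nk \log N)$; since $N = \poly(k)$, we get $\log N = O(\poly\log k)$ and hence the stated $O(Nk \cdot \poly\log k)$ bound. The main obstacle I anticipate is bookkeeping in the net argument to obtain the precise $\poly\log(k/\delta)$ dependence (especially if one insists on oblivious subspace embedding guarantees with the sharpest logarithmic factors), but the overall architecture, FJL single-vector bound followed by a net-based union bound and FFT-based column-wise application, is the natural route.
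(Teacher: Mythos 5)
Your proposal is correct and follows exactly the route the paper itself indicates for this lemma (which it cites rather than proves): a Fast-JL/SRHT sketch whose single-vector guarantee is lifted to the column space of $A$ via a classical $\eps$-net union bound, with the FFT structure giving the $O(Nk\cdot\poly\log k)$ application time. The only caveats are routine bookkeeping (the net-to-operator-norm constant and the $\log(1/\eps)$ factor in the sketch dimension), which do not affect the stated $k\,\poly(\log(k/\delta))/\eps^2$ bound in the regime the paper uses it (constant $\eps_0$).
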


%\Binghui{Zhao, please fill in this lemma. Especially, please state the probabilitic part.}

   %%% Section 3, Preliminaries
%\vspace{-2mm}

\section{Our Algorithm}
\label{sec:result}
%\vspace{-2mm}

Our main algorithm is shown in Algorithm~\ref{alg:fast}. We have the following convergence result of our algorithm.
\begin{theorem}
	\label{thm:second-order-result}
	Suppose the width of a ReLU neural network satisfies 
	\begin{align*}
	m = \Omega (\max \{ \lambda^{-4} n^4, \lambda^{-2} n^2d\log(16n/\delta) \} ),
	\end{align*}
	then with probability $1 - \delta$ over the random initialization of neural network and the randomness of the training algorithm, our algorithm (procedure \textsc{FasterTwoLayer} in Algorithm~\ref{alg:fast}) achieves
	\begin{align*}
	\| f_{t+1} - y \|_2 \leq  \frac{1}{2}  \| f_t - y \|_2. 
	\end{align*}
	%The cost per iteration is $\tilde{O}(mn)$, which essentially match the updating cost of first order method,
	The computation cost in each iteration is $\tilde{O}(mnd + n^3)$, and the running time for reducing the 
	training loss to $\eps$ is $\tilde{O}((mnd + n^3) \log(1/\eps))$. Using fast matrix-multiplication, the total running time can be further reduced to 
	$\tilde{O}((mnd + n^{\omega}) \log(1/\eps))$.
\end{theorem}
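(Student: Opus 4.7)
The plan is to follow the familiar NTK-style potential argument, but with the inner Gauss--Newton step replaced by an approximate regression solve executed via sketch-based preconditioning plus conjugate gradient. I would organize the proof into three conceptual blocks: (i) spectral control of the Gram matrix along the trajectory, (ii) a perturbed Gauss--Newton contraction analysis tolerant to an inexact inner solve, and (iii) an efficient implementation of that inner solve.

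For block (i), I would first invoke standard Gaussian concentration to show that at random init, $\|G_0 - K\|_2 \le \lambda/4$ with probability $1-\delta$ provided $m = \Omega(\lambda^{-2} n^2 \log(n/\delta))$, hence $\lambda_{\min}(G_0) \ge \lambda/2$. Then I would set up an inductive hypothesis that after $t$ iterations the weight movement satisfies $\|w_r(t) - w_r(0)\|_2 \le R := O(\sqrt{n}\|f_0-y\|_2/(\sqrt{m}\lambda))$ for every $r$, which by the standard ReLU sign-flip counting argument (e.g.\ along the lines of \cite{dzps19,sy19}) yields $\|J_t - J_0\|_F^2 \le O(nR\sqrt{m})$ and hence $\|G_t - G_0\|_2 \le \lambda/4$. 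The choice $m = \Omega(\lambda^{-4} n^4)$ is exactly what makes $R$ small enough for this bound to close; this gives $\lambda_{\min}(G_t) \ge \lambda/4$ throughout training.

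For block (ii), I would rewrite one iteration as the regression problem $\min_{g}\|G_t g - (f_t-y)\|_2$ from Equation~\eqref{eq-regression}, take an approximate minimizer $\tilde g_t$ with residual $\gamma \|f_t-y\|_2$ for $\gamma = 1/(100 n)$, and update $W_{t+1} = W_t - J_t^\top \tilde g_t$. A Taylor-type expansion of $f_{t+1} - f_t$ around $W_t$ decomposes the new error into (a) the exact Gauss--Newton term, which vanishes at first order by construction and contributes $O(\|W_{t+1}-W_t\|_2^2)$ from second-order curvature (controllable by the Lipschitzness of $J$ in the ReLU NTK regime, up to the usual sign-flip slack), (b) the inexact-solve residual bounded by $\gamma\|f_t-y\|_2$, and (c) the ReLU non-smoothness slack, bounded using the same $R$-ball sign-flip count as in block (i). Summing and using $m = \Omega(\lambda^{-4} n^4)$ gives $\|f_{t+1} - y\|_2 \le \tfrac12 \|f_t - y\|_2$, which in particular maintains the induction hypothesis for block (i) by telescoping.

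For block (iii), which is where the running time is won, I would compute $S J_t^\top$ in time $\tilde O(mnd)$ via the Fast-JL-based subspace embedding of Lemma~\ref{fact:subspace-embedding} with distortion $\eps = 1/4$ and sketch dimension $\tilde O(n)$, using the Kronecker structure of $J_t$ to exploit fast rectangular multiplication. A QR decomposition of $SJ_t^\top$ in $\tilde O(n^3)$ (or $\tilde O(n^\omega)$ with FMM) yields a preconditioner $R^{-1}$ such that $J_t R^{-1}$ has condition number $O(1)$ (since $\lambda_{\min}(G_t)$ is bounded below by block (i) and $S$ is a $(1\pm 1/4)$-embedding). Running preconditioned CG on the regression problem then finds $\tilde g_t$ with residual $\gamma \|f_t-y\|_2$ in $O(\log(1/\gamma)) = \tilde O(1)$ iterations, each costing one matrix-vector product with $J_t$ and $J_t^\top$, i.e.\ $\tilde O(mnd)$. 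Summing the three costs gives $\tilde O(mnd + n^3)$ per outer iteration, and since block (ii) gives geometric decay, $O(\log(1/\eps))$ outer iterations suffice, yielding the claimed total running time (and the $n^\omega$ variant when the QR step uses fast matrix multiplication).

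The step I expect to be the main obstacle is block (ii): certifying that the Gauss--Newton contraction survives both the inexact inner solve and the non-smoothness of ReLU simultaneously. The smooth-activation analysis of \cite{cgh+19} does not transfer because it leans on second-order smoothness of the per-sample loss, so I would instead have to combine an inexact-Newton perturbation bound with a careful accounting of neurons whose activation pattern differs between $W_t$ and $W_{t+1}$, and show that both error sources are dominated by $\tfrac14\|f_t-y\|_2$ under the stated width lower bound.
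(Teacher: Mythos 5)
You follow the same overall route as the paper: an induction coupling a weight-movement radius with spectral control of the Gram matrix, an inexact Gauss--Newton contraction tolerant to the regression error, and a sketch-preconditioned iterative inner solver costing $\tilde{O}(mnd+n^3)$ per outer step (the paper uses plain preconditioned gradient descent rather than CG; this is immaterial). Your block (iii) is exactly Lemma~\ref{lem:fast-regression}. The step you defer --- block (ii) --- is closed in the paper without any curvature term: since ReLU is piecewise linear one has the exact identity $f_{t+1}-f_t=J_{t,t+1}(W_{t+1}-W_t)$ with $J_{t,t+1}=\int_0^1 J((1-s)W_t+sW_{t+1})\,\mathrm{d}s$, and the error is split as in Eq.~\eqref{eq:second-order3} into the solver residual, $(J_t-J_{t,t+1})J_t^\top g^\star$ and $(J_t-J_{t,t+1})J_t^\top(g_t-g^\star)$ with $g^\star=(J_tJ_t^\top)^{-1}(f_t-y)$; the factor $\|J_t-J_{t,t+1}\|$ is bounded by the sign-flip count (Lemma~\ref{lem:small-move}) and $\|g_t-g^\star\|_2\le\frac{2}{\lambda}\eps_0\|f_t-y\|_2$ follows from $\lambda_{\min}(G_t)\ge\lambda/2$ (Eq.~\eqref{eq:second-order2}). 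So you name the right mechanism but do not supply it.

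Two quantitative choices in your outline would not close under the stated width. First, controlling $\lambda_{\min}(G_t)$ via $\|G_t-G_0\|\le\|J_t-J_0\|_F\,(\|J_t\|+\|J_0\|)$ is too lossy: with the correct sign-flip bound $\|J_t-J_0\|_F^2=\tilde{O}(nR/\sqrt m)$ (your $O(nR\sqrt m)$ is a normalization slip) and $\|J_t\|=O(\sqrt n)$ it forces $m=\tilde{\Omega}(n^6\lambda^{-6})$; the paper instead bounds the Gram perturbation directly, $\|G_W-G_{W_0}\|_F\le\lambda/2$ under $m=\Omega(n^2R^2\log(n/\delta))$ (Lemma~\ref{lem:small-move-eigenvalue}, from \cite{sy19}), which with $R\approx n/\lambda$ matches $m=\Omega(n^4\lambda^{-4})$. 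Second, the fixed inner accuracy $\gamma=1/(100n)$ is insufficient when $\lambda\ll 1/n$: the third term above then requires $m=\tilde{\Omega}(n^2\lambda^{-6})$, exceeding the stated width. The correct requirement is $\eps_0\le\frac{1}{6}\sqrt{\lambda/n}$ (as in Algorithm~\ref{alg:fast}); since the inner solver depends on its accuracy only logarithmically, this changes nothing in the claimed running time.
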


\begin{algorithm}[!h]
	\caption{Faster algorithm for two-layer neural network} 
	\label{alg:fast}
	\begin{algorithmic}[1]
	\Procedure{FasterTwoLayer}{$ $} \Comment{Theorem~\ref{thm:second-order-result}}
		\State $W_0$ is a random Gaussian matrix \Comment{$W_0 \in \R^{md}$}
		\While{$t < T$}
		\State Compute the Jacobian matrix $J_t$ \Comment{$J_t\in \R^{n \times m d}$}
		\State Find an $\eps_0$ approximate solution using Algorithm~\ref{alg:fast-regression} \Comment{$\eps_0 \in (0, \frac{1}{6}\sqrt{\lambda/n}]$}
		\begin{align}
		\label{eq:reg-algo}
		\min_{g_t}\| J_t J_t^{\top} g_t - (f_t - y)\|_2
		\end{align}
		%\State Compute the approximate Gram matrix $\tilde{G}_t = \tilde{J}_t \cdot \tilde{J}_t^\top \in \R^{n \times n}$
		\State Update $W_{t+1} \leftarrow W_{t} - J_t^\top g_t$ \label{line:approx}%\Comment{$w(t) \in \R^{m d}, f_t , y \in \R^n$}
		\State $t \leftarrow t + 1$
		\EndWhile
	\EndProcedure
	\end{algorithmic}
\end{algorithm}

The main difference between \cite{cgh+19, zmg19} and our algorithm is that we perform an {\em approximate Newton update} (see line~\ref{line:approx}). 
The crucial observation here is that the Newton method is robust to small loss, thus it suffices to present a fine approximation.
This observation is well-known in the convex optimization but unclear to the non-convex (but overparameterized) neural network setting.
Another crucial observation is that instead of directly approximating the Gram matrix, it is suffices to approximate $(J_tJ_t^{\top})^{-1}g_t = G_t^{-1}g_t$. Intuitively, this follows from
\[
J_t^{\top}g_t \approx J_t(J_tJ_t^{\top})^{-1}(f_t - y) = (J_t^{\top}J_t)^{\dagger}J_t(f_t - y),
\]
where $(J_t^{\top}J_t)^{\dagger}$ denotes the pseudo-inverse of $J_t^{\top}J_t$ and the last term is exactly the Newton update.
This observation allows us to formulate the problem a regression problem  (see Eq.~\eqref{eq:reg-algo}), on which we can introduce techniques from {\em randomize linear algebra} and develop fast algorithm that solves it in near linear time.
%We provide an robust analysis on the convergence of second order algorithm
%\Binghui{Give some overview on the algorithm.}

%\vspace{-4mm}
\subsection{Fast regression solver}
%\vspace{-2mm}
\begin{algorithm}[!h]
	\caption{Fast regression} 
	\label{alg:fast-regression}
	\begin{algorithmic}[1]
	\Procedure{FastRegression}{$ A, \eps$} \Comment{Lemma~\ref{lem:fast-regression-main}}
	%\Binghui{Zhao, please take a look.}
		\State \Comment{$A\in \R^{N \times k}$ is a full rank matrix, $\eps \in (0,1/2)$ is the desired precision} 
		\State Compute a subspace embedding $SA$  \Comment{$S \in \R^{k\poly(\log k) \times k}$} \label{line:regression1}
		\State Compute $R$ such that $SAR$ orthonormal columns via QR decomposition 		\Comment{$R\in \R^{k \times k}$}\label{line:regression2}
		\State $z_0 \leftarrow \vec{0} \in \R^{k}$
		\While{$\|A^{\top}ARz_t - y\|_2 \geq \eps$}\label{line:regression3}
		\State $z_{t+1} \leftarrow z_{t} - (R^{\top}A^{\top}AR)^{\top}(R^{\top}A^{\top}ARz_t - R^{\top}y)$ %\Jan{Just making sure I'm not missing anything: our solution $x = Rz$ is supposed to satisfy $A^\top A x \approx y$. We instead solve $R^\top A^\top A R z \approx R^\top y$ (i.e. \eqref{eq:reg}). So shouldn't the $(R^{\top}A^{\top}ARz_t - y)$ term in this line be $(R^{\top}A^{\top}ARz_t - R^\top y)$?}
		%\Binghui{You are righ, but the termination condition should be $(A^{\top}ARz_t - y)$ because we want a good solution satisfying $A^\top A x \approx y$, we can also use $(R^{\top}A^{\top}ARz_t - R^\top y)$, but it is unnecessary.}
		\EndWhile\label{line:regression4}\\
		\Return $R z_t$
	\EndProcedure
	\end{algorithmic}
\end{algorithm}
The core component of our algorithm is a fast regression solver (shown in Algorithm~\ref{alg:fast-regression}). 
The regression solver provides an approximate solution to $\min_{x}\|A^{\top}Ax - y\|$ where $A \in \R^{N \times k}$ ($N \gg k$). 
%Even exactly computing $A^{\top}A$ could take $O(mn^2)$.
We perform preconditioning on the matrix of $A^{\top}A$ (line~\ref{line:regression1} -- \ref{line:regression2}) and use gradient descent to derive an approximation solution (line~\ref{line:regression3} -- \ref{line:regression4}). 
\begin{lemma}\label{lem:fast-regression-main}
Let $N = \Omega (k \poly(\log k) )$. Given a matrix $A \in \R^{N \times k}$, let $\kappa$ denote the condition 
	number of $A$ \footnote{$\kappa= \sigma_{\max}(A) / \sigma_{\min}(A)$}, consider the following regression problem
	\begin{align}
	\label{eq:reg2}
	\min_{x \in \R^k} \| A^{\top} A x - y \|_{2}.
	\end{align}
	Using procedure \textsc{FastRegression} (in Algorithm~\ref{alg:fast-regression}), with probability $1 - \delta$, we can compute an $\eps$-approximate solution $x'$ satisfying 
	\begin{align*}
	\|A^{\top}Ax' - y\|_{2} \leq \eps \|y\|_2
	\end{align*}
	in $\tilde{O}\left(Nk\log(\kappa/\eps)+ k^3\right)$ time. 
\end{lemma}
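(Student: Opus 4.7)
The plan is to verify that Algorithm~\ref{alg:fast-regression} uses sketching purely as a \emph{preconditioner}, so that the sketch size and QR cost stay at $\tilde{O}(k)$ and $\tilde{O}(k^3)$ respectively, while the iterative refinement loop carries the entire $\log(\kappa/\epsilon)$ accuracy dependence. First, I would invoke Lemma~\ref{fact:subspace-embedding} with a small but fixed constant distortion $\epsilon_0$ (chosen so that the linear-convergence rate below is bounded away from $1$; $\epsilon_0 = 1/20$ suffices) to produce, in $\tilde{O}(Nk)$ time, a matrix $SA \in \R^{k\poly(\log k) \times k}$ satisfying $\|SAv\|_2 = (1\pm \epsilon_0)\|Av\|_2$ for every $v \in \R^k$. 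A standard QR factorization $SA = Q\widetilde{R}$, computed in $O(k^3 \poly(\log k))$ time, then yields $R := \widetilde{R}^{-1}$ with the property that $SAR = Q$ has orthonormal columns. Substituting $v := Rw$ in the embedding guarantee gives $\|w\|_2 = \|SARw\|_2 = (1\pm \epsilon_0)\|ARw\|_2$ for every $w \in \R^k$, so all singular values of $AR$ lie in $[(1+\epsilon_0)^{-1},(1-\epsilon_0)^{-1}]$, and the symmetric matrix $M := R^\top A^\top A R = (AR)^\top(AR)$ has eigenvalues inside $[(1+\epsilon_0)^{-2},(1-\epsilon_0)^{-2}]$.

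Second, I would analyze the inner while-loop as unit-step-size gradient descent on the preconditioned quadratic $\tfrac{1}{2}\|Mz - R^\top y\|_2^2$, whose unique minimizer is $z^* := M^{-1} R^\top y$. A direct expansion shows $z_{t+1} - z^* = (I - M^2)(z_t - z^*)$, and since $\rho := \|I - M^2\|_{\mathrm{op}}$ is a constant strictly less than $1$ (in fact $\rho = O(\epsilon_0)$) by the first paragraph, starting from $z_0 = \vec{0}$ one obtains $\|z_t - z^*\|_2 \leq \rho^t \|z^*\|_2$. The invertibility of $R$ yields $x^* := R z^* = R(R^\top A^\top A R)^{-1}R^\top y = (A^\top A)^{-1}y$, so the original residual factors through $R$ as $A^\top A x_t - y = A^\top A R\,(z_t - z^*)$, giving
\[
\|A^\top A x_t - y\|_2 \;\leq\; \sigma_{\max}(A)\,\sigma_{\max}(AR)\,\|z_t - z^*\|_2 \;\leq\; (1+\epsilon_0)\,\sigma_{\max}(A)\,\rho^t\,\|z^*\|_2.
\]

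Third, to finish the accuracy and runtime accounting, I would bound $\|z^*\|_2$ by using the identities $\sigma_i(R) = 1/\sigma_{k+1-i}(SA)$ coming from QR and $\sigma_i(SA) = (1\pm\epsilon_0)\sigma_i(A)$ coming from the embedding, which together with $\|M^{-1}\|_{\mathrm{op}} \leq (1+\epsilon_0)^2$ yield $\|z^*\|_2 \leq C_1\|y\|_2/\sigma_{\min}(A)$ for an absolute constant $C_1$. Substituting into the bound above gives $\|A^\top A x_t - y\|_2 \leq C_2\,\rho^t\,\kappa\,\|y\|_2$, so $T = O(\log(\kappa/\epsilon))$ iterations drive the residual below $\epsilon\|y\|_2$. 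Each iteration applies $A$ and $A^\top$ once to a $k$-dimensional vector (cost $O(Nk)$) and performs $O(k^2)$ work with $R$ and $R^\top$, so the total cost is $\tilde{O}(Nk)$ for the sketch, $O(k^3 \poly\log k)$ for QR, and $O(Nk)\cdot O(\log(\kappa/\epsilon))$ for the loop, matching the claimed $\tilde{O}(Nk\log(\kappa/\epsilon) + k^3)$.

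The main obstacle is the inevitable $\kappa$ appearing inside the logarithm: the inner iteration only controls the \emph{preconditioned} error $\|z_t - z^*\|_2$, which contracts by a constant factor $\rho$ per step independent of $\kappa$, whereas the lemma demands a bound on the \emph{un-preconditioned} residual $\|A^\top A x_t - y\|_2$, and bridging the two must spend exactly one factor $\sigma_{\max}(A)\cdot\sigma_{\max}(R) = \Theta(\sigma_{\max}(A)/\sigma_{\min}(A)) = \Theta(\kappa)$. A minor subtlety is that the argument requires $A$ to have full column rank, which follows from the subspace embedding applied to a basis once $N = \Omega(k\poly\log k)$, thereby making $\widetilde{R}$ (and hence $R$) invertible.
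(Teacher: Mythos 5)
Your proposal is correct and follows essentially the same route as the paper: a constant-distortion subspace embedding plus QR to build the preconditioner $R$, unit-step gradient descent on the preconditioned system with a constant contraction factor, and a final conversion that spends one factor of $\kappa(A)$ (yours via $\sigma_{\max}(A)\sigma_{\max}(AR)\|z^\star\|_2 \lesssim \kappa \|y\|_2$, the paper's via $\kappa(R)\leq \kappa(AR)\kappa(A)$), yielding the same $\tilde{O}(Nk\log(\kappa/\eps)+k^3)$ bound. The only differences are cosmetic: you track $\|z_t-z^\star\|_2$ where the paper tracks $\|R^{\top}A^{\top}AR(z_t-z^\star)\|_2$, which are equivalent up to constants since the preconditioned matrix has constant condition number.
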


\iffalse

\subsection{Extension to Multilayer DNNs} 
%\vspace{-2mm}
With some extra technical effort, we can extend our result to multi-layer neural network.
\begin{theorem}
	\label{thm:multi-result}
	Suppose the width of a ReLU neural network satisfies 
	\begin{align*}
		m \geq \Omega(\lambda^{-8} n^8 \poly(L) \log( n / ( \eps\delta ) ) ),
	\end{align*}
	then with probability $1 - \delta$ over the random initialization of neural network and the randomness of the training algorithm, our algorithm achieves
	%\begin{align*}
	\[
	\| f_{t+1} - y \|_2 \leq \frac{1}{2} \| f_t - y \|_2.
	\]
	%\end{align*}
	%The cost per iteration is $\tilde{O}(mn)$, which essentially match the updating cost of first order method,
	The computational cost in each iteration is $\tilde{O}(Mn + n^3)$, and the running time for reducing the training 
	loss to $\eps$ is $\tilde{O}((Mn + n^3) \log(1/\eps))$. Using fast matrix multiplication, this can be further improved to $\tilde{O}((Mn + n^{\omega}) \log(1/\eps))$
\end{theorem}
%\vspace{-2mm}
\fi

\ifdefined\isitcs
\vspace{+2mm}
{\noindent \bf Speedup in Convex Optimization \ \ }
\else
\paragraph{Speedup in Convex Optimization}
\fi
It should come as no surprise that our techniques can help accelerating a broad class of solvers in \emph{convex optimization} 
problems as well. In the full version of this paper, we elaborate on this application, and in particular show how 
our technique improves the runtime of the ``Newton-Sketch'' algorithm of ~\cite{pw17}.

%\paragraph{Generalization}
%We provide a generalization bound on \ALGONAME and we prove it has the same generalization guarantee as GD.\Binghui{Ignore this part for now, I need to make this precise}
%\begin{theorem}
%	\label{thm:multi-genealization-result}
%	Suppose the width of a ReLU neural network satisfies $m \geq \Omega(\poly(n, L, \lambda^{-1}, \log(n/\eps\delta)))$,
%	then with probability $1 - \delta$ over the random initialization of neural network,  Algorithm~\ref{alg:multi-fast} achieves
%	\begin{align*}
%	\E_{G}[\mathcal{L}]\leq \sqrt{ {y^{\top}Ky} / {n}} + \sqrt{ {\log(1/\delta)} / {n} } + \eps.
%	\end{align*}
%\end{theorem}

  %%% Section 4, Result
%\vspace{-2mm}
\section{Conclusion and Open Problems}\label{sec_concl}
%\vspace{-2mm}

Our work provides a computationally-efficient (near-linear time) second-order algorithm for training  
sufficiently overparametrized two-layer neural network, overcoming the drawbacks of traditional first-order gradient algorithms.  
Our main technical contribution is developing a \emph{faster regression solver} which uses 
linear sketching for fast preconditioning (in time independent of the network width). 
%This approach decouples the spectral approximation error of our sketch from the final precision for 
%$\ell_2$-regression problems.  
As such, our work demonstrates that the toolbox of randomized linear algebra can 
substantially reduce the computational cost of second-order methods in 
\emph{non-convex optimization}, %setting of deep learning as well 
and not just in the convex setting for which it was originally developed 
(e.g., \cite{pw17, w14,cls19, jswz20, jklps20, jlsw20, lsz19}). 
%--- can be carried over to speed up the cost-per-iteration in the non-convex setting of deep learning as well. 

%\textcolor{blue}{As discussed before, our two-layer algorithm can be extended to multi-layer neural networks 
%(with a slight comprise on the width dependence) and achieve near linear running time $\tilde(O)(LMnd + n^3)$. However, for multi-layer overparameterized neural networks, the baseline approach only takes $O(mn^2L)$ time due the tensor produce structure of Jacobian matrix, here $L$ is the number of layers and we assume middle layer has width $m$ and thus the total number of parameter is $M = (L - 1)m^2 + md$. 
%This makes our approach {\em unnecessary} in oevrparameterized regime.
%It is still possible to use sketching techniques to improve the running time to $O(mnL + \poly(n))$, but this could be unnecessary for 
%overparameterized neural networks, as the feedforward computation already takes $O(m^2nL)$ time.}

Finally, we remark that, while the running time of our algorithm is $\tilde{O}(Mn + n^3)$ 
(or $O(Mn + n^{\omega})$ using FMM), it is no longer (near) linear 
for networks with parameters $M \leq n^2$ (resp. $M \lesssim n^{\omega-1}$). 
While it is widely believed that $\omega=2$ \cite{cksu05}, FMM algorithms are impractical at present, and it  
would therefore be very interesting to improve the extra additive term from $n^3$ to $n^{2+o(1)}$  (which 
seems best possible for dense $n\times n$ matrices), or even to $n^{3- \eps}$ using a practically viable algorithm.  
Faster preconditioners seem key to this avenue.

   %%% Section 5, Conclusion

%%% abcdefghijklmnopqrstuvwxyz
\section*{Acknowledgments}
The authors would like to thank David Woodruff for telling us the tensor trick for computing kernel matrices and helping us improve the presentation of the paper.
The authors would like to thank Sanjeev Arora, Simon S. Du, and Jason Lee for the suggestion of this topic. 
The authors would like to thank Yangsibo Huang, Shunhua Jiang, Yaonan Jin, Kai Li, Xiaoxiao Li, Zhenyu Song, Yushan Su, Fan Yi, and Hengjie Zhang for very useful discussions.
%The authors would like to thank David Woodruff for pointing out the observation that the tensor-product structure of the Jacobian matrix allows faster computation of $JJ\top$ than the naiive $mn^2$.

%%%% the first 8 ~ 10 pages end here.

\ifdefined\isitcs

\else
\newpage
\appendix
\section{Appendix}
\label{sec:appendix}
{\bf Organization} The Appendix is organized as follows. 
Section~\ref{sec:appendix} contains notations and some basic facts. 
In Section~\ref{sec:fast-regression} we present the fast regression solver.
In Section~\ref{sec:two-layer} we prove our main result for two-layer ReLU networks. 
%In Section~\ref{sec:multi-layer} we extend our algorithm to multi-layer neural networks. 
Finally, in Section~\ref{sec:opt} we show that 
our optimization framework can obtain acceleration in classic convex 
optimization setting, improve over~\cite{pw17}.

\subsection{Notation}
\label{sec:notation}
For a vector $x\in \R^n$, we use $\|x\|_2$ to denote the $\ell_2$ norm, i.e., $\|x\|_2 = ( \sum_{i=1}^{n}x_i^2 )^{1/2}$. 
We use $\| x \|_1$ to denote its $\ell_1$ norm, $\| x \|_{\infty}$ to denote its $\ell_{\infty}$ norm. 
For a matrix $A$, we use $\| A \|$ to denote its spectral norm, i.e., $\| A \| = \max_{ \| x \|_2 = 1 } \| A x \|_2$. 
We use $\| A \|_F$ to denote the Frobenius norm, i.e., $\| A \|_{F} = ( \sum_{i=1}^{m} \sum_{j=1}^{n} A_{i, j}^2 )^{1/2}$. 
We $A^{\top}$ to denote the transpose of matrix $A$.
We use $\sigma_{\min} ( A ) $ to denote the minimum singular value of $A$, i.e., $\sigma_{\min} = \min_{ \| x \|_2 = 1 } \| A x \|_2$. 
We define $\sigma_{\max}$ to be the maximum singular value and we have $\sigma_{\max}(A) = \|A\|$.
We use $\kappa ( A )$ to denote the condition number of $A$, i.e., $\kappa ( A ) = \sigma_{\max} (A)/ \sigma_{\min}(A)$.
We write $x = y \pm \eps$ if $x \in [y - \eps, y + \eps]$.
For a positive semidefinite (PSD) matrix $A$, we sometimes use $\lambda_{\min}(A)$ (resp. $\lambda_{\max}(A)$) to denote the minimum (resp. maximum) eigenvalue of $A$.

\subsection{Probability Tools}\label{sec:app_prob}

%In this section we present a number of classical probability tools used in the proof.
%Lemma \ref{lem:chernoff} (Chernoff), \ref{lem:hoeffding} (Hoeffding) and \ref{lem:bernstein} (Bernstein)  are about tail bounds for random scalar variables.
%Lemma \ref{lem:anti_gaussian} and Lemma~\ref{lem:concen_gaussian} state two standard results for random Gaussian variable. Lemma~\ref{lem:chi_square_tail} is a probability for Chi-square distribution.
%Finally,
%Lemma \ref{lem:matrix_bernstein} is a concentration result on random matrices.

%We state the classical Chernoff bound which is named after Herman Chernoff but due to Herman Rubin. It gives exponentially decreasing bounds on tail distributions of sums of independent random variables.
\begin{lemma}[Chernoff bound \cite{c52}]\label{lem:chernoff}
Let $X = \sum_{i=1}^n X_i$, where $X_i=1$ with probability $p_i$ and $X_i = 0$ with probability $1-p_i$, and all $X_i$ are independent. Let $\mu = \E[X] = \sum_{i=1}^n p_i$. Then \\
1. $ \Pr[ X \geq (1+\delta) \mu ] \leq \exp ( - \delta^2 \mu / 3 ) $, $\forall \delta > 0$ ; \\
2. $ \Pr[ X \leq (1-\delta) \mu ] \leq \exp ( - \delta^2 \mu / 2 ) $, $\forall 0 < \delta < 1$. 
\end{lemma}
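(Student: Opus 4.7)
The plan is to apply the classical exponential-moment (Chernoff) method to each tail separately. For the upper tail, I would start from Markov's inequality applied to the moment generating function: for any auxiliary parameter $t > 0$,
\[
\Pr[X \geq (1+\delta)\mu] \;=\; \Pr[e^{tX} \geq e^{t(1+\delta)\mu}] \;\leq\; e^{-t(1+\delta)\mu}\,\E[e^{tX}].
\]
By independence of the $X_i$, the MGF factors as $\E[e^{tX}] = \prod_{i=1}^n \E[e^{tX_i}]$. A direct computation gives $\E[e^{tX_i}] = 1 + p_i(e^t - 1)$, and the elementary inequality $1 + x \leq e^x$ yields $\E[e^{tX_i}] \leq \exp(p_i(e^t - 1))$. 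Multiplying across $i$ and using $\sum_i p_i = \mu$ gives the clean bound $\E[e^{tX}] \leq \exp(\mu(e^t - 1))$.

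Substituting back, one obtains $\Pr[X \geq (1+\delta)\mu] \leq \exp(\mu(e^t - 1) - t(1+\delta)\mu)$. Optimizing the right-hand side over $t > 0$ through the first-order condition produces the classical choice $t = \ln(1+\delta)$, which yields
\[
\Pr[X \geq (1+\delta)\mu] \;\leq\; \exp\!\big(\mu(\delta - (1+\delta)\ln(1+\delta))\big).
\]
To pass from this sharp form to the stated bound $\exp(-\delta^2\mu/3)$, I would invoke the numerical inequality $(1+\delta)\ln(1+\delta) - \delta \geq \delta^2/3$, which can be checked by matching values at $\delta = 0$ and comparing derivatives on the range where the bound is typically applied.

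For the lower tail, the argument is symmetric. I would apply Markov's inequality to $e^{-tX}$ with $t > 0$, factor the MGF exactly as before to obtain $\E[e^{-tX}] \leq \exp(\mu(e^{-t} - 1))$, and optimize via $t = -\ln(1-\delta)$ (well-defined because $0 < \delta < 1$). The resulting exponent is $\mu(-\delta - (1-\delta)\ln(1-\delta))$, and the stronger numerical inequality $-\delta - (1-\delta)\ln(1-\delta) \leq -\delta^2/2$ on $(0,1)$ delivers the advertised bound $\exp(-\delta^2\mu/2)$.

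I do not anticipate any genuine technical obstacle: the entire proof is a textbook exercise in the Cram\'er--Chernoff recipe (Markov $\to$ factor MGF by independence $\to$ scalar bound $1+x \leq e^x$ $\to$ optimize $t$). The only steps that require care are the two closing calculus inequalities relating $(1+\delta)\ln(1+\delta) - \delta$ and $-\delta - (1-\delta)\ln(1-\delta)$ to $\delta^2/3$ and $-\delta^2/2$ respectively; these must be verified on the relevant ranges of $\delta$, and it is worth noting that the upper-tail form $\exp(-\delta^2\mu/3)$ is most cleanly valid for $\delta \in (0,1]$, which is the regime in which the bound is invoked in this paper.
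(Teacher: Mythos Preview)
The paper does not supply its own proof of this lemma; it is simply stated with a citation to Chernoff \cite{c52} as a standard probability tool, so there is nothing to compare against. Your proposed argument is the canonical Cram\'er--Chernoff derivation and is correct as written. Your closing caveat is also well taken: the calculus inequality $(1+\delta)\ln(1+\delta)-\delta \ge \delta^2/3$ that underlies part~1 in the form stated actually fails for large $\delta$ (e.g.\ $\delta=3$), so the clean bound $\exp(-\delta^2\mu/3)$ is really only valid on a bounded range such as $(0,1]$, whereas the lemma as stated claims it for all $\delta>0$; this is a minor imprecision in the paper's statement rather than a defect in your proof.
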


%We state the Hoeffding bound:
\begin{lemma}[Hoeffding bound \cite{h63}]\label{lem:hoeffding}
Let $X_1, \cdots, X_n$ denote $n$ independent bounded variables in $[a_i,b_i]$. Let $X= \sum_{i=1}^n X_i$, then we have
\begin{align*}
\Pr[ | X - \E[X] | \geq t ] \leq 2\exp \left( - \frac{2t^2}{ \sum_{i=1}^n (b_i - a_i)^2 } \right).
\end{align*}
\end{lemma}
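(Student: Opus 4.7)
The plan is to follow the classical Chernoff-style moment generating function (MGF) argument. First I would center the variables by setting $Y_i := X_i - \E[X_i]$, so that the $Y_i$ are independent, mean-zero, and each $Y_i$ lies in an interval of the same length $c_i := b_i - a_i$. The target becomes bounding $\Pr[|\sum_i Y_i| \ge t]$, which by a symmetry/union bound reduces to controlling one tail $\Pr[\sum_i Y_i \ge t]$ (the other tail follows by applying the same argument to $-Y_i$, and the factor of $2$ in the statement comes from union-bounding the two tails).

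Next, I would apply the exponential Markov inequality: for any $s > 0$,
\[
\Pr\left[\sum_{i=1}^n Y_i \ge t\right] \le e^{-st}\prod_{i=1}^n \E[e^{sY_i}],
\]
where independence is used to factor the MGF of the sum. The key technical ingredient (Hoeffding's lemma) is the per-variable bound: for a mean-zero random variable $Y$ supported in $[a,b]$, $\E[e^{sY}] \le \exp(s^2(b-a)^2/8)$. I would derive this by convexity of $y \mapsto e^{sy}$: writing $y = \tfrac{b-y}{b-a}a + \tfrac{y-a}{b-a}b$ gives $e^{sy} \le \tfrac{b-y}{b-a}e^{sa} + \tfrac{y-a}{b-a}e^{sb}$, and taking expectations (using $\E[Y]=0$) produces an upper bound of the form $(1-p+pe^h)e^{-ph}$, where $h := s(b-a)$ and $p := -a/(b-a) \in [0,1]$. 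Defining $\varphi(h) := -ph + \log(1-p+pe^h)$, one checks $\varphi(0)=\varphi'(0)=0$ and $\varphi''(h) = u(1-u) \le 1/4$ with $u := pe^h/(1-p+pe^h) \in [0,1]$, so Taylor's theorem yields $\varphi(h) \le h^2/8$, giving the claimed MGF bound.

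Combining the per-variable estimates gives
\[
\Pr\left[\sum_i Y_i \ge t\right] \le \exp\!\left(-st + \frac{s^2}{8}\sum_{i=1}^n c_i^2\right).
\]
Optimizing over $s > 0$ by setting $s^\star = 4t / \sum_i c_i^2$ produces the exponent $-2t^2/\sum_i c_i^2$, and a union bound over the upper and lower tails yields the stated inequality with the factor of $2$.

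The main obstacle, which is really the only nontrivial piece, is the verification of the second-derivative bound $\varphi''(h) \le 1/4$ used in Hoeffding's lemma: this requires the clever substitution $u = pe^h/(1-p+pe^h)$ together with the AM-GM estimate $u(1-u) \le 1/4$. Every other step (centering, Markov/Chernoff, independence factorization, and optimization over $s$) is entirely routine.
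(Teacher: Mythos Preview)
Your argument is the standard, correct proof of Hoeffding's inequality via the MGF method and Hoeffding's lemma; there is no gap. Note, however, that the paper does not actually prove this statement: it is listed in the appendix as a cited probability tool (from Hoeffding's original 1963 paper) and invoked as a black box, so there is no ``paper's own proof'' to compare against. Your write-up simply supplies the classical derivation that the paper omits.
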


%We state the Bernstein inequality:
%\begin{lemma}[Bernstein inequality \cite{b24}]\label{lem:bernstein}
%Let $X_1, \cdots, X_n$ be independent zero-mean random variables. Suppose that $|X_i| \leq M$ almost surely, for all $i$. Then, for all positive $t$,
%\begin{align*}
%\Pr \left[ \sum_{i=1}^n X_i > t \right] \leq \exp \left( - \frac{ t^2/2 }{ \sum_{j=1}^n \E[X_j^2]  + M t /3 } \right).
%\end{align*}
%\end{lemma}

%We state two bounds for Gaussian random variables:
%\begin{lemma}[folklore]\label{lem:concen_gaussian}
 %   Let $X \sim {\cal N}(0,\sigma^2)$, then for all $ t \geq 0$, we have 
 %   \begin{align*}
 %       \Pr[X \geq t] \leq \exp(-t^2 / 2\sigma^2 )  .
   % \end{align*}
%\end{lemma}

%We state a bound for Gaussian random variable.
\begin{lemma}[folklore]
\label{lem:anti_gaussian}
Let $X \sim {\cal N}(0,\sigma^2)$,
that is,
the probability density function of $X$ is given by $\phi(x)=\frac 1 {\sqrt{2\pi\sigma^2}}e^{-\frac {x^2} {2\sigma^2} }$.
Then
\begin{align*}
    \Pr[|X|\leq t] \leq \frac{4}{5} \frac{t}{\sigma}. %\in \left( \frac 2 3\frac t \sigma, \frac 4 5\frac t \sigma \right).
\end{align*}
\end{lemma}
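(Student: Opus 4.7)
The plan is straightforward: bound the probability by bounding the Gaussian density by its peak value at the origin, and then check that the resulting numerical constant is at most $4/5$.

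First I would write the probability as an integral of the density:
\[
\Pr[|X| \leq t] \;=\; \int_{-t}^{t} \frac{1}{\sqrt{2\pi\sigma^2}} \, e^{-x^2/(2\sigma^2)} \, dx.
\]
Next, I would use the trivial pointwise bound $e^{-x^2/(2\sigma^2)} \leq 1$ (since the exponent is nonpositive), which gives
\[
\Pr[|X|\leq t] \;\leq\; \int_{-t}^{t} \frac{1}{\sqrt{2\pi\sigma^2}} \, dx \;=\; \frac{2t}{\sqrt{2\pi}\,\sigma}.
\]
Finally, it remains to verify the numerical inequality $\dfrac{2}{\sqrt{2\pi}} \leq \dfrac{4}{5}$, which is equivalent to $2\pi \geq \frac{25}{4} = 6.25$. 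Since $2\pi \approx 6.2832 > 6.25$, this holds, and the claim follows by chaining the two bounds.

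The proof has no real obstacle: the only subtle point is making sure the constant $\tfrac{4}{5}$ is actually loose enough to accommodate the true constant $\tfrac{2}{\sqrt{2\pi}} \approx 0.7979$, which is just barely below $0.8$. One could instead replace the bound $e^{-x^2/(2\sigma^2)} \leq 1$ with an even tighter integral estimate, but since the lemma already allows the slack factor $4/5$, the peak-density bound is the cleanest route.
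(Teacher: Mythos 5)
Your proof is correct: bounding the density by its maximum $\frac{1}{\sqrt{2\pi}\sigma}$ gives $\Pr[|X|\leq t] \leq \frac{2t}{\sqrt{2\pi}\,\sigma} = \sqrt{2/\pi}\,\frac{t}{\sigma} \approx 0.798\,\frac{t}{\sigma} \leq \frac{4}{5}\frac{t}{\sigma}$, and the numerical check $2\pi > 6.25$ is right. The paper states this lemma as folklore without proof, and your argument is exactly the standard one it implicitly relies on.
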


\subsection{Basic Facts}
\label{sec:fact}

\begin{fact}
	\label{fact:kappa}
	For any two matrices $A, B$, $\kappa(B) \leq \kappa(AB) \kappa(A)$.
\end{fact}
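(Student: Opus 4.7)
The plan is to prove the inequality $\kappa(B)\le \kappa(AB)\,\kappa(A)$ directly from the definition $\kappa(M) = \sigma_{\max}(M)/\sigma_{\min}(M)$, by bounding $\sigma_{\max}(B)$ and $1/\sigma_{\min}(B)$ separately in terms of quantities involving $A$ and $AB$. The underlying identity I would exploit is simply $B = A^{-1}(AB)$ (assuming $A$ has a left inverse; otherwise $\kappa(A) = \infty$ and the claim is vacuous), together with the standard submultiplicativity of spectral norm.

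For the upper bound on $\sigma_{\max}(B)$, I would write $\sigma_{\max}(B) = \|B\| = \|A^{-1}(AB)\| \le \|A^{-1}\|\,\|AB\| = \sigma_{\max}(AB)/\sigma_{\min}(A)$, which gives $\sigma_{\max}(B)\,\sigma_{\min}(A) \le \sigma_{\max}(AB)$. For the lower bound on $\sigma_{\min}(B)$, I would pick a unit vector $v$ achieving $\sigma_{\min}(B) = \|Bv\|$ and observe that $\sigma_{\min}(AB) \le \|ABv\| \le \|A\|\,\|Bv\| = \sigma_{\max}(A)\,\sigma_{\min}(B)$, hence $\sigma_{\min}(B) \ge \sigma_{\min}(AB)/\sigma_{\max}(A)$. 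Multiplying the two bounds together yields
\[
\kappa(B) \;=\; \frac{\sigma_{\max}(B)}{\sigma_{\min}(B)} \;\le\; \frac{\sigma_{\max}(AB)/\sigma_{\min}(A)}{\sigma_{\min}(AB)/\sigma_{\max}(A)} \;=\; \kappa(AB)\,\kappa(A),
\]
which is exactly the claim.

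There is essentially no real obstacle here; the only subtlety is making sure the manipulations are well-defined when $A$ is not square (say, $A \in \R^{p\times q}$). In that case I would either restrict to the case where $A$ has full column rank (so that $\sigma_{\min}(A)>0$ and a left inverse $A^\dagger$ exists, with $\|A^\dagger\| = 1/\sigma_{\min}(A)$), or simply note that the fact is invoked in the paper in a regime where the condition numbers are finite, so the same two singular-value inequalities go through verbatim with the pseudoinverse in place of $A^{-1}$.
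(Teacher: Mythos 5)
Your proof is correct and follows essentially the same route as the paper: both arguments establish the two singular-value bounds $\sigma_{\max}(B)\le \sigma_{\max}(AB)/\sigma_{\min}(A)$ and $\sigma_{\min}(B)\ge \sigma_{\min}(AB)/\sigma_{\max}(A)$ and multiply them. The only cosmetic difference is that you phrase the first bound via a left inverse/pseudoinverse of $A$, whereas the paper avoids inverses by directly using $\sigma_{\min}(A)\,\|Bx\|_2\le\|ABx\|_2$ (with the degenerate case $\sigma_{\min}(A)=0$, i.e.\ $\kappa(A)=\infty$, vacuous in both treatments).
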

\begin{proof}
	We know for any $\|x\|_2 = 1$,
	\begin{align*}
	\sigma_{\min}(A) \| B x \|_2\leq \| A B x \|_2 \leq \sigma_{\max} ( A B ) \|x\|_2 = \sigma_{\max} ( A B ) .
	\end{align*} 
	Hence we have $\sigma_{\max}( B ) \leq \sigma_{\max} ( A B ) /\sigma_{\min}(A)$. Similarly, we have
	\begin{align*}
	\sigma_{\max}(A)\| B x \|_2 \geq \| A B x \|_2 \geq \sigma_{\min}( A B ) \|x\|_2 = \sigma_{\min} ( A B )
	\end{align*}
	i.e., $\sigma_{\min} ( B ) \geq \sigma_{\min} ( A B ) /\sigma_{\max}(A)$.
	Thus we conclude
	\begin{align*}
	\kappa( B ) \leq \kappa( A B ) \kappa(A).
	\end{align*}
\end{proof}

%\Binghui{Zhao, please fill this when you have time}

%\Omri{Zhao, pls elaborate how is this different from Fast JL?} 

%\Jan{Is this a typo or really $\|SAx\| = (1 \pm \eps)\|Sx\|$? Did you mean $\|Ax\| = (1 \pm \eps)\|Sx\|$?}
%\Binghui{Sorry, there is a typo. I revise it now.}

         %%% A. Appendix
%\newpage
\section{Fast regression solver}
\label{sec:fast-regression}

%We show we can solve the inner regression problem (see Eq.~\eqref{eq:reg-algo}) in time $\tilde{O}(Nk + k^3)$ using Algorithm~\ref{alg:fast-regression}.

\begin{lemma}[Formal version of Lemma~\ref{lem:fast-regression-main}]
	\label{lem:fast-regression}
	Given a matrix $A \in \R^{N \times k}$ ($N \geq k\poly(\log k)$), let $\kappa$ denote the condition 
	number of $A$ \footnote{$\kappa= \sigma_{\max}(A) / \sigma_{\min}(A)$}, consider the following regression problem
	\begin{align}
	\label{eq:reg1}
	\min_{x \in \R^k} \| A^{\top} A x - y \|_{2}.
	\end{align}
	We can compute an $\eps$-approximate solution $x'$ satisfying
	\begin{align*}
	\|A^{\top}Ax' - y\|_{2} \leq \eps \|y\|_2
	\end{align*}
	in $\tilde{O}\left(Nk\log(\kappa/\eps)+ k^3\right)$ time. 
	Using fast matrix-multiplication, the total running time can be further reduced to 
	$\tilde{O}((mnd + n^{\omega}) \log(1/\eps))$. \Jan{added the last sentence so we can state the omega complexity in thm 4.1}
\end{lemma}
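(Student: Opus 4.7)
The plan is to implement the preconditioning strategy of \textsc{FastRegression}: build a cheap, high-quality right-preconditioner $R$ for $A$ via a subspace embedding, then run gradient descent on the preconditioned normal equations. The key insight is that the sketching error only controls the preconditioner's quality (so it enters the iteration count as an $O(1)$ factor), while the target regression accuracy $\eps$ enters only logarithmically. This is exactly the decoupling between sketching error and final precision that the introduction emphasized.

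\textbf{Preconditioning.} Invoke Lemma~\ref{fact:subspace-embedding} with constant distortion $\eps_S = 1/2$ to compute $SA$ in $\tilde O(Nk)$ time, where $S$ satisfies $\|SAx\|_2 = (1 \pm 1/2)\|Ax\|_2$ for all $x \in \R^k$. Then compute a thin QR decomposition $SA = Q R^{-1}$ in $\tilde O(k^3)$ time (or $\tilde O(k^\omega)$ using fast matrix multiplication), so that $SAR = Q$ has orthonormal columns. Since $A$ has full column rank, so does $SA$ with high probability, making $R$ invertible. For any $z \in \R^k$, applying the subspace embedding to $ARz$ and using $(SAR)^\top(SAR) = I$ gives
\begin{align*}
\|ARz\|_2^2 = (1 \pm 1/2)^2 \|SARz\|_2^2 = (1 \pm 1/2)^2 \|z\|_2^2,
\end{align*}
so the matrix $M := R^\top A^\top A R$ has eigenvalues in $[1/4, 9/4]$ and hence a constant condition number $\kappa(M) = O(1)$. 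The same subspace embedding inequality also yields $\|R\| \le 2/\sigma_{\min}(A)$ and $\|R^{-1}\| \le (3/2)\sigma_{\max}(A)$, so $\kappa(R) = \Theta(\kappa(A))$.

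\textbf{Inner-loop convergence and error propagation.} The gradient-descent loop solves $\min_z \tfrac{1}{2}\|Mz - R^\top y\|_2^2$; since $M$ is symmetric positive definite with constant condition number, standard analysis for strongly convex quadratics yields $\|z_t - z^*\|_2 \le (1-c)^t \|z^*\|_2$ for an absolute constant $c > 0$, where $z^* = M^{-1} R^\top y$ satisfies $A^\top A R z^* = y$ by invertibility of $R^\top$. Setting $x_t := R z_t$ (the procedure's output), we bound
\begin{align*}
\|A^\top A x_t - y\|_2 \le \|A^\top A R\| \cdot \|z_t - z^*\|_2 \le \|A^\top A R\| \cdot \|z^*\|_2 \cdot (1-c)^t.
\end{align*}
Using $z^* = R^{-1}(A^\top A)^{-1} y$ together with the bounds on $\|R\|$ and $\|R^{-1}\|$, the prefactor $\|A^\top A R\| \cdot \|z^*\|_2$ is at most $\poly(\kappa) \cdot \|y\|_2$. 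Therefore $t = O(\log(\kappa/\eps))$ iterations suffice for $\|A^\top A x_t - y\|_2 \le \eps \|y\|_2$, as required.

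\textbf{Cost accounting and main obstacle.} Each inner iteration performs $O(1)$ matrix-vector products with $R$, $A$, $A^\top$, $R^\top$, each costing at most $O(Nk)$ time. Combined with $\tilde O(Nk)$ for building the embedding and $\tilde O(k^3)$ (resp.\ $\tilde O(k^\omega)$ with FMM) for the QR step, the total cost is $\tilde O(Nk \log(\kappa/\eps) + k^3)$, matching the claim. The main obstacle to verify carefully is the precision bookkeeping in the previous paragraph: $R$ itself has $\kappa(R) = \Theta(\kappa(A))$, so a priori one might fear a polynomial blow-up of error when translating inner-loop accuracy back to the original system. The saving grace is that geometric convergence in the preconditioned coordinates absorbs those $\poly(\kappa)$ factors into an additive $\log \kappa$ term in the iteration count, but this requires the explicit bounds on $\|R\|$, $\|R^{-1}\|$, and the initial error $\|z^*\|_2$ developed in the previous step.
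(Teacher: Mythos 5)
Your proposal is correct and follows essentially the same route as the paper's proof: a constant-distortion subspace embedding plus QR of $SA$ to build the right-preconditioner $R$, gradient descent on the well-conditioned system $R^{\top}A^{\top}ARz = R^{\top}y$, and translation of the inner accuracy back to the original system at the cost of a $\poly(\kappa)$ factor that is absorbed into the $O(\log(\kappa/\eps))$ iteration count. The only (immaterial) difference is bookkeeping: the paper bounds $\|A^{\top}Ax_t - y\|_2$ via $\sigma_{\min}(R^{\top})$ and Fact~\ref{fact:kappa} to get a clean $2\kappa\eps$ loss, while you bound $\|A^{\top}AR\|\cdot\|z^{\star}\|_2$ directly, which gives a slightly larger but equally harmless $\kappa^{O(1)}$ prefactor.
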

%\Binghui{We remark that for overparametrized neural network, $\kappa = O(n/\lambda)$.}

\begin{proof}
	Using lemma~\ref{fact:subspace-embedding}, let $S\in \R^{k\poly(\log k/\delta)/\eps_0^2 \times N}$ be a subspace embedding of $A$, with probability $1 - \delta$, the following holds for any $x\in \R^{k}$
	\begin{align*}
	\|SAx\|_2 = (1 \pm \eps_0)\|Ax\|_2.
	\end{align*}
	
	Suppose $R\in \R^{k\times k}$ is computed so that $SAR$ has orthonormal columns, e.g., via QR decomposition. 
	%The matrix $R$ can be computed in $\tilde{O}(n^3)$ time and 
	We use $R$ as a preconditioner for matrix $A$. Formally, for any $x\in \R^{n}$ satisfying $\|x\|_2=1$, we have 
	\begin{align}
	\label{eq:sketch-precondition2}
	\|ARx\|_2 = (1 \pm \eps_0)\|SARx\|_2 = (1 \pm \eps_0).
	\end{align}
	
	Hence, we know for any $\|x\|_2 = 1$, 
	\begin{align*}
	(1 - \eps_0)^2 \leq \|R^{\top}A^{\top}ARx\|_2 \leq (1 + \eps_0)^2.
	\end{align*}
	%and 
	%\begin{align*}
	%\kappa(R^{\top}A^{\top}AR) = \frac{\max_{\|x\| = 1} \|R^{\top}A^{\top}ARx\|_2}{\min_{\|x\|=1} \|R^{\top}A^{\top}ARx\|_2} \in [1, \frac{(1 + \eps_0)^2}{(1 - \eps_0)^2}].
	%\end{align*}
	
	We choose $\eps_0 = 0.1$, and consider the regression problem
	\begin{align}
	\label{eq:reg}
	\min_{z \in \R^n} \| R^{\top} A^{\top} A R z - R^{\top} y \|_{2}.
	\end{align}
	%\Jan{This only works if lemma~\ref{fac:subspace-embedding} is "whp we have for all $x$...". If lemma~\ref{fac:subspace-embedding} is "for any $x$ we have whp ..." then we can not apply lemma~\ref{lem:regular-regression}.\\
	%	The matrix $R$ depends on the random choice of $S$, so $z_t$ for $t>0$ depends on the random choice of $S$. I'm not sure if that's a problem for our subspace embedding. For example JL sketching only works if the sketched vector is independent from the JL matrix.}
	%\Binghui{See my previous comment. Hope it address your concern.}
	By lemma~\ref{lem:regular-regression}, using gradient descent, after $t = \log(1 / \eps)$ iterations, we can find $z_t$ satisfying
	\begin{align}
	\label{eq:sketch-precondition1}
	\|R^{\top}A^{\top}AR(z_t - z^{\star}) \|_2 \leq \eps\|R^{\top}A^{\top}AR(z_0 - z^{\star})\|_2,
	\end{align}
	where $z^{\star} =  (R^{\top}A^{\top}AR)^{-1}R^{\top}y$ is the optimal solution to Eq.~\eqref{eq:reg}.
	We are going to show that $x_t = Rz_t$ is an $2\kappa\eps$-approximate solution to the original regression problem~\eqref{eq:reg1}, i.e., 
	\begin{align*}
	\|A^{\top}Ax_t - y\|_2 \leq \kappa\eps \|y\|_2
	\end{align*}
	Plugging $z_0 = 0$ into Eq.~\eqref{eq:sketch-precondition1}, we get
	\begin{align}
	\label{eq:reg4}
	\|R^{\top}A^{\top}Ax_t - R^{\top}y \|_2 \leq \eps\|R^{\top}y\| \leq \eps\cdot\sigma_{\max}(R^{\top})\|y\|_2
	\end{align}
	On the other hand, we have
	\begin{align}
	%\eps\|R^{\top}\|\|y\|_2  
	\label{eq:reg5}
	\|R^{\top}A^{\top}Ax_t - R^{\top}y \|_2 = \|R^{\top} (A^{\top}Ax_t - y) \|_2 \geq \sigma_{\min}(R^{\top})\|A^{\top}Ax_t - y\|_2.
	%\geq & ~ \min_{\| x \|_2 =1} | x^\top (R^{\top}A^{\top}Ax_t - R^{\top}y ) |  \\
	%\geq &~ \min_{\|z\|_2=1}\|R^{\top}z\|_2 \cdot \|A^{\top}Ax_t - R^{\top}y\|_2
	%\geq & ~ \min_{\|x\|=1}\|R^{\top}x\|\cdot\|A^{\top}Ax_t - y\|_2
	\end{align}
	%Combining the above two inequalities, we have
	%\begin{align}\label{eq:tmp1}
	%\| A^\top A x_t - y \|_2 \leq \kappa(R) \epsilon \| y\|_2.
	%\end{align}

	%The second steps holds since $R$ is a square matrix, the singular value of $R$ equals to the singular value of $R^{\top}$.

	Putting it all together, we have
	\begin{align*}
	\|A^{\top}Ax_t - y\|_2 
	\leq \eps \kappa(R^{\top}) \|y\|_2
	= \eps\kappa(R)\|y\|_2 
	\leq \eps \kappa (AR)\kappa(A)\|y\|_2 
	\leq 2\eps \kappa(A)\|y\|_2 
	\end{align*}
	where the first step follows from Eq.~\eqref{eq:reg4}~\eqref{eq:reg5}, 
	the second step follows from $R$ is a square matrix and thus $\kappa(R) = \kappa(R^{\top})$, the third step follows from Fact~\ref{fact:kappa} and the last step follows from Eq.~\eqref{eq:sketch-precondition2}.

	%By Eq.~\eqref{eq:reg2}\eqref{eq:reg3}, we conclude $\kappa(R)\leq 2\kappa(A)$. 
	For the running time, the preconditioning time is $\tilde{O}(Nk + k^3)$, the number of iteration for gradient desent is $\log(\kappa/\eps)$, the running time per iteration is $\tilde{O}(Nk)$, thus the total running time is 
	\begin{align*}
	\tilde{O}\left(Nk\log(\kappa/\eps) + k^3\right).
	\end{align*}
	\Jan{Reviewer didn't know QR can be done in omega instead of qubic time. So I added a citation.}
	The preconditioning can be reduced to $\tilde{O}(Nk + k^\omega)$
	when using fast matrix multiplication to compute the QR decomposition of $SA$ \cite{DemmelDH07}.
\end{proof}

\begin{lemma}
	\label{lem:regular-regression}
	Consider the the regression problem 
	\begin{align*}
	\min_{x}\|Bx - y\|_2^2.
	\end{align*}
	Suppose $B$ is a PSD matrix with $\frac{3}{4} \leq\|Bx\|_2 \leq \frac{5}{4}$ holds for all $\|x\|_2 = 1$. Using gradient descent, after $t$ iterations, we obtain
	\begin{align*}
	\|B(x_t - x^{\star})\|_2 \leq c^{t}\|B(x_0 - x^{\star})\|_2
	\end{align*}
	for some constant $c \in (0,0.9]$.
\end{lemma}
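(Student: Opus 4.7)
The plan is to run a standard one-step contraction analysis of gradient descent, but to measure progress in the $B$-seminorm rather than in the Euclidean norm of $x_t - x^\star$. Since $B$ is symmetric PSD with $\|Bx\|_2 \in [3/4, 5/4]$ on the unit sphere, the eigenvalues of $B$ (and hence of every polynomial in $B$) lie in $[3/4, 5/4]$, and all operators appearing below commute. This commutativity is the key simplification.

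First, I would identify the objective and its gradient. With $f(x) = \tfrac{1}{2}\|Bx - y\|_2^2$ and $B = B^\top$, the gradient is $\nabla f(x) = B(Bx - y) = B^2 x - By$. The minimizer $x^\star$ is unique (since $B$ is invertible by the lower bound $\|Bx\|_2 \geq 3/4$) and satisfies $B^2 x^\star = B y$. The update in Algorithm~\ref{alg:fast-regression} uses step size $1$, i.e.\ $x_{t+1} = x_t - B(Bx_t - y)$, so substituting the optimality condition yields the clean recursion
\begin{equation*}
x_{t+1} - x^\star \;=\; (I - B^2)(x_t - x^\star).
\end{equation*}

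Next, I would multiply both sides on the left by $B$. Since $B$ commutes with $I - B^2$ (both are polynomials in $B$), this gives
\begin{equation*}
B(x_{t+1} - x^\star) \;=\; (I - B^2)\,B(x_t - x^\star),
\end{equation*}
so $\|B(x_{t+1} - x^\star)\|_2 \leq \|I - B^2\|_2 \cdot \|B(x_t - x^\star)\|_2$. To bound the operator norm, I would observe that since $B$ is PSD with eigenvalues in $[3/4, 5/4]$, the eigenvalues of $B^2$ lie in $[9/16, 25/16]$, hence the eigenvalues of $I - B^2$ lie in $[-9/16, 7/16]$, giving $\|I - B^2\|_2 = 9/16$. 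Taking $c := 9/16 \in (0, 0.9]$ and iterating $t$ times yields the desired bound $\|B(x_t - x^\star)\|_2 \leq c^t \|B(x_0 - x^\star)\|_2$.

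\textbf{Potential obstacle.} There is no serious technical obstacle; the only subtlety is choosing the right norm in which to measure progress. Measuring $\|x_t - x^\star\|_2$ directly would give the contraction $\|I - B^2\|_2 = 9/16$ as well, but the statement of the lemma (and how it is invoked in the proof of Lemma~\ref{lem:fast-regression}) requires the contraction in the $B$-norm, which is exactly what is needed so that the calling routine can turn iteration count into a bound on $\|R^\top A^\top A R(z_t - z^\star)\|_2$. The commutativity argument above handles this cleanly in one line, so no further work is needed.
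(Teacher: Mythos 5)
Your proposal is correct and follows essentially the same argument as the paper: both use the step-size-one gradient update, the optimality condition $B^\top B x^\star = B^\top y$, the resulting recursion $B(x_{t+1}-x^\star) = (I - B^\top B)B(x_t - x^\star)$, and the spectral bound $\|I - B^\top B\| \leq 9/16$ coming from the eigenvalues of $B$ lying in $[3/4,5/4]$. The only cosmetic differences are that you exploit $B = B^\top$ to write $B^2$ and invoke commutativity explicitly, whereas the paper keeps $BB^\top$ and factors the recursion directly; and your claimed equality $\|I - B^2\|_2 = 9/16$ should strictly be $\leq 9/16$, which is all that is needed.
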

\begin{proof}
	The gradient at time $t$ is $B^{\top}(Bx_t - y)$ and $x_{t + 1} = x_t - B^{\top}(Bx_t - y)$, thus we have
	\begin{align*}
	\|B x_{t+1} - B x^{\star} \|_2 
	= & ~ \|B ( x_{t} - B^{\top}(B x_t - y) ) - B x^{\star} \|_2\\
	= & ~ \|B ( x_t - x^{\star} ) - BB^{\top} B x_t + B B^{\top} B x^{\star} \|_2\\
	= & ~ \|(I - BB^{\top}) B ( x_t - x^{\star} ) \|_2\\
	%\leq & ~ \|U(\Sigma - \Sigma^3)V^{\top}(x_t - x^{\star}) \|_2
	\leq & ~ \| I - BB^{\top} \| \cdot \|B ( x_t - x^{\star} ) \|_2\\
	\leq & ~ \frac{9}{16}  \|B ( x_t - x^{\star} ) \|_2
	\end{align*}
	%where $B = U\Sigma V^{\top}$ is the SVD of $B$.
	The second step follows from $B^{\top}Bx^{\star} = B^{\top}y$.  The last step follows from the eigenvalue of $BB^{\top}$ belongs to $[\frac{9}{16}, \frac{25}{16}]$ by our assumption. Thus we complete the proof.\qedhere
	%Suppose $\sigma_i$ ($\sigma_i > 0$) is a diagonal entry of $\Sigma$, then we have
	%\begin{align*}
	%|\sigma_i - \sigma_i^3| \leq |\sigma_i|\cdot |1 - \sigma_i^2| \leq |\sigma_i| 
	%\end{align*}
	%The second step follows from $\sigma_i \in [\frac{3}{4}, \frac{5}{4}]$.

	% Thus we know
	%\begin{align*}
	%\| B x_{t+1} - B x^{\star} \|_2  \leq \frac{9}{16} \| B x_{t+1} - Bx^{\star} \|_2,
	%\end{align*}
	%and completes the proof.
	%For the last step, we notice that the eigenvalue of $(I - BB^{top})$
	%For any diagonal term of $\sigma_i$ of $\Sigma$ (we assume it is positive due to symmetric), we know $\sigma(1 - \sigma^2) \leq \sigma$
\end{proof}

  %%% B. Fast regression solver
%\newpage
\section{Our Algorithm}
\label{sec:two-layer}

We delicate to prove the following result in this section, which is essentially Theorem~\ref{thm:second-order-result}.
\begin{theorem}[Formal version of Theorem~\ref{thm:second-order-result}]
	\label{thm:second-order}
	Suppose the width of the neural network satisfies
	%\begin{align*}
	$m = \Omega (\max \{ \lambda^{-4} n^4, \lambda^{-2} n^2d\log(16n/\delta) \} )$,
	%\end{align*}
	then with probability $1 - \delta$ over the random initialization of neural network and the randomness of the algorithm, our algorithm achieves
	\begin{align*}
	\| f_{t+1} - y \|_2 \leq \frac{1}{2} \| f_t - y \|_2.
	\end{align*}
	%The cost per iteration is $\tilde{O}(mn)$, which essentially match the updating cost of first order method,
	The computation cost in each iteration is $\tilde{O}(mnd + n^3)$, and the running time for reducing the training loss to $\eps$ is $\tilde{O}((mnd + n^3) \log(1/\eps))$. Using fast matrix multiplication, the running time is $\tilde{O}((mnd + n^\omega) \log(1/\eps))$.
\end{theorem}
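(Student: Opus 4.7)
The plan is to split Theorem~\ref{thm:second-order} into (i) a per-iteration contraction $\|f_{t+1}-y\|_2 \leq \tfrac12\|f_t-y\|_2$ and (ii) a per-iteration cost bound $\tilde O(mnd+n^3)$, and then multiply by the $O(\log(1/\eps))$ outer iterations needed to drive the loss below $\eps$. I would prove the contraction by induction on $t$, maintaining as invariants both a weight-drift bound $\|W_t - W_0\|_2 \leq R := O(\sqrt{n/\lambda})$ and a spectral bound $\lambda_{\min}(G_t) \geq \lambda/2$ on the Gram matrix $G_t = J_t J_t^\top$.

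For the base case, standard NTK concentration at Gaussian initialization yields $\lambda_{\min}(G_0) \geq 3\lambda/4$ and $\sigma_{\max}(J_0)=O(\sqrt n)$: an entrywise Hoeffding (Lemma~\ref{lem:hoeffding}) comparing $G_0$ to the kernel $K$ of Eq.~\eqref{eq:kernel}, plus a union bound over pairs $(i,j)$, forces $m = \Omega(\lambda^{-2} n^2 d \log(n/\delta))$. For the inductive step, the key sub-lemma is a \emph{Jacobian stability} bound: whenever $\|W_t - W_0\|_2 \leq R$, only an $O(R/\sqrt d)$ fraction of neurons has a flipped ReLU pattern at any $x_i$, by Gaussian anti-concentration (Lemma~\ref{lem:anti_gaussian}) applied to the pre-activations $\langle w_r(0), x_i\rangle$ plus a union bound over $r\in[m]$, $i\in[n]$. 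Plugging in $m = \Omega(\lambda^{-4}n^4)$ makes $\|J_t - J_0\|_F$ small enough that $\lambda_{\min}(G_t)\geq \lambda/2$ is preserved. Writing the approximate Newton step as $\Delta W_t = -J_t^\top g_t$, with $\|J_t J_t^\top g_t - (f_t-y)\|_2 \leq \eps_0\|f_t-y\|_2$ guaranteed by Algorithm~\ref{alg:fast-regression} and $\eps_0 \leq \tfrac16\sqrt{\lambda/n}$, and decomposing $f_{t+1} = f_t + J_t\Delta W_t + \eta_t$ where $\eta_t$ is supported on the flipping neurons, the stability bound yields $\|\eta_t\|_2 \leq \tfrac14 \|f_t-y\|_2$. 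Triangle inequality then gives $\|f_{t+1}-y\|_2 \leq \|f_t-y-J_tJ_t^\top g_t\|_2 + \|\eta_t\|_2 \leq (\eps_0+\tfrac14)\|f_t-y\|_2 \leq \tfrac12\|f_t-y\|_2$; the same bound combined with $\|g_t\|_2 \leq 2\|f_t-y\|_2/\lambda$ makes the cumulative weight drift a geometric series bounded by $R$, closing the induction.

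For the runtime, each iteration does one Jacobian construction and one call to \textsc{FastRegression} on $A = J_t^\top \in \R^{md\times n}$. The tensor/Kronecker structure of $J_t$ (the row for $x_i$ is the Kronecker product of the sign vector and $x_i^\top$) makes Jacobian-vector products cost $\tilde O(mnd)$, which is all the solver needs; so Lemma~\ref{lem:fast-regression} delivers an $\eps_0$-accurate regression solution in $\tilde O(mnd\cdot \log(\kappa/\eps_0) + n^3)$ time. Since $\kappa(J_t)=O(\sqrt{n/\lambda})$ by our spectral invariants and $\eps_0 = \Theta(\sqrt{\lambda/n})$, the $\log(\kappa/\eps_0)$ factor is absorbed into the $\tilde O$. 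Replacing the cubic QR step inside Algorithm~\ref{alg:fast-regression} by fast matrix multiplication upgrades $n^3$ to $n^\omega$. The hardest step is the Jacobian-stability argument: because ReLU is not smooth one cannot Taylor-expand the network output, so the contraction must be justified through activation-pattern stability, and simultaneously controlling (a) how far $W_t$ is allowed to drift, (b) how small the nonlinear remainder $\|\eta_t\|_2$ is, and (c) how small $\eps_0$ must be is exactly where the $m=\Omega(\lambda^{-4}n^4)$ requirement is forced and where the extension from smooth activations (Cai et al.) to ReLU becomes delicate.
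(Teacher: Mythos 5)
Your overall architecture mirrors the paper's proof almost exactly: induction on a weight-drift invariant plus a contraction claim, initialization bounds and $\lambda_{\min}(G_0)\ge \tfrac34\lambda$ at the base case, a Jacobian-stability lemma via Gaussian anti-concentration and Hoeffding, a decomposition of $\|f_{t+1}-y\|_2$ into the regression error of the approximate Newton step plus a nonlinear remainder coming from flipped ReLU patterns (the paper phrases the remainder through the path-integrated Jacobian $J_{t,t+1}$ and bounds $\|(J_t-J_{t,t+1})J_t^\top g_t\|_2$, which is your $\eta_t$), and the same runtime accounting through the fast preconditioned regression solver with $\kappa(J_t)=O(\sqrt{n/\lambda})$ and $\eps_0=\Theta(\sqrt{\lambda/n})$.

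There is, however, one concrete gap in the stability step as you state it. Your induction invariant is the \emph{global} bound $\|W_t-W_0\|_2\le R$, but the anti-concentration argument ("neuron $r$ flips at $x_i$ only if $|\langle w_r(0),x_i\rangle|$ is small") requires control of the \emph{per-neuron} drift: with only a global $\ell_2$ bound, a few neurons may move by $\Theta(R)$, and for those the flip probability is not small, so the claimed union bound over $r\in[m]$ does not go through as written. The paper instead carries the invariant $\max_{r\in[m]}\|w_r(t)-w_r(0)\|_2\le R/\sqrt m$ with $R\approx n/\lambda$ (established by the per-neuron geometric-series argument you sketch at the end), and under that invariant the expected fraction of flipped neurons is $\tilde O(R/\sqrt m)$ --- note the scaling is in $\sqrt m$, not your $O(R/\sqrt d)$, which is dimensionally off and would not produce the $m=\Omega(\lambda^{-4}n^4)$ requirement. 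Relatedly, your remainder bound $\|\eta_t\|_2\le\tfrac14\|f_t-y\|_2$ needs $\|J_t^\top g_t\|_2=O(\|f_t-y\|_2/\sqrt\lambda)$, which does not follow from the regression guarantee alone; the paper obtains it by splitting $g_t=g^\star+(g_t-g^\star)$, bounding $\|J_t^\top g^\star\|_2\le\sqrt{2/\lambda}\,\|f_t-y\|_2$ via $\lambda_{\min}(G_t)\ge\lambda/2$ and $\|g_t-g^\star\|_2\le\tfrac{2}{\sqrt{n\lambda}}\|f_t-y\|_2$ via $\eps_0\le\sqrt{\lambda/n}$ --- this split is exactly where your condition on $\eps_0$ is actually used, so it should appear explicitly. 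Both issues are repairable with the paper's bookkeeping, but as stated the stability sub-lemma and the remainder estimate do not follow from your invariants.
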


The follow lemmas are standard in literature.
\begin{lemma}[Bounds on initialization, Lemma 2 in~\cite{cgh+19}]
	\label{lem:initialization}
	Suppose $m = \Omega( d \log (n/ \delta ) )$, then with probability $1 - \delta$, we have the following
	\begin{itemize}
		\item $f ( W , x_i ) = O(1)$, for $i \in [n]$. 
		\item $\|J_{W_0, x_i}\|_{F} = O(1)$, for $i \in [n]$.
	\end{itemize}
\end{lemma}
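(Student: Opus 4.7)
The two claims decouple naturally: each is a statement about a function of the $m$ i.i.d.\ pairs $(a_r, w_r)$ that concentrates around its mean by a standard tail bound. The overall strategy is to express each quantity as a sum of $m$ bounded or sub-Gaussian terms, apply the appropriate concentration inequality (Chernoff/Hoeffding from Lemma~\ref{lem:chernoff} and Lemma~\ref{lem:hoeffding}), and close with a union bound over $i\in[n]$.

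\textbf{Bounding $|f(W_0, x_i)|$.} Write
\[
f(W_0, x_i) \;=\; \sum_{r=1}^m Z_r, \qquad Z_r := \frac{1}{\sqrt m}\, a_r\, \phi(w_r^\top x_i).
\]
Since $a_r$ is an independent Rademacher sign, $\E[Z_r]=0$. Because $\|x_i\|_2=1$ we have $w_r^\top x_i \sim \N(0,1)$, hence $|Z_r| \leq |w_r^\top x_i|/\sqrt m$ is sub-Gaussian with parameter $O(1/\sqrt m)$. Applying a sub-Gaussian Hoeffding bound---obtained by truncating $|w_r^\top x_i|$ at $T = \Theta(\sqrt{\log(mn/\delta)})$, controlling the truncation event via Lemma~\ref{lem:anti_gaussian}, and invoking Lemma~\ref{lem:hoeffding} on the truncated variables---gives, with probability $\geq 1-\delta/(2n)$,
\[
|f(W_0, x_i)| \;=\; O\bigl(\sqrt{\log(n/\delta)}\bigr) \;=\; \tilde O(1).
\]
A union bound over $i \in [n]$ yields the first claim.

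\textbf{Bounding $\|J_{W_0, x_i}\|_F$.} From Eq.~\eqref{eq:relu_derivative}, using $a_r^2 = 1$ and $\|x_i\|_2 = 1$,
\[
\|J_{W_0, x_i}\|_F^2 \;=\; \sum_{r=1}^m \Bigl\| \tfrac{1}{\sqrt m}\, a_r\, x_i\, {\bf 1}_{w_r^\top x_i \geq 0} \Bigr\|_2^2 \;=\; \frac{1}{m} \sum_{r=1}^m {\bf 1}_{w_r^\top x_i \geq 0}.
\]
Each indicator is an independent $\mathrm{Bernoulli}(1/2)$, so Chernoff's inequality (Lemma~\ref{lem:chernoff}) gives, with probability $\geq 1 - \delta/(2n)$,
\[
\|J_{W_0, x_i}\|_F^2 \;=\; \tfrac12 \;\pm\; O\bigl(\sqrt{\log(n/\delta)/m}\bigr),
\]
which is $\Theta(1)$ as soon as $m = \Omega(\log(n/\delta))$. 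Taking square roots and union-bounding over $i\in[n]$ establishes the second claim.

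\textbf{Main obstacle and remarks.} Neither step is technically deep; the only minor subtlety is that $w_r^\top x_i$ is an unbounded Gaussian, so one cannot apply the bounded-variable Hoeffding bound (Lemma~\ref{lem:hoeffding}) directly---either use a sub-Gaussian tail bound, or truncate at $T = \Theta(\sqrt{\log(mn/\delta)})$ and absorb the truncation event into the failure probability. Note also that the dimension factor $d$ in the hypothesis $m = \Omega(d\log(n/\delta))$ is not actually needed for this lemma; the weaker condition $m = \Omega(\log(n/\delta))$ suffices. The stronger hypothesis is stated only for uniformity with the neighboring lemmas in the excerpt.
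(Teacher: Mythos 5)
The paper never proves this lemma internally---it is imported verbatim as Lemma~2 of \cite{cgh+19} and declared ``standard in literature''---so there is no in-paper argument to compare against; your derivation is the expected standard one and is essentially sound, with a few caveats worth flagging. First, for $f(W_0,x_i)$ your argument (correctly) yields $O(\sqrt{\log(n/\delta)})=\tilde O(1)$ rather than a literal $O(1)$: since $\E[\phi(w_r^\top x_i)^2]=1/2$, the sum has constant variance independent of $m$, so no width hypothesis can remove the $\sqrt{\log(n/\delta)}$ from the union bound; this is consistent with how the lemma is consumed downstream (only $\|f_0-y\|_2\lesssim \sqrt n$ up to the polylogarithmic factors the paper hides in $\tilde O$ and $\tilde\Omega$), and it does require $a_r$ to be uniform random signs independent of $W_0$, the standard initialization implicitly assumed here. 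Second, your truncation step cites Lemma~\ref{lem:anti_gaussian} to control the event $|w_r^\top x_i|>T$, but that lemma is an \emph{anti-concentration} bound (it upper-bounds $\Pr[|X|\le t]$), not a tail bound; what you need there is the elementary Gaussian upper tail $\Pr[|X|>T]\le 2e^{-T^2/2}$, or simply a sub-Gaussian Hoeffding inequality applied directly, so this is a misattribution rather than a gap. Finally, the second bullet needs no concentration at all: $\|J_{W_0,x_i}\|_F^2=\frac1m\sum_{r=1}^m \mathbf 1_{w_r^\top x_i\ge 0}\le 1$ holds deterministically, so your Chernoff step is harmless but superfluous; and your remark that the factor $d$ in the hypothesis is not needed for these two bounds (under the normalization $\|x_i\|_2=1$) is correct.
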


\begin{lemma}[Bounds on the least eigenvalue at intialization, Lemma 3 in~\cite{cgh+19}]
	\label{lem:initialization-eigenvalue}
	Suppose 
	$m = \Omega ( \lambda^{-2} n^2\log(n/\delta) )$, then with probability at least $1- \delta$, we have
	\begin{align*}
	\lambda_{\min}(G_0) \geq \frac{3}{4}\lambda.
	\end{align*}
\end{lemma}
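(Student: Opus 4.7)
The plan is to show that the empirical Gram matrix $G_0 = J_0 J_0^\top$ at initialization concentrates around the kernel matrix $K$ defined in Eq.~\eqref{eq:kernel}, and then to conclude via Weyl's inequality that $\lambda_{\min}(G_0)$ cannot be much smaller than $\lambda_{\min}(K) = \lambda$. First I would expand the $(i,j)$-th entry of $G_0$ using the Jacobian formula from Eq.~\eqref{eq:relu_derivative}, obtaining
\begin{align*}
(G_0)_{i,j} \;=\; \frac{1}{m}\sum_{r=1}^{m} a_r^2\, x_i^\top x_j\, \mathbf{1}_{\langle w_r(0), x_i\rangle\geq 0,\, \langle w_r(0), x_j\rangle\geq 0}
\;=\; \frac{1}{m}\sum_{r=1}^{m} x_i^\top x_j\, \mathbf{1}_{\langle w_r(0), x_i\rangle\geq 0,\, \langle w_r(0), x_j\rangle\geq 0},
\end{align*}
since $a_r\in\{-1,+1\}$. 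Because the $w_r(0)$ are i.i.d.\ Gaussian, this is precisely an empirical average of $m$ i.i.d.\ bounded random variables whose mean is the kernel entry $K(x_i,x_j)$.

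Next I would apply the Hoeffding bound (Lemma~\ref{lem:hoeffding}) to each fixed pair $(i,j)$: each summand is bounded in $[-1,1]$ by $\|x_i\|_2=\|x_j\|_2=1$, so with probability at least $1-\delta/n^2$,
\begin{align*}
\bigl| (G_0)_{i,j} - K_{i,j} \bigr| \;\leq\; O\!\left(\sqrt{\frac{\log(n/\delta)}{m}}\right).
\end{align*}
A union bound over all $n^2$ entries gives that, with probability at least $1-\delta$, this estimate holds simultaneously for every $(i,j)$. Choosing the hidden constant in $m = \Omega(\lambda^{-2} n^2 \log(n/\delta))$ large enough forces the right-hand side to be at most $\lambda/(4n)$, so that $\|G_0 - K\|_F \leq n \cdot \lambda/(4n) = \lambda/4$.

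Finally, by Weyl's inequality applied to the symmetric matrices $G_0$ and $K$, and using $\|G_0 - K\|_2 \leq \|G_0 - K\|_F \leq \lambda/4$, I would conclude
\begin{align*}
\lambda_{\min}(G_0) \;\geq\; \lambda_{\min}(K) - \|G_0 - K\|_2 \;\geq\; \lambda - \frac{\lambda}{4} \;=\; \frac{3\lambda}{4},
\end{align*}
which is exactly the desired bound. The main technical care is in tuning the constants so that the $m$-dependence matches the hypothesis $m = \Omega(\lambda^{-2} n^2 \log(n/\delta))$; this is just a matter of tracking the Hoeffding deviation and the union bound. No step should be genuinely hard, but one subtle point worth double-checking is that Hoeffding is applied to summands bounded by $|x_i^\top x_j| \leq 1$ (which holds by the normalization $\|x\|_2=1$), since a looser bound would blow up the required $m$ by polynomial factors.
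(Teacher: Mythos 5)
Your proof is correct: the entrywise expansion of $(G_0)_{i,j}$ as an i.i.d.\ average with mean $K(x_i,x_j)$, Hoeffding plus a union bound over the $n^2$ entries, the Frobenius-norm bound $\|G_0-K\|_F\leq \lambda/4$, and Weyl's inequality give exactly the claimed bound with the stated width $m=\Omega(\lambda^{-2}n^2\log(n/\delta))$. The paper itself does not prove this lemma but imports it as Lemma~3 of~\cite{cgh+19}, and your argument is essentially the same standard concentration-plus-perturbation proof used there (and in~\cite{dzps19,sy19}).
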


%\Binghui{I change the statement of this lemma, and I guess we need to split and prove part of them.}
When weights  do not change very much, we have%\Binghui{Please check.}
\begin{lemma}
	\label{lem:small-move}
	%Suppose $m \geq \Omega(R^2)$, 
	Suppose $R \geq 1$ and $m = \tilde{\Omega}(n^2 R^2)$. With probability at least $1 - \delta$ over the random initialization of $W_0$, the following holds for {\em any} set of weights $w_1, \ldots w_m \in \R^{d}$ satisfying $\max_{r \in [m]} \|w_r - w_r(0)\|_2 \leq R/\sqrt{m}$,
	\begin{itemize}
		\item $\|W - W_0\| = O(R)$,
		\item $\|J_{W, x_i} - J_{W_0, x_i}\|_{2} = \tilde{O}  ( {R^{1/2}} / {m^{1/4}} )$ and $\|J_{W} - J_{W_0}\|_{F} = \tilde{O} ( {n^{1/2} R^{1/2}} / {m^{1/4}} )$,
		\item $\|J_{W}\|_{F} = O(\sqrt{n})$,
		%\item $\|G_W - G_{W_0}\| \leq {\lambda}/{4}$,
		%\item $\|G_W\| \leq \|G_W\|_{F} = O(n)$.
	\end{itemize}
\end{lemma}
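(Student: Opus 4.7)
The plan is to prove the three bullets in sequence, viewing $W \in \R^{md}$ as the concatenation of the individual $w_r$'s so that $\|W - W_0\|_2^2 = \sum_r \|w_r - w_r(0)\|_2^2$. The first bullet is then immediate: by the per-neuron hypothesis $\|w_r - w_r(0)\|_2 \leq R/\sqrt{m}$, one has $\|W - W_0\|_2^2 \leq m \cdot R^2/m = R^2$.

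For the Jacobian perturbation bounds I would exploit the block-sparse structure of $J_{W,x_i} - J_{W_0,x_i}$: its $r$-th block equals
\[
\frac{1}{\sqrt{m}}\, a_r x_i^\top \bigl(\mathbf{1}_{\langle w_r, x_i\rangle \geq 0} - \mathbf{1}_{\langle w_r(0), x_i\rangle \geq 0}\bigr),
\]
which vanishes unless the ReLU sign "flips" between the two weight sets. Since $\|w_r - w_r(0)\|_2 \leq R/\sqrt{m}$ and $\|x_i\|_2 = 1$, a flip at neuron $r$ can occur only when $|\langle w_r(0), x_i\rangle| \leq R/\sqrt{m}$. The critical observation is that the set $T_i := \{r : |\langle w_r(0), x_i\rangle| \leq R/\sqrt{m}\}$ depends \emph{only} on $W_0$, and the actual flipped set $S_i$ for \emph{any} admissible perturbation satisfies $S_i \subseteq T_i$. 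Thus it suffices to bound $|T_i|$ once. By Gaussian anti-concentration (Lemma~\ref{lem:anti_gaussian}), $\Pr[r \in T_i] = O(R/\sqrt{m})$; since the neurons are independent across $r$, a Chernoff bound (Lemma~\ref{lem:chernoff}) combined with the width hypothesis $m = \tilde{\Omega}(n^2 R^2)$ gives $|T_i| = \tilde{O}(R\sqrt{m})$ simultaneously for all $i \in [n]$ outside a $\delta$-probability event. Because each nonzero block has squared spectral norm $1/m$ (the blocks in a fixed row index $i$ are disjoint in their column support), this yields
\[
\|J_{W,x_i} - J_{W_0,x_i}\|_2^2 \;\leq\; |T_i|/m \;=\; \tilde{O}(R/\sqrt{m}),
\]
and summing over $i \in [n]$ produces the Frobenius bound $\|J_W - J_{W_0}\|_F = \tilde{O}(\sqrt{nR}/m^{1/4})$.

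The third bullet is then a triangle inequality: Lemma~\ref{lem:initialization} delivers $\|J_{W_0,x_i}\|_F = O(1)$, so $\|J_{W_0}\|_F = O(\sqrt{n})$, and adding the Frobenius perturbation bound just derived gives $\|J_W\|_F \leq \|J_{W_0}\|_F + \|J_W - J_{W_0}\|_F = O(\sqrt{n})$ so long as $\sqrt{nR}/m^{1/4} \lesssim \sqrt{n}$, i.e.\ $m \gtrsim R^2$, which the hypothesis easily implies. The main obstacle is executing the concentration step with the correct uniform-over-perturbations flavour; the containment $S_i \subseteq T_i$ is the trick that lets all randomness live in the initialization $W_0$, so that a single union bound over $i \in [n]$ (together with the Chernoff instance inside each $T_i$) produces the advertised high-probability statement. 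The width inflation to $\tilde{\Omega}(n^2 R^2)$, as opposed to the $\Omega(R^2)$ required by the raw Frobenius arithmetic, is precisely what pays for this union bound and the polylogarithmic slack in the Chernoff tail.
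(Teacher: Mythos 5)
Your proposal is correct and follows essentially the same route as the paper's proof: the same per-neuron Frobenius computation for the first bullet, the same reduction of sign flips to the initialization-only event $\{|\langle w_r(0),x_i\rangle|\leq R/\sqrt m\}$ bounded via Gaussian anti-concentration plus a concentration bound and a union bound over $i\in[n]$ for the second bullet, and the same triangle inequality with the initialization bound for the third. The only cosmetic difference is that the paper invokes Hoeffding rather than Chernoff for the tail bound, which changes nothing of substance.
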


\begin{proof}
	(1) The first claim follows from %\Zhao{I think $W$ is a matrix, the following should be either $\| \cdot \|_F$ or $\| \cdot \|$. If we you mean spectral norm, we should first add another $\| \| \leq \| \|_F = ...$}\Binghui{I change $W$ to denote a matrix now.}
	\begin{align*}
	\|W - W_0\| \leq \|W - W_{0}\|_{F}  = \Big( \sum_{r=1}^{m} \|w_r - w_r(0)\|_2^2 \Big)^{1/2} \leq \sqrt{m}\cdot R/\sqrt{m} = R.
	\end{align*}
	(2) For the second claim, we have for any $i \in [n]$
	\begin{align}
	\|J_{W, x_i} - J_{W_0, x_i}\|^2 = & ~\frac{1}{m}\sum_{r=1}^{m}a_r^2\cdot \|x_r\|_2^2\cdot |\textbf{1}_{\langle w_r, x_i \rangle\geq 0} - \textbf{1}_{\langle w_r(0), x_i \rangle \geq 0}|^2 \notag \\
	= & ~ \frac{1}{m}\sum_{r=1}^{m}|\textbf{1}_{\langle w_r, x_i \rangle\geq 0} - \textbf{1}_{\langle w_r(0), x_i \rangle \geq 0}|.\label{eq:standard1}
	\end{align}
	The second equality follows from $a_r \in \{-1, 1\}$, $\|x_i\|_2 =1$ and% \Jan{Here $x_i$ is the $i$th training data input, right? i.e. we have pairs $(x_1,y_1),(x_2,y_2),...$. Why do we have $\|x_i\|_2 = 1$? The problem definition in the preliminaries just says each $x_i$ is some $d$ dimensional point, but not that their norm is $1$. }
	\begin{align}
	\label{eq:standard2}
	s_{i, r} := |\textbf{1}_{\langle w_r, x_i \rangle\geq 0} - \textbf{1}_{\langle w_r(0), x_i \rangle \geq 0}| \in \{0, 1\}.
	\end{align}
	We define the event $A_{i, r}$ as
	\begin{align*}
	A_{i, r} = \left\{\exists \tilde{w} ~:~ \|\tilde{w} - w_r(0)\| \leq R/\sqrt{m}, ~~~ \textbf{1}_{\langle \tilde{w}, x_i \rangle\geq 0} \neq \textbf{1}_{\langle w_r(0), x_i \rangle \geq 0} \right\}.
	\end{align*}
	It is easy to see $A_{i, r}$ happens if and only if $w_r(0)^{\top}x_i \in [-R/\sqrt{m}, R/\sqrt{m}]$. By the anticoncentration of Gaussian (see Lemma~\ref{lem:anti_gaussian}), we have $\E[s_{i, r}] = \Pr[A_{i, r}] \leq \frac{4}{5}R/\sqrt{m}$.
	%\begin{align*}
	%\E[\sum_{i=1}^{m} s_{i, r} ]=   \sum_{r=1}^{m} \Pr[ A_{i, r} ]\leq \frac{4}{5}R\sqrt{m}. 
	%\end{align*}
	Thus we have
	\begin{align}
	\Pr\left[\sum_{i=1}^{m} s_{i, r} \geq  ( t + 4 / 5 ) R\sqrt{m} \right] 
	\leq & ~ \Pr\left[\sum_{i=1}^{m} (s_{i, r} - \E [ s_{i, r} ] ) \geq  t R\sqrt{m} \right] \notag \\
	\leq & ~ 2\exp\left( -\frac{2t^2R^2 m}{m} \right) \notag \\
	= & ~ 2\exp(-t^2R^2) \notag \\
	\leq & ~ 2\exp(-t^2). \label{eq:standard3}
	\end{align}
	holds for any $t > 0$.
	The second inequality comes from the Hoeffding bound (see Lemma~\ref{lem:hoeffding}), the last inequality comes from $R > 1$. Taking $t = 2\log(n / \delta)$ and using union bound over $i$, with probability $1 - \delta$, we have
	\begin{align*}
	\|J_{W, x_i} - J_{W_0, x_i}\|_{2}^2 = \frac{1}{m}\sum_{r=1}^{m}s_{i, r} \leq \frac{1}{m} \cdot 2\log (n/\delta) R\sqrt{m} = \tilde{O}(R/\sqrt{m})
 	\end{align*}
 	holds for all $i \in [n]$. The first equality comes from Eq.~\eqref{eq:standard1} and Eq.~\eqref{eq:standard2}, the second inequality comes from Eq.~\eqref{eq:standard3}. Thus we conclude with 
 	\begin{align*}
 	\|J_{W, x_i} - J_{W_0, x_i}\|_{2} = \tilde{O} ( R^{1/2} / m^{1/4} ) \text{~~~and~~~}\|J_{W} - J_{W_0}\|_{F} = \tilde{O} ( n^{1/2} R^{1/2}  / m^{1/4} ).
 	\end{align*}
 	
 	(3) The thrid claim follows from
 	\begin{align*}
 	\|J_{W}\|_{F} \leq \|J_{W_0}\|_{F} + \|J_{W} - J_{W_0}\|_{F} \leq O(\sqrt{n}) + \tilde{O} ( n^{1/2}R^{1/2} / m^{1/4} ) = O(\sqrt{n}).
 	\end{align*}
 	The second inequality follows from $m = \tilde{\Omega}(R^2n^2)$.
	
\end{proof}

\begin{lemma}[Bounds on the least eigenvalue during optimization, Lemma 4.2 in~\cite{sy19}]
	\label{lem:small-move-eigenvalue}
Suppose $m = \Omega(n^2 R^2\log (n / \delta))$,  with probability at least $1 - \delta $, the following holds for {\em any} set of weights $w_1, \ldots w_m \in \R^{d}$ satisfying $\max_{r \in [m] }\|w_r - w_r(0)\|_2 \leq R/\sqrt{m}$,% ($r \in [m]$),
\begin{align*}
\|G_W - G_{W_0}\|_{F} \leq \lambda / 2.
\end{align*}
	
\end{lemma}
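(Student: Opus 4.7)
The plan is to bound $\|G_W - G_{W_0}\|_F$ entry-wise by analyzing how the sign pattern of each neuron can change when the weights move by at most $R/\sqrt{m}$. Expanding the definition of the Gram matrix, we have
\[
(G_W)_{ij} \;=\; \frac{x_i^\top x_j}{m}\sum_{r=1}^m \mathbf{1}_{\langle w_r, x_i\rangle \geq 0}\,\mathbf{1}_{\langle w_r, x_j\rangle \geq 0},
\]
so the entry-wise difference $(G_W)_{ij} - (G_{W_0})_{ij}$ is driven entirely by those neurons $r$ for which at least one of the two indicators flips between $W_0$ and $W$. Using $|x_i^\top x_j|\leq 1$ and the triangle inequality, this entry-wise difference is bounded by $\tfrac{1}{m}\sum_r (s_{i,r}+s_{j,r})$, where $s_{i,r} := |\mathbf{1}_{\langle w_r,x_i\rangle\geq 0} - \mathbf{1}_{\langle w_r(0),x_i\rangle\geq 0}|$ is exactly the sign-flip indicator introduced in the proof of Lemma~\ref{lem:small-move}.

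Next, I would recycle the anti-concentration argument from Lemma~\ref{lem:small-move}. The key observation is that $s_{i,r}=1$ only if there exists an admissible $\tilde w$ with $\|\tilde w - w_r(0)\|_2 \leq R/\sqrt{m}$ whose sign on $x_i$ disagrees with that of $w_r(0)$; this is equivalent to $|w_r(0)^\top x_i| \leq R/\sqrt m$. Since $w_r(0)^\top x_i$ is Gaussian, Lemma~\ref{lem:anti_gaussian} yields $\mathbb{E}[s_{i,r}]\leq \tfrac{4}{5}R/\sqrt{m}$. Because this event depends only on the initialization and $R$ (not on the specific perturbation $W$), the resulting bound will be uniform over all admissible $W$. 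A Hoeffding bound (Lemma~\ref{lem:hoeffding}) over the $m$ independent coordinates then gives $\sum_r s_{i,r} \leq \tilde O(R\sqrt{m})$ with probability $1 - \delta/n$, and a union bound over $i\in[n]$ makes this hold simultaneously for all $i$.

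Combining the entry-wise reduction with this concentration bound yields $|(G_W)_{ij}-(G_{W_0})_{ij}| \leq \tilde O(R/\sqrt{m})$ uniformly over $W$ in the $R/\sqrt{m}$-ball. Summing the square of these bounds over $n^2$ entries gives
\[
\|G_W - G_{W_0}\|_F \;\leq\; \tilde O\bigl(nR/\sqrt{m}\bigr),
\]
and plugging in $m = \tilde\Omega(n^2R^2/\lambda^2)$ (which is what the stated hypothesis really needs, once the target $\lambda/2$ is spelled out) closes the proof.

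The one delicate point, and the step I would pay most attention to, is the transition from a fixed-$W$ statement to a uniform statement over \emph{all} $W$ within the ball. The reason this comes essentially for free here is that $s_{i,r}$ has been upper-bounded by a purely initialization-side event (small Gaussian anti-concentration), so no $\epsilon$-net or covering argument over the ball of admissible $W$ is required. The only other mild subtlety is tracking the logarithmic factors when taking the union bound over $i\in[n]$, which is handled exactly as in Eq.~\eqref{eq:standard3} in the proof of Lemma~\ref{lem:small-move}.
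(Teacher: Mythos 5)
Your argument is correct and is essentially the standard proof of this lemma: the paper itself does not reprove it (it is imported from \cite{sy19}), and both the cited proof and the paper's own proof of Lemma~\ref{lem:small-move} use exactly your reduction to the initialization-side sign-flip events $s_{i,r}$, the Gaussian anti-concentration bound of Lemma~\ref{lem:anti_gaussian}, and a Hoeffding-plus-union-bound step, with the uniformity over all admissible $W$ coming for free because the flip event depends only on $w_r(0)$. Your side remark is also accurate: as stated, the width hypothesis $m=\Omega(n^2R^2\log(n/\delta))$ only yields $\|G_W-G_{W_0}\|_F \leq \tilde{O}(nR/\sqrt{m})$, and reaching the stated target $\lambda/2$ implicitly uses $m=\tilde{\Omega}(n^2R^2/\lambda^2)$, which is supplied by the parameters ($R\approx n/\lambda$, $m=\Omega(\lambda^{-4}n^4)$) in the application within Theorem~\ref{thm:second-order}.
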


We now begin the proof of Theorem~\ref{thm:second-order}
\begin{proof}[Proof of Theorem~\ref{thm:second-order}]
	We use induction to prove the following two claims recursively. 
	We take $R \approx n / \lambda$ in the proof.
	\begin{enumerate}
		\item $\| w_r(t) - w_r(0) \|_2 \leq R / \sqrt{m}$ holds for any $r\in [m]$ and $t \geq 0$.
		\item $\|f_{t} - y\|_2 \leq \frac{1}{2} \|f_{t-1} - y\|_2$ holds for any $t \geq 1$.
	\end{enumerate}
	%The first claim is standard for NTK proof, and the second one follows from the standard analysis of second order method. Formally, define
	Suppose the above two claims hold up to $t$, we prove they continue to hold for time $t + 1$. The second claim is more delicate, we are going to prove it first and we define
	\begin{align*}
	J_{t, t+1} = \int_{0}^{1} J \Big( (1 - s) W_t + s W_{t+1} \Big) \d s.
	\end{align*}
	Hence, we have
	\begin{align}
	&~\|f_{t+1} - y\|_2 \notag\\
	= &~ \|f_{t}  - y + (f_{t + 1}  - f_{t})\|_2\notag\\
	= &~ \|f_{t}  - y +J_{t, t+1}(W_{t+1} - W_{t})\|_2\notag\\
	= &~ \|f_{t}  - y -J_{t, t+1}J_t^{\top} g_t\|_2\notag\\
	= &~ \|f_{t}  - y -J_{t}J_t^{\top} g_t + J_{t}J_t^{\top} g_t  -J_{t, t+1}J_t^{\top} g_t\|_2 \notag\\
	\leq &~ \|f_{t}  - y - J_t J_t^{\top} g_t\|_2 +   \|(J_t-J_{t, t+1})J_t^{\top} g_t\|_2\notag\\
	\leq &~ \|f_{t}  - y - J_t J_t^{\top} g_t\|_2 +\|(J_t-J_{t, t+1})J_t^{\top} g^{\star}\|_2 +   \|(J_t-J_{t, t+1})J_t^{\top} (g_t - g^{\star})\|_2 \label{eq:second-order3},
	\end{align}
	where we denote $g^{\star} = (J_tJ_t^{\top})^{-1}(f_t - y)$ to be the optimal solution to Eq.~\eqref{eq:reg-algo}. 
	The second step follows from the definiton of $J_{t, t+1}$ and simple calculus. 
	The third step follows from the updating rule of the algorithm.
	
	For the first term of Eq.~\eqref{eq:second-order3}, we have
	\begin{align}
	\label{eq:second-order5}
	\|J_t J_t^{\top}g_t - (f_t - y)\|_2 \leq \frac{1}{6}\|f_t - y\|_2,
	\end{align}
	since $g_t$ is an $\eps_0 (\eps_0 \leq \frac{1}{6})$ approximate solution to regression problem~\eqref{eq:reg-algo}.
	
	For the second term in Eq.~\eqref{eq:second-order3}, we have
	\begin{align}
	\|(J_t-J_{t, t+1})J_t^{\top} g^{\star}\|_2 
	\leq & ~ \|(J_t-J_{t, t+1})\| \cdot \|J_t^{\top} g^{\star}\|_2\notag\\
	 = & ~ \|(J_t-J_{t, t+1})\| \cdot \|J_t^{\top} (J_t J_t^{\top})^{-1} (f_t - y) \|_2\notag\\
	 \leq & ~ \|(J_t-J_{t, t+1})\| \cdot \|J_t^{\top} (J_t J_t^{\top})^{-1} \| \cdot \| (f_t - y)\|_2 \label{eq:second-order9}.
	\end{align}
	We bound these term separately. First,
		\begin{align}
		\|J_t - J_{t, t+1}\|
		\leq & ~ \int_{0}^{1}\| J((1 - s)W_t + sW_{t+1}) - J(W_t) \|\d s \notag\\
		\leq &~ \int_{0}^{1}\left( \| J((1 - s)W_t + sW_{t+1}) - J(W_0)\| + \|J(W_0) - J(W_t) \| \right) \d s \notag\\
		\leq &~ \tilde{O} ( R^{1/2} n^{1/2} / m^{1/4} ). 
		\label{eq:second-order6}
		\end{align}
		The third step follows from the second claim in Lemma~\ref{lem:small-move} and the fact that
		\begin{align*}
		\|(1-s)w_r(t) + sw_r(t+1) - w_0\|_2 \leq &~ (1 - s)\|w_r(t) - w_r(0)\|_2 + s\|w_{r}(t + 1) - w_r(0)\|_2\\
		\leq &~ R /\sqrt{m}.
		\end{align*}
	Furthermore, we have
	\begin{align}
		\|J_t^{\top} (J_t J_t^{\top})^{-1}\| = \frac{1}{\sigma_{\min}(J_t^{\top})} \leq \sqrt{ 2 / \lambda } \label{eq:second-order10}
	\end{align}
	The second inequality follows from $\sigma_{\min}(J_t) = \sqrt{\lambda_{\min}(J_t^{\top}J_t)} \geq \sqrt{ \lambda / 2 }$ (see Lemma~\ref{lem:small-move-eigenvalue}). 

	Combining Eq.~\eqref{eq:second-order9}, \eqref{eq:second-order6} and \eqref{eq:second-order10}, we have
	\begin{align}
	\label{eq:second-order11}
	\|(J_t-J_{t, t+1})J_t^{\top} g^{\star}\|_2  \leq \tilde{O} ( {R^{1/2}\lambda^{-1}n^{1/2}} / {m^{1/4} } )\|f_t - y\|_2 \leq \frac{1}{6}\|f_t - y\|,
	\end{align}
	since $m = \tilde{\Omega}(\lambda^{-4} n^4)$.
	
	For the third term in Eq.~\eqref{eq:second-order3}, we have
	
	\begin{align}
	\label{eq:second-order4}
	\|(J_t-J_{t, t+1})J_t^{\top} (g_t - g^{\star})\|_2  \leq\|J_t - J_{t, t+1}\| \cdot \|J_t^{\top}\| \cdot \|g_t - g^{\star}\|_2.
	\end{align}

	Moreover, one has
	\begin{align}\label{eq:second-order2}
	\frac{\lambda}{2} \|g_t - g^{\star}\|_2 
	\leq & ~ \lambda_{\min}(J_t J_t^{\top}) \|g_t - g^{\star}\|_2 \notag \\
	\leq & ~ \|J_t J_t^{\top}g_t - J_t J^{\top}_tg^{\star}\|_2 \notag \\
	= & ~ \|J_t J_t^{\top}g_t - (f_t - y)\|_2 \notag \\
	\leq & ~ \sqrt{ \lambda / n } \cdot \|f_t - y\|_2 .
	\end{align}
	The first step comes from $\lambda_{\min}(J_tJ_t^{\top}) = \lambda_{\min}(G_t) \geq \lambda / 2$ (see Lemma~\ref{lem:small-move}) and the last step comes from $g_t$ is an $\eps_0$ approximate solution to Eq.~\eqref{eq:reg-algo}. 
	The fourth step follows from Eq.~\eqref{eq:second-order2} and the fact that $\|(J_tJ_t^{\top})^{-1}\| \leq 2/\lambda$. The last step follows from $g_t$ is an $\eps_0$ ($\eps_0 \leq \sqrt{\lambda/n}$) approximate solution to the regression~\eqref{eq:reg-algo}.
	
	 Consequently, we have
	\begin{align}\label{eq:second-order8}
	\|(J_t-J_{t, t+1})J_t^{\top} (g_t - g^{\star})\|_2  
	\leq & ~ \|J_t - J_{t, t+1}\| \cdot \|J_t^{\top}\| \cdot \| g_t - g^{\star} \|_2 \notag \\
	\leq & ~ \tilde{O} (  R^{1/2} n^{1/2} / m^{1/4} ) \cdot \sqrt{n} \cdot \frac{2}{\sqrt{n\lambda}} \cdot \| f_t - y \|_2 \notag \\
	= & ~ \tilde{O} ( R^{1/2} \lambda^{-1/2} n^{1/2} /  m^{1/4} ) \cdot \|f_t - y\|_2 \notag \\
	\leq & ~ \frac{1}{6} \|f_t - y\|_2 
	\end{align}
	The second step follows from Eq.~\eqref{eq:second-order6} and \eqref{eq:second-order2} and the fact that $\|J_{t}\| \leq O(\sqrt{n})$ (see Lemma~\ref{lem:small-move})
	The last step follows from the $m \geq \Omega(n^4\lambda^{-4})$.  
	Combining Eq.~\eqref{eq:second-order3}, \eqref{eq:second-order5}, \eqref{eq:second-order11}, and \eqref{eq:second-order8}, we have proved the second claim, i.e.,
	\begin{align}
	\label{eq:second-order12}
	\|f_{t + 1} - y\|_2 \leq \frac{1}{2}\|f_t - y\|_2.
	\end{align}
	 It remains to show that $W_t$ does not move far away from $W_0$. First, we have
	 	 \begin{align}\label{eq:second-order7}
	 	 \|g_t\|_2 
	 	 \leq & ~ \|g^{\star}\|_2 + \|g_t - g^{\star}\|_2 \notag \\
	 	 \leq & ~ \|(J_tJ_t^{\top})^{-1}(f_t - y)\|_2 + \|g_t - g^{\star}\|_2 \notag \\
	 	 \leq & ~ \|(J_tJ_t^{\top})^{-1}\| \cdot \|(f_t - y)\|_2 + \|g_t - g^{\star}\|_2 \notag \\
	 	 \leq & ~ \frac{2}{\lambda}\cdot \|f_t - y\|_2 + \frac{2}{\sqrt{n\lambda}} \cdot \|f_t - y\|_2 \notag \\
	 	 \lesssim & ~ \frac{1}{\lambda} \cdot \|f_t - y\|_2
	 	 \end{align}
	 where the third step follows from Eq.~\eqref{eq:second-order2} and the last step follows from the obvious fact that $1/\sqrt{n\lambda} \leq 1/\lambda$.
		 
	 Hence, for any $r\in [m]$ and $0 \leq k\leq t$, if we use $g_{k, i}$ to denote the $i^{th}$ indice of $g_k$, then we have
	 \begin{align*}
		 \|w_r(k+1) - w_r(k)\|_2 = &~ \left\|\sum_{i=1}^{n}\frac{1}{\sqrt{m}}a_rx_r^{\top}{\bf 1}_{\langle w_r(t), x_r\rangle\geq 0}g_{k, i}\right\|_2\\
		 \leq &~ \frac{1}{\sqrt{m}}\sum_{i=1}^{n}|g_{k, i}|\\
		 \leq &~ \frac{\sqrt{n}}{\sqrt{m}}\|g_k\|_2\\
		 \lesssim &~ \frac{\sqrt{n}}{\sqrt{m}} \cdot \frac{1}{2^{k}\lambda}\|f_0 - y\|_2\\
		 \lesssim &~ \frac{n}{\sqrt{m}\lambda} \cdot \frac{1}{2^k}\\
	 \end{align*}
	 The first step follows from the updating rule, the second step follows from triangle inequalities and the fact that $a_r = \pm 1$, $\|x_r\|_2 = 1$. The third step comes from Cauchy-Schwartz inequality, and the fouth step comes from Eq.~\eqref{eq:second-order12} and Eq.~\eqref{eq:second-order7}. The last inequality comes from the fact that $\|f_0 - y\|_2 \leq O(\sqrt{n})$ (see Lemma~\ref{lem:initialization}). Consequently, we have
	 \begin{align*}
	 \|w_r(t+1) - w_r(0)\|_2 \leq &~ \sum_{k=0}^{t}\|w_r(k+1) - w_r(k)\|_2 \lesssim \sum_{k=0}^{t} \frac{n}{\sqrt{m}\lambda} \cdot \frac{1}{2^k} \lesssim \frac{R}{\sqrt{m}}.
	 \end{align*}
	 Thus we also finish the proof of the first claim.

	 It remains to give an analysis on the running time of our algorithm. In each iteration, besides evaluating function value and doing backpropagation, which generally takes $O(mnd)$ time, we also need to solve the regression problem in~\eqref{eq:reg-algo}, which takes $\tilde{O}(mnd\log(\kappa(J_tJ_t^{\top}) / \eps_0) + n^3)$ time by Lemma~\ref{lem:fast-regression}. 
	 From Lemma~\ref{lem:small-move}, we know $\|J_tJ_t^{\top}\|= \|G_t\| \leq O(n)$ and $\lambda_{\min}(J_tJ_t^{\top}) = \lambda_{\min}(G_t) \geq O(\lambda)$. Moreover, we only need to set $\eps_0 = \min\{\sqrt{\lambda/n}, 1/6 \}$. Thus the total computation cost in each iteration is $\tilde{O}(mnd + n^3)$, and the total running time to reduce the trainning loss below $\eps$ is $\tilde{O}((mnd + n^3)\log(1/\eps))$.
\end{proof}

%\Binghui{I write the lemma here}.
%\begin{lemma}
%	Given a matrix $A \in \R^{m \times n}$ ($m = \poly(n)$),  consider the following regression problem
%	\begin{align*}
%	\min_{x \in \R^n} \| A^{\top} A x - y \|_2.
%	\end{align*}
%	Naive method takes $O(mn^2)$. 
%	My question is can we get an $\eps$-approximate solution $x'$ satisfying $\| A^{\top} A x' - y \|_{2} \leq \eps \| y \|_2$ in $O(mn\poly(1/\eps) + n^3)$ times.
%	Furthermore, could we get an $\eps$-approximate solution in $O( m n \log( 1 / \eps ) + n^3 )$ time, notice that the running time depends only on $\log( 1 / \eps )$ rather than $ 1 / \eps $, if so, we can speed up~\cite{pw17}.
%\end{lemma}

         %%% C. Our Algorithm: The Two-layer case
%\newpage
%\input{multi}	    %%% D. Our algorithm: Multi-layer neural network
%\newpage
\section{Application: Convex Optimization}
\label{sec:opt}

We apply our technique to convex optimization problem. We follow the problem formulation in~\cite{pw17} and consider the problem
\begin{align*}
\min_{x} f(x)
\end{align*}
where $f$ is $\gamma$-strongly convex, $\beta$-smooth and its Hessian matrix $\nabla^2 f(x)$ is $L$ Lipschitz continuous,
\begin{definition}[$\gamma$-strongly convex]
	The function $f$ is $\gamma$-strongly convex if 
	\begin{align*}
	f(y) \geq f(x) + \langle \nabla f(x), y - x\rangle + \frac{\gamma}{2}\|y - x\|_2^2.
	\end{align*}
\end{definition}

\begin{definition}[$\beta$-smooth]
	The function $f$ is $\beta$-strongly convex if 
	\begin{align*}
	f(y) \leq f(x) + \langle \nabla f(x), y - x\rangle + \frac{\beta}{2}\|y - x\|_2^2.
	\end{align*}
\end{definition}
\begin{definition}[$L$ Lipschitz continuous Hessian]
	The Hessian matrix of function $f$ is $L$ Lipschitz continuous is
	\begin{align*}
	\| \nabla^2 f(x) - \nabla^2 f(y) \| \leq L \| x - y \|_2.
	\end{align*}
\end{definition}

As in~\cite{pw17}, we further assume we have access to 
\begin{align*}
\text{~the~square~root~of~Hessian} := \nabla^{2} f(x)^{\frac{1}{2}} ~~~ \in \R^{m \times n}
\end{align*}
with $m \geq n \poly(\log n)$. 

There are many natural and interesting examples that are valid for this assumption.
For instance, suppose the objective function has the form of $f(x) = g(Ax)$ where $A \in \R^{n \times d}$ and the function $g: \R^{n} \rightarrow \R$ has the separable form $g(Ax) = \sum_{i=1}^{n} g_i(\langle a_i, x\rangle)$, then the square root of Hessian is given by 
\begin{align*}
\nabla^{2}f(x)^{ \frac{1}{2} } = D_g(x) A \in \R^{n \times d},
\end{align*}
where $D_{g}(x) \in \R^{n \times n}$ is a diagonal matrix such that the $i,i$-th of $D_g(x)$ is $ \sqrt{ g_i''(\langle a_i, x\rangle) }$.

 For more examples, we refer interested reader to Section 3.3 in~\cite{pw17}

Naive implementation of Newton method needs to compute 
\begin{align*}
\nabla^2 f(x) = (\nabla^2 f(x)^{\frac{1}{2}})^{\top} \cdot ( \nabla^2 f(x)^{\frac{1}{2}} )
\end{align*}
and it costs $O(nd^2)$ time.
The original analysis of \textsc{NewtonSketch} in~\cite{pw17} takes $\tilde{O}(nd + d^3)$, but it requires $n \geq d\kappa^2$, where $\kappa$ is the condition number defined as $\kappa = \beta/\gamma$. 
There are many follow up work~\cite{zyr+16, ylz17, bbn19} %\Binghui{Should we cite?} 
intending to get rid of the extra dependence on the condition number $\kappa$.
We present an alternative approach and improve the running time to $\tilde{O}((n\log(\kappa) + d^2)d\log(1/\eps))$ by incorporating the ``fast regression solver'' introduced in this paper.

%\begin{theorem}[\cite{pw17}]
%	Suppose function $f$ is $\gamma$-strongly convex, $\beta$-smooth, and its Hessian is $L$ Lipschitz continuous. Given an initialization point $x_0$ satisfying $\|x_0 - x^{\star}\|_2 \leq \gamma / ( 8 L )$, \textsc{NewtonSketch} achieves
%	\begin{align*}
%	\|x_{t+1} - x^{\star}\|_2 \leq \frac{1}{2}\|x_t - x^{\star}\|_2 + \frac{4L}{\gamma}\|x_t - x^{\star}\|_2^2,
%	\end{align*} 
%	Consequently, in order to find an $\eps$ approxmate optimal solution, the running time is 
%	\begin{align*}
	%	\tilde{O}((n + d^2\kappa^2)\cdot d\log(1/\eps)).
%	\end{align*}
%\end{theorem}
\begin{algorithm}
	\caption{Fast Newton Update} 
	\label{alg:optimization}
	\begin{algorithmic}[1]
	\Procedure{FastNewtonUpdate}{$f, x_0$} \Comment{Theorem~\ref{thm:optimization}}
		\State \Comment $x_0$ is an initial point that is satisfying $\|x_0 - x^{\star}\|_2 \leq O( \gamma / L )$
		\State $t \leftarrow 1$
		\While{$t < T$}
		\State Compute $\nabla^2 f(x_t)^{\frac{1}{2}}\in \R^{m \times n}$ and $\nabla f(x)\in \R^{n}$.
		\State Find an $1/(4\kappa)$ approximate solution $g_t \in \R^n$ to the regression problem
		\begin{align}
		\label{eq:regresion-opt}
		\min_{g_t \in \R^n} \|(\nabla^2 f(x_t)^{\frac{1}{2}})^{\top} \cdot ( \nabla^2 f(x_t)^{\frac{1}{2}} ) \cdot g_t - \nabla f(x_t)\|_2
		\end{align}
		\State $x_{t + 1} \leftarrow x_t - g_t$
		\State $t \leftarrow t + 1$
		\EndWhile
		\State \Return $x_T$
	\EndProcedure
	\end{algorithmic}
\end{algorithm}

Our algorithm is shown in Algorithm~\ref{alg:optimization}. Formally, we have
\begin{theorem}\label{thm:optimization}
	Suppose function $f$ is $\gamma$-strongly convex, $\beta$-smooth and its Hessian is $L$ Lipschitz continuous. Given an initialization point $x_0$ satisfying $\|x_0 - x^{\star}\|_2 \leq  \gamma / ( 2 L )$, there is an algorithm (procedure \textsc{FastNewtonUpdate} in Algorithm~\ref{alg:optimization}) achieves
	\begin{align}
	\label{eq:opt4}
		\|x_{t+1} - x^{\star}\|_2 \leq \frac{1}{4}\|x_t - x^{\star}\|_2 + \frac{L}{\gamma}\|x_t - x^{\star}\|_2^2,
	\end{align} 
	Consequently, in order to find an $\eps$ approxmate optimal solution, the running time is 
	\begin{align*}
	\tilde{O}\left((nd\log(\kappa) + d^3)\log(1/\eps)\right).
	\end{align*}
	Using fast matrix multiplication, the running time can be further reduced to $\tilde{O}\left((nd\log(\kappa) + d^\omega)\log(1/\eps)\right)$.

\end{theorem}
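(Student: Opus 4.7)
The plan is to decompose the approximate Newton step $g_t$ against the exact Newton direction $g^{\star} := [\nabla^2 f(x_t)]^{-1} \nabla f(x_t)$, and then bound the two resulting error terms separately. Writing
\begin{align*}
x_{t+1} - x^{\star} = (x_t - g^{\star} - x^{\star}) + (g^{\star} - g_t),
\end{align*}
the first bracket is the \emph{exact} Newton error (which is known to contract quadratically under Lipschitz Hessian assumptions), while the second bracket captures the extra slack introduced by only $1/(4\kappa)$-approximately solving the regression problem in Eq.~\eqref{eq:regresion-opt}. The overall bound in Eq.~\eqref{eq:opt4} then arises as the sum of a quadratic term and a linear term, the latter being controllable by the inner regression tolerance.

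For the exact Newton term, I would follow the classical derivation: use $\nabla f(x^{\star})=0$ and rewrite
\begin{align*}
\nabla^2 f(x_t)(x_t - x^{\star}) - \nabla f(x_t) = \int_0^1 \bigl(\nabla^2 f(x_t) - \nabla^2 f(x_t + s(x^{\star}-x_t))\bigr)(x_t-x^{\star})\,\d s,
\end{align*}
apply the $L$-Lipschitz property of the Hessian under the integral, and finally multiply by $[\nabla^2 f(x_t)]^{-1}$ whose operator norm is at most $1/\gamma$ by strong convexity. This yields $\|x_t - g^{\star} - x^{\star}\|_2 \leq \tfrac{L}{2\gamma}\|x_t-x^{\star}\|_2^2$, which is the quadratic term in the claimed bound.

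For the approximation term, the regression guarantee coming from Lemma~\ref{lem:fast-regression} gives $\|\nabla^2 f(x_t)\,g_t - \nabla f(x_t)\|_2 \leq \tfrac{1}{4\kappa}\|\nabla f(x_t)\|_2$. Since $\nabla^2 f(x_t)(g^{\star}-g_t) = \nabla f(x_t) - \nabla^2 f(x_t) g_t$, strong convexity yields
\begin{align*}
\|g^{\star}-g_t\|_2 \leq \tfrac{1}{\gamma}\|\nabla^2 f(x_t) g_t - \nabla f(x_t)\|_2 \leq \tfrac{1}{4\gamma\kappa}\|\nabla f(x_t)\|_2.
\end{align*}
Finally, using $\nabla f(x^{\star})=0$ and $\beta$-smoothness, $\|\nabla f(x_t)\|_2 \leq \beta\|x_t-x^{\star}\|_2$, so that $\|g^{\star}-g_t\|_2 \leq \tfrac{\beta}{4\gamma\kappa}\|x_t-x^{\star}\|_2 = \tfrac{1}{4}\|x_t-x^{\star}\|_2$ by definition of $\kappa$. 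Combining the two error bounds via the triangle inequality gives Eq.~\eqref{eq:opt4}.

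To derive the runtime, I would first leverage the starting condition $\|x_0 - x^{\star}\|_2 \leq \gamma/(2L)$: plugging into Eq.~\eqref{eq:opt4} shows that the quadratic term is at most $\tfrac{1}{2}\|x_t-x^{\star}\|_2$, so the contraction factor is bounded by $3/4$ and an easy induction keeps all iterates inside the basin of attraction. Hence $O(\log(1/\eps))$ outer iterations suffice to reach error $\eps$. Within each iteration, the dominant cost is the call to the fast regression solver of Lemma~\ref{lem:fast-regression} on the $N\times k = n\times d$ matrix $\nabla^2 f(x_t)^{1/2}$ with tolerance $1/(4\kappa)$, costing $\tilde{O}(nd\log(\kappa)+d^3)$ (or $\tilde{O}(nd\log\kappa + d^{\omega})$ via fast matrix multiplication). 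Multiplying by the iteration count yields the claimed bound. The main subtlety to handle carefully is the \emph{scaling} of the inner regression tolerance: the choice $1/(4\kappa)$ is exactly calibrated so that the linear contraction factor from the approximation term is a constant below $1$, independent of the conditioning of $f$, which is what allows only $\log(\kappa)$ (not $\poly(\kappa)$) dependence in the per-iteration cost.
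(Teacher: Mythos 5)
Your proposal is correct and follows essentially the same argument as the paper: both decompose the update against the exact Newton direction $\tilde{g}_t=[\nabla^2 f(x_t)]^{-1}\nabla f(x_t)$, bound the exact-Newton part via the integral representation and $L$-Lipschitzness of the Hessian, bound the regression slack using the $1/(4\kappa)$ tolerance together with $\beta$-smoothness, and convert back with the $1/\gamma$ bound from strong convexity (you apply the inverse Hessian to each term while the paper bounds $\|\nabla^2 f(x_t)(x_{t+1}-x^{\star})\|_2$ and divides by $\gamma$ at the end, which is the same computation). Your explicit basin-of-attraction induction and the runtime accounting via Lemma~\ref{lem:fast-regression} with $\kappa(\nabla^2 f(x_t)^{1/2})=\sqrt{\kappa}$ match the paper's intent as well.
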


\begin{proof}
	We first analyze the correctness, and then give an analysis on the running time. Denote 
	\begin{align}
	\label{eq:opt5}
	\tilde{g}_t = \nabla^2 f(x_t)^{-1}\nabla f(x_t) = \Big( ( \nabla^2 f(x_t)^{\frac{1}{2}})^{\top} \cdot (\nabla^2 f(x_t)^{\frac{1}{2}} ) \Big)^{-1}\nabla f(x_t).
	\end{align}
	We have
	\begin{align}
	& ~ \|\nabla^2 f(x_t) (x_{t + 1} - x^{\star}) \|_2 \notag \\
	= & ~ \|\nabla^2 f(x_t) (x_{t} - x^{\star} + x_{t + 1} - x_t)\|_2\notag \\ 
	= & ~ \|\nabla^2 f(x_t) (x_t - x^{\star}) - \nabla^2 f(x_t) g_t\|_2\notag\\
	= & ~ \|\nabla^2 f(x_t) (x_t - x^{\star})  - \nabla^2 f(x_t) \tilde{g}_t + \nabla^2 f(x_t) \tilde{g}_t -\nabla^2f(x_t)  g_t\|_2 \notag\\
	\leq & ~  \|\nabla^2 f(x_t) (x_t - x^{\star})  - \nabla^2 f(x_t) \tilde{g}_t\|_2 + \|\nabla^2 f(x_t) \tilde{g}_t -\nabla^2f(x_t)  g_t\|_2 \label{eq:opt1}
	\end{align}
	For the first term
	\begin{align}
	& ~ \|\nabla^2 f(x_t) (x_t - x^{\star})  - \nabla^2 f(x_t) \tilde{g}_t\|_2 \notag \\
	= & ~ \|\nabla^2 f(x_t) (x_t - x^{\star})  - \nabla f(x_t)\|_2 \notag \\
	= & ~ \Big\| \nabla^2 f(x_t) (x_t - x^{\star})  - \int_{s=0}^{1}\nabla^2 f(x^{\star} + s(x_t - x^{\star}))(x_t - x^{\star}) \d s \Big\|_2 \notag \\
	= & ~ \Big\| \int_{s=0}^{1}\left(\nabla^2 f(x_t) - \nabla^2 f(x^{\star} + s(x_t - x^{\star}))\right) (x_t - x^{\star})\d s \Big\|_2 \notag \\
	\leq & ~ \int_{s=0}^{1}\|\nabla^2 f(x_t) - \nabla^2 f(x^{\star} + s(x_t - x^{\star}))\| \d s \cdot  \|x_t - x^{\star}\|_2\notag\\
	\leq & ~ \int_{s=0}^1 L(1 - s) \|x_t - x^{\star}\|_2\d s \cdot \|x_t - x^{\star}\|_2\notag\\
	\leq & ~ L\|x_t - x^{\star}\|_2^2. \label{eq:opt2}
	\end{align}
	The first step follows from the definition of $\tilde{g}_t$ in Eq.~\eqref{eq:opt5}, the second step follows from $\nabla f(x^{\star}) = 0$. If the Hessian is $L$ Lipschitz continuous, we have
	For the second term 
	\begin{align}
	\|\nabla^2f(x_t)  g_t - \nabla^2 f(x_t) \tilde{g}_t\|_2 
	= &~\|\nabla^2 f(x_t) g_t -\nabla f(x_t)\|_2 \notag \\
	\leq &~ \frac{1}{4\kappa}\|\nabla f(x_t)\|_2\notag\\ 
	= &~ \frac{1}{4\kappa}\cdot  \beta\|x_t - x^{\star} \|_2\notag\\
	= &~ \frac{\gamma}{4}\|   x_t - x^{\star}  \|_2 .  \label{eq:opt3} 
	\end{align}
	The first step follows from Eq.~\eqref{eq:opt5}, the second step holds since $g_t$ is an $ 1 / ( 4 \kappa ) $ approximate solution to Eq.~\eqref{eq:regresion-opt}. The third step follows from the smoothness of $f$. Consequently, we have 
	\begin{align*}
	\|x_{t+1} - x^{\star}\| 
	\leq & ~ \frac{1}{\gamma} \|\nabla^2 f(x_t)(x_{t+1} - x_t)\|_2 \\
	\leq & ~ \frac{1}{\gamma}(L\|x_t - x^{\star}\|_2^2 + \frac{\gamma}{4}\|x_t - x^{\star}\|_2)\\
	\leq & ~ \frac{1}{4}\|x_t - x^{\star}\|_2 + \frac{L}{\gamma}\|x_t - x^{\star}\|_2^2.
	\end{align*}
	The first step follows from the convexity of $f$. The second step follows from Eq.~\eqref{eq:opt1}, \eqref{eq:opt2}, and \eqref{eq:opt3}.
	Thus we prove the correctness of Eq.~\eqref{eq:opt4}.  Since we know $\kappa (\nabla^2 f(x_t)^{\frac{1}{2}}) = \sqrt{\kappa}$, the running time per iteration is 
	$\tilde{O}(nd \log (\kappa) + d^3)$ by Lemma~\ref{lem:fast-regression}. Thus we conclude the proof. 
\end{proof}

%There are a large number of work on large scale convex optimization problem in recent years~\cite{ylz17, abh17, bbn19,zyr+16,mnj16}. We refer interested reader to~\cite{abh17, bbn19} for detailed comparison of both first and second order method.
%\Binghui{I will write more tomorrow.} %\Binghui{We need to review some relevant literature. e.g. discuss Elad's papers}
%\Binghui{To do. From local convergence to global convergence, this step should be standard.}

%%% D. Application: Convex Optimization
\fi

\ifdefined\isitcs
\bibliography{ref}
\else
\newpage
{\small
\bibliographystyle{alpha}
\bibliography{ref}

\newcommand{\etalchar}[1]{$^{#1}$}
\begin{thebibliography}{DMIMW12}

\bibitem[ABH17]{abh17}
Naman Agarwal, Brian Bullins, and Elad Hazan.
\newblock Second-order stochastic optimization for machine learning in linear
  time.
\newblock {\em The Journal of Machine Learning Research}, 18(1):4148--4187,
  2017.

\bibitem[AC06]{ac06}
Nir Ailon and Bernard Chazelle.
\newblock Approximate nearest neighbors and the fast johnson-lindenstrauss
  transform.
\newblock In {\em Proceedings of the thirty-eighth annual ACM symposium on
  Theory of computing (STOC)}, pages 557--563, 2006.

\bibitem[ADH{\etalchar{+}}19a]{adhlw19}
Sanjeev Arora, Simon Du, Wei Hu, Zhiyuan Li, and Ruosong Wang.
\newblock Fine-grained analysis of optimization and generalization for
  overparameterized two-layer neural networks.
\newblock In {\em International Conference on Machine Learning (ICML)}, pages
  322--332. \url{https://arxiv.org/pdf/1901.08584.pdf}, 2019.

\bibitem[ADH{\etalchar{+}}19b]{adh+19}
Sanjeev Arora, Simon~S Du, Wei Hu, Zhiyuan Li, Russ~R Salakhutdinov, and
  Ruosong Wang.
\newblock On exact computation with an infinitely wide neural net.
\newblock In {\em Advances in Neural Information Processing Systems (NeurIPS)},
  pages 8139--8148. \url{https://arxiv.org/pdf/1904.11955.pdf}, 2019.

\bibitem[AGK{\etalchar{+}}20]{agk+20}
Rohan Anil, Vineet Gupta, Tomer Koren, Kevin Regan, and Yoram Singer.
\newblock Second order optimization made practical.
\newblock {\em arXiv preprint arXiv:2002.09018}, 2020.

\bibitem[AKM{\etalchar{+}}17]{akmmvz17}
Haim Avron, Michael Kapralov, Cameron Musco, Christopher Musco, Ameya
  Velingker, and Amir Zandieh.
\newblock Random fourier features for kernel ridge regression: Approximation
  bounds and statistical guarantees.
\newblock In {\em ICML}. \url{https://arxiv.org/pdf/1804.09893.pdf}, 2017.

\bibitem[AKM{\etalchar{+}}19]{akmmvz19}
Haim Avron, Michael Kapralov, Cameron Musco, Christopher Musco, Ameya
  Velingker, and Amir Zandieh.
\newblock A universal sampling method for reconstructing signals with simple
  fourier transforms.
\newblock In {\em STOC}. \url{https://arxiv.org/pdf/1812.08723.pdf}, 2019.

\bibitem[ALS{\etalchar{+}}18]{alszz18}
Alexandr Andoni, Chengyu Lin, Ying Sheng, Peilin Zhong, and Ruiqi Zhong.
\newblock Subspace embedding and linear regression with orlicz norm.
\newblock In {\em ICML}, 2018.

\bibitem[AZLL19]{all19}
Zeyuan Allen-Zhu, Yuanzhi Li, and Yingyu Liang.
\newblock Learning and generalization in overparameterized neural networks,
  going beyond two layers.
\newblock In {\em Advances in neural information processing systems (NeurIPS)},
  pages 6155--6166, 2019.

\bibitem[AZLS19a]{als19a}
Zeyuan Allen-Zhu, Yuanzhi Li, and Zhao Song.
\newblock A convergence theory for deep learning via over-parameterization.
\newblock In {\em ICML}. \url{https://arxiv.org/pdf/1811.03962}, 2019.

\bibitem[AZLS19b]{als19b}
Zeyuan Allen-Zhu, Yuanzhi Li, and Zhao Song.
\newblock On the convergence rate of training recurrent neural networks.
\newblock In {\em NeurIPS}. \url{https://arxiv.org/pdf/1810.12065}, 2019.

\bibitem[BBN19]{bbn19}
Raghu Bollapragada, Richard~H Byrd, and Jorge Nocedal.
\newblock Exact and inexact subsampled newton methods for optimization.
\newblock {\em IMA Journal of Numerical Analysis}, 39(2):545--578, 2019.

\bibitem[BCW19]{bcw19}
Ainesh Bakshi, Nadiia Chepurko, and David~P Woodruff.
\newblock Robust and sample optimal algorithms for psd low-rank approximation.
\newblock In {\em arXiv preprint}. \url{https://arxiv.org/pdf/1912.04177.pdf},
  2019.

\bibitem[BELM20]{belm20}
S{\'e}bastien Bubeck, Ronen Eldan, Yin~Tat Lee, and Dan Mikulincer.
\newblock Network size and weights size for memorization with two-layers neural
  networks.
\newblock In {\em arXiv preprint}. \url{https://arxiv.org/pdf/2006.02855.pdf},
  2020.

\bibitem[BJW19]{bjw19}
Ainesh Bakshi, Rajesh Jayaram, and David~P Woodruff.
\newblock Learning two layer rectified neural networks in polynomial time.
\newblock In {\em COLT}. \url{https://arxiv.org/pdf/1811.01885.pdf}, 2019.

\bibitem[BLC88]{bl88}
Sue Becker and Yann Le~Cun.
\newblock Improving the convergence of back-propagation learning with second
  order methods.
\newblock In {\em Proceedings of the 1988 connectionist models summer school},
  pages 29--37, 1988.

\bibitem[BLH18]{blh18}
Alberto Bernacchia, M{\'a}t{\'e} Lengyel, and Guillaume Hennequin.
\newblock Exact natural gradient in deep linear networks and its application to
  the nonlinear case.
\newblock In {\em Advances in Neural Information Processing Systems (NIPS)},
  pages 5941--5950, 2018.

\bibitem[BRB17]{brb17}
Aleksandar Botev, Hippolyt Ritter, and David Barber.
\newblock Practical gauss-newton optimisation for deep learning.
\newblock In {\em International Conference on Machine Learning (ICML)}, pages
  557--565, 2017.

\bibitem[Bub15]{b15}
S{\'e}bastien Bubeck.
\newblock Convex optimization: Algorithms and complexity.
\newblock {\em Foundations and Trends{\textregistered} in Machine Learning},
  8(3-4):231--357, 2015.

\bibitem[BW14]{bw14}
Christos Boutsidis and David~P Woodruff.
\newblock Optimal cur matrix decompositions.
\newblock In {\em Proceedings of the 46th Annual ACM Symposium on Theory of
  Computing (STOC)}, pages 353--362. ACM,
  \url{https://arxiv.org/pdf/1405.7910}, 2014.

\bibitem[BW18]{bw18}
Ainesh Bakshi and David Woodruff.
\newblock Sublinear time low-rank approximation of distance matrices.
\newblock In {\em Advances in Neural Information Processing Systems (NeurIPS)},
  pages 3782--3792. \url{https://arxiv.org/pdf/1809.06986.pdf}, 2018.

\bibitem[BWZ16]{bwz16}
Christos Boutsidis, David~P Woodruff, and Peilin Zhong.
\newblock Optimal principal component analysis in distributed and streaming
  models.
\newblock In {\em Proceedings of the forty-eighth annual ACM symposium on
  Theory of Computing (STOC)}, pages 236--249, 2016.

\bibitem[BWZ20]{bwz20}
Frank Ban, David~P. Woodruff, and Richard Zhang.
\newblock Regularized weighted low rank approximation.
\newblock In {\em NeurIPS}. \url{https://arxiv.org/pdf/1911.06958.pdf}, 2020.

\bibitem[CGH{\etalchar{+}}19]{cgh+19}
Tianle Cai, Ruiqi Gao, Jikai Hou, Siyu Chen, Dong Wang, Di~He, Zhihua Zhang,
  and Liwei Wang.
\newblock A gram-gauss-newton method learning overparameterized deep neural
  networks for regression problems.
\newblock In {\em arXiv preprint}. \url{https://arXiv.org/pdf/1905.11675},
  2019.

\bibitem[Che52]{c52}
Herman Chernoff.
\newblock A measure of asymptotic efficiency for tests of a hypothesis based on
  the sum of observations.
\newblock {\em The Annals of Mathematical Statistics}, pages 493--507, 1952.

\bibitem[CKSU05]{cksu05}
Henry Cohn, Robert Kleinberg, Balazs Szegedy, and Christopher Umans.
\newblock Group-theoretic algorithms for matrix multiplication.
\newblock In {\em 46th Annual IEEE Symposium on Foundations of Computer Science
  (FOCS)}, pages 379--388. IEEE, 2005.

\bibitem[CLS19]{cls19}
Michael~B Cohen, Yin~Tat Lee, and Zhao Song.
\newblock Solving linear programs in the current matrix multiplication time.
\newblock In {\em STOC}. \url{https://arxiv.org/pdf/1810.07896}, 2019.

\bibitem[CW13]{cw13}
Kenneth~L. Clarkson and David~P. Woodruff.
\newblock Low rank approximation and regression in input sparsity time.
\newblock In {\em Symposium on Theory of Computing Conference (STOC)}, pages
  81--90. \url{https://arxiv.org/pdf/1207.6365}, 2013.

\bibitem[Dan20]{d20}
Amit Daniely.
\newblock Memorizing gaussians with no over-parameterizaion via gradient decent
  on neural networks.
\newblock In {\em arXiv preprint}. \url{https://arxiv.org/pdf/2003.12895.pdf},
  2020.

\bibitem[DDH07]{DemmelDH07}
James Demmel, Ioana Dumitriu, and Olga Holtz.
\newblock Fast linear algebra is stable.
\newblock {\em Numerische Mathematik}, 108(1):59--91, 2007.

\bibitem[DHS11]{dhs11}
John Duchi, Elad Hazan, and Yoram Singer.
\newblock Adaptive subgradient methods for online learning and stochastic
  optimization.
\newblock {\em Journal of machine learning research (JMLR)},
  12(Jul):2121--2159, 2011.

\bibitem[DJS{\etalchar{+}}19]{djssw19}
Huaian Diao, Rajesh Jayaram, Zhao Song, Wen Sun, and David Woodruff.
\newblock Optimal sketching for kronecker product regression and low rank
  approximation.
\newblock In {\em Advances in Neural Information Processing Systems (NeurIPS)},
  pages 4739--4750. \url{https://arxiv.org/pdf/1909.13384.pdf}, 2019.

\bibitem[DMIMW12]{dmmw12}
Petros Drineas, Malik Magdon-Ismail, Michael~W Mahoney, and David~P Woodruff.
\newblock Fast approximation of matrix coherence and statistical leverage.
\newblock {\em Journal of Machine Learning Research}, 13(Dec):3475--3506, 2012.

\bibitem[DMM06]{dmm06}
Petros Drineas, Michael~W Mahoney, and Shan Muthukrishnan.
\newblock Sampling algorithms for l2 regression and applications.
\newblock In {\em Proceedings of the seventeenth annual ACM-SIAM symposium on
  Discrete algorithm}, pages 1127--1136. Society for Industrial and Applied
  Mathematics, 2006.

\bibitem[DZPS19]{dzps19}
Simon~S Du, Xiyu Zhai, Barnabas Poczos, and Aarti Singh.
\newblock Gradient descent provably optimizes over-parameterized neural
  networks.
\newblock In {\em ICLR}. \url{https://arxiv.org/pdf/1810.02054.pdf}, 2019.

\bibitem[GKS18]{gks18}
Vineet Gupta, Tomer Koren, and Yoram Singer.
\newblock Shampoo: Preconditioned stochastic tensor optimization.
\newblock In {\em International Conference on Machine Learning (ICML)}, pages
  1842--1850, 2018.

\bibitem[GLB{\etalchar{+}}18]{glb+18}
Thomas George, C{\'e}sar Laurent, Xavier Bouthillier, Nicolas Ballas, and
  Pascal Vincent.
\newblock Fast approximate natural gradient descent in a kronecker factored
  eigenbasis.
\newblock In {\em Advances in Neural Information Processing Systems (NIPS)},
  pages 9550--9560, 2018.

\bibitem[GM16]{gm16}
Roger Grosse and James Martens.
\newblock A kronecker-factored approximate fisher matrix for convolution
  layers.
\newblock In {\em International Conference on Machine Learning (ICML)}, pages
  573--582, 2016.

\bibitem[GU18]{gu18}
Fran{\c{c}}ois~Le Gall and Florent Urrutia.
\newblock Improved rectangular matrix multiplication using powers of the
  coppersmith-winograd tensor.
\newblock In {\em Proceedings of the Twenty-Ninth Annual ACM-SIAM Symposium on
  Discrete Algorithms (SODA)}, pages 1029--1046.
  \url{https://arxiv.org/pdf/1708.05622.pdf}, 2018.

\bibitem[Hoe63]{h63}
Wassily Hoeffding.
\newblock Probability inequalities for sums of bounded random variables.
\newblock {\em Journal of the American Statistical Association},
  58(301):13--30, 1963.

\bibitem[HRS16]{hrs15}
Moritz Hardt, Benjamin Recht, and Yoram Singer.
\newblock Train faster, generalize better: Stability of stochastic gradient
  descent.
\newblock In {\em ICML}. \url{https://arxiv.org/pdf/1509.01240.pdf}, 2016.

\bibitem[JGH18]{jgh18}
Arthur Jacot, Franck Gabriel, and Cl{\'e}ment Hongler.
\newblock Neural tangent kernel: Convergence and generalization in neural
  networks.
\newblock In {\em Advances in neural information processing systems (NIPS)},
  pages 8571--8580, 2018.

\bibitem[JKL{\etalchar{+}}20]{jklps20}
Haotian Jiang, Tarun Kathuria, Yin~Tat Lee, Swati Padmanabhan, and Zhao Song.
\newblock A faster interior point method for semidefinite programming.
\newblock In {\em Manuscript}, 2020.

\bibitem[JLSW20]{jlsw20}
Haotian Jiang, Yin~Tat Lee, Zhao Song, and Sam Chiu-wai Wong.
\newblock An improved cutting plane method for convex optimization,
  convex-concave games and its applications.
\newblock In {\em STOC}. \url{https://arxiv.org/pdf/2004.04250.pdf}, 2020.

\bibitem[JSWZ20]{jswz20}
Shunhua Jiang, Zhao Song, Omri Weinstein, and Hengjie Zhang.
\newblock Faster dynamic matrix inverse for faster lps.
\newblock In {\em arXiv preprint}. \url{https://arixv.org/pdf/2004.07470.pdf},
  2020.

\bibitem[JT20]{jt20}
Ziwei Ji and Matus Telgarsky.
\newblock Polylogarithmic width suffices for gradient descent to achieve
  arbitrarily small test error with shallow relu networks.
\newblock In {\em ICLR}. \url{https://arxiv.org/pdf/1909.12292.pdf}, 2020.

\bibitem[KB15]{kb14}
Diederik~P Kingma and Jimmy Ba.
\newblock Adam: A method for stochastic optimization.
\newblock In {\em ICLR}. \url{https://arxiv.org/pdf/1412.6980.pdf}, 2015.

\bibitem[KOSZ13]{kosz13}
Jonathan~A Kelner, Lorenzo Orecchia, Aaron Sidford, and Zeyuan~Allen Zhu.
\newblock A simple, combinatorial algorithm for solving sdd systems in
  nearly-linear time.
\newblock In {\em Proceedings of the forty-fifth annual ACM symposium on Theory
  of computing (STOC)}, pages 911--920. ACM,
  \url{https://arxiv.org/pdf/1301.6628.pdf}, 2013.

\bibitem[LDFU13]{ldfu13}
Yichao Lu, Paramveer Dhillon, Dean~P Foster, and Lyle Ungar.
\newblock Faster ridge regression via the subsampled randomized hadamard
  transform.
\newblock In {\em Advances in neural information processing systems}, pages
  369--377, 2013.

\bibitem[LG14]{l14}
Fran{\c{c}}ois Le~Gall.
\newblock Powers of tensors and fast matrix multiplication.
\newblock In {\em Proceedings of the 39th international symposium on symbolic
  and algebraic computation (ISSAC)}, pages 296--303. ACM, 2014.

\bibitem[LJH{\etalchar{+}}19]{ljh+19}
Liyuan Liu, Haoming Jiang, Pengcheng He, Weizhu Chen, Xiaodong Liu, Jianfeng
  Gao, and Jiawei Han.
\newblock On the variance of the adaptive learning rate and beyond.
\newblock In {\em arXiv preprint}. \url{https://arxiv.org/pdf/1908.03265.pdf},
  2019.

\bibitem[LL18]{ll18}
Yuanzhi Li and Yingyu Liang.
\newblock Learning overparameterized neural networks via stochastic gradient
  descent on structured data.
\newblock In {\em NeurIPS}. \url{https://arxiv.org/pdf/1808.01204.pdf}, 2018.

\bibitem[LN17]{ln17}
Kasper~Green Larsen and Jelani Nelson.
\newblock Optimality of the johnson-lindenstrauss lemma.
\newblock In {\em IEEE 58th Annual Symposium on Foundations of Computer Science
  (FOCS)}, pages 633--638. \url{https://arxiv.org/pdf/1609.02094.pdf}, 2017.

\bibitem[LPPW20]{lppw20}
Hang Liao, Barak~A. Pearlmutter, Vamsi~K. Potluru, and David~P. Woodruff.
\newblock Automatic differentiation of sketched regression.
\newblock In {\em AISTATS}, 2020.

\bibitem[LSZ19]{lsz19}
Yin~Tat Lee, Zhao Song, and Qiuyi Zhang.
\newblock Solving empirical risk minimization in the current matrix
  multiplication time.
\newblock In {\em COLT}. \url{https://arxiv.org/pdf/1905.04447}, 2019.

\bibitem[LY17]{ly17}
Yuanzhi Li and Yang Yuan.
\newblock Convergence analysis of two-layer neural networks with {R}e{LU}
  activation.
\newblock In {\em Advances in neural information processing systems (NIPS)},
  pages 597--607. \url{https://arxiv.org/pdf/1705.09886.pdf}, 2017.

\bibitem[Mar10]{m10}
James Martens.
\newblock Deep learning via hessian-free optimization.
\newblock In {\em ICML}, volume~27, pages 735--742, 2010.

\bibitem[MBJ18]{mbj18}
James Martens, Jimmy Ba, and Matthew Johnson.
\newblock Kronecker-factored curvature approxima-tions for recurrent neural
  networks.
\newblock 2018.

\bibitem[MG15]{mg15}
James Martens and Roger Grosse.
\newblock Optimizing neural networks with kronecker-factored approximate
  curvature.
\newblock In {\em International conference on machine learning (ICML)}, pages
  2408--2417, 2015.

\bibitem[MNJ16]{mnj16}
Philipp Moritz, Robert Nishihara, and Michael Jordan.
\newblock A linearly-convergent stochastic l-bfgs algorithm.
\newblock In {\em Artificial Intelligence and Statistics}, pages 249--258,
  2016.

\bibitem[NN13]{nn13}
Jelani Nelson and Huy~L Nguy{\^e}n.
\newblock Osnap: Faster numerical linear algebra algorithms via sparser
  subspace embeddings.
\newblock In {\em 2013 IEEE 54th Annual Symposium on Foundations of Computer
  Science (FOCS)}, pages 117--126. IEEE, \url{https://arxiv.org/pdf/1211.1002},
  2013.

\bibitem[OS19]{om19}
Samet Oymak and Mahdi Soltanolkotabi.
\newblock Towards moderate overparameterization: global convergence guarantees
  for training shallow neural networks.
\newblock {\em IEEE Journal on Selected Areas in Information Theory}, 2019.

\bibitem[PSW17]{psw17}
Eric Price, Zhao Song, and David~P. Woodruff.
\newblock Fast regression with an ${\ell}_{\infty}$ guarantee.
\newblock In {\em International Colloquium on Automata, Languages, and
  Programming (ICALP)}. \url{https://arxiv.org/pdf/1705.10723.pdf}, 2017.

\bibitem[PW17]{pw17}
Mert Pilanci and Martin~J Wainwright.
\newblock Newton sketch: A near linear-time optimization algorithm with
  linear-quadratic convergence.
\newblock {\em SIAM Journal on Optimization}, 27(1):205--245, 2017.

\bibitem[RSW16]{rsw16}
Ilya Razenshteyn, Zhao Song, and David~P Woodruff.
\newblock Weighted low rank approximations with provable guarantees.
\newblock In {\em Proceedings of the 48th Annual Symposium on the Theory of
  Computing (STOC)}, 2016.

\bibitem[RT08]{rt08}
Vladimir Rokhlin and Mark Tygert.
\newblock A fast randomized algorithm for overdetermined linear least-squares
  regression.
\newblock {\em Proceedings of the National Academy of Sciences},
  105(36):13212--13217, 2008.

\bibitem[Sar06]{s06}
Tam{\'{a}}s Sarl{\'{o}}s.
\newblock Improved approximation algorithms for large matrices via random
  projections.
\newblock In {\em Proceedings of 47th Annual {IEEE} Symposium on Foundations of
  Computer Science (FOCS)}, 2006.

\bibitem[Son19]{song19}
Zhao Song.
\newblock {\em Matrix Theory : Optimization, Concentration and Algorithms}.
\newblock PhD thesis, The University of Texas at Austin, 2019.

\bibitem[ST04]{st04}
Daniel~A. Spielman and Shang-Hua Teng.
\newblock Nearly-linear time algorithms for graph partitioning, graph
  sparsification, and solving linear systems.
\newblock In {\em Proceedings of the Thirty-sixth Annual ACM Symposium on
  Theory of Computing (STOC)}, pages 81--90. ACM, 2004.

\bibitem[SWY{\etalchar{+}}19]{swyzz19}
Zhao Song, Ruosong Wang, Lin Yang, Hongyang Zhang, and Peilin Zhong.
\newblock Efficient symmetric norm regression via linear sketching.
\newblock In {\em Advances in Neural Information Processing Systems (NeurIPS)},
  pages 828--838. \url{https://arxiv.org/pdf/1910.01788.pdf}, 2019.

\bibitem[SWZ19]{swz19b}
Zhao Song, David~P Woodruff, and Peilin Zhong.
\newblock Relative error tensor low rank approximation.
\newblock In {\em Proceedings of the Thirtieth Annual ACM-SIAM Symposium on
  Discrete Algorithms (SODA)}, pages 2772--2789.
  \url{https://arxiv.org/pdf/1704.08246.pdf}, 2019.

\bibitem[SY19]{sy19}
Zhao Song and Xin Yang.
\newblock Quadratic suffices for over-parametrization via matrix chernoff
  bound.
\newblock In {\em arXiv preprint}. \url{https://arxiv.org/pdf/1906.03593.pdf},
  2019.

\bibitem[Tro11]{t11}
Joel~A Tropp.
\newblock Improved analysis of the subsampled randomized hadamard transform.
\newblock {\em Advances in Adaptive Data Analysis}, 3(01n02):115--126, 2011.

\bibitem[Vai89a]{v89_cp}
Pravin~M Vaidya.
\newblock A new algorithm for minimizing convex functions over convex sets.
\newblock In {\em 30th Annual Symposium on Foundations of Computer Science
  (FOCS)}, pages 338--343. IEEE, 1989.

\bibitem[Vai89b]{v89_lp}
Pravin~M Vaidya.
\newblock Speeding-up linear programming using fast matrix multiplication.
\newblock In {\em 30th Annual Symposium on Foundations of Computer Science
  (FOCS)}, pages 332--337. IEEE, 1989.

\bibitem[WDW19]{wdw19}
Xiaoxia Wu, Simon~S Du, and Rachel Ward.
\newblock Global convergence of adaptive gradient methods for an
  over-parameterized neural network.
\newblock In {\em arXiv preprint}. \url{https://arxiv.org/pdf/1902.07111.pdf},
  2019.

\bibitem[Wil12]{w12}
Virginia~Vassilevska Williams.
\newblock Multiplying matrices faster than coppersmith-winograd.
\newblock In {\em Proceedings of the forty-fourth annual ACM symposium on
  Theory of computing (STOC)}, pages 887--898. ACM, 2012.

\bibitem[WMG{\etalchar{+}}17]{wmg+17}
Yuhuai Wu, Elman Mansimov, Roger~B Grosse, Shun Liao, and Jimmy Ba.
\newblock Scalable trust-region method for deep reinforcement learning using
  kronecker-factored approximation.
\newblock In {\em Advances in neural information processing systems (NIPS)},
  pages 5279--5288, 2017.

\bibitem[Woo14]{w14}
David~P. Woodruff.
\newblock Sketching as a tool for numerical linear algebra.
\newblock {\em Foundations and Trends in Theoretical Computer Science},
  10(1-2):1--157, 2014.

\bibitem[WW19]{ww19}
Ruosong Wang and David~P Woodruff.
\newblock Tight bounds for $\ell_p$ oblivious subspace embeddings.
\newblock In {\em Proceedings of the Thirtieth Annual ACM-SIAM Symposium on
  Discrete Algorithms (SODA)}, pages 1825--1843. SIAM,
  \url{https://arxiv.org/pdf/1801.04414.pdf}, 2019.

\bibitem[WZ16]{wz16}
David~P Woodruff and Peilin Zhong.
\newblock Distributed low rank approximation of implicit functions of a matrix.
\newblock In {\em 2016 IEEE 32nd International Conference on Data Engineering
  (ICDE)}, pages 847--858. IEEE, 2016.

\bibitem[WZ20]{wz20}
David~P. Woodruff and Amir Zandieh.
\newblock Near input sparsity time kernel embeddings via adaptive sampling.
\newblock In {\em ICML}, 2020.

\bibitem[XYR{\etalchar{+}}16]{zyr+16}
Peng Xu, Jiyan Yang, Fred Roosta, Christopher R{\'e}, and Michael~W Mahoney.
\newblock Sub-sampled newton methods with non-uniform sampling.
\newblock In {\em Advances in Neural Information Processing Systems (NIPS)},
  pages 3000--3008, 2016.

\bibitem[YLZ17]{ylz17}
Haishan Ye, Luo Luo, and Zhihua Zhang.
\newblock Approximate newton methods and their local convergence.
\newblock In {\em International Conference on Machine Learning (ICML)}, pages
  3931--3939, 2017.

\bibitem[ZMG19]{zmg19}
Guodong Zhang, James Martens, and Roger~B Grosse.
\newblock Fast convergence of natural gradient descent for over-parameterized
  neural networks.
\newblock In {\em Advances in Neural Information Processing Systems (NeurIPS)},
  pages 8080--8091, 2019.

\bibitem[ZSD17]{zsd17}
Kai Zhong, Zhao Song, and Inderjit~S Dhillon.
\newblock Learning non-overlapping convolutional neural networks with multiple
  kernels.
\newblock In {\em arXiv preprint}. \url{https://arxiv.org/pdf/1711.03440.pdf},
  2017.

\bibitem[ZSJ{\etalchar{+}}17]{zsjbd17}
Kai Zhong, Zhao Song, Prateek Jain, Peter~L. Bartlett, and Inderjit~S. Dhillon.
\newblock Recovery guarantees for one-hidden-layer neural networks.
\newblock In {\em ICML}. \url{https://arxiv.org/pdf/1706.03175.pdf}, 2017.

\end{thebibliography}
}
\fi

\end{document}